\documentclass{article}

\usepackage[preprint]{neurips_2026}

\usepackage[utf8]{inputenc} 
\usepackage[T1]{fontenc}    
\usepackage{hyperref}       
\usepackage{url}            
\usepackage{booktabs}       
\usepackage{amsfonts}       
\usepackage{nicefrac}       
\usepackage{microtype}      
\usepackage{xcolor}         

\usepackage{amsmath}
\usepackage{amssymb}
\usepackage{mathtools}
\usepackage{amsthm}
\usepackage{commath}

\usepackage{enumitem}
\usepackage{siunitx}
\usepackage{colortbl}

\mathtoolsset{showonlyrefs}

\theoremstyle{plain}

\newtheorem{proposition}{Proposition}
\newtheorem{lemma}{Lemma}
\newtheorem{corollary}{Corollary}
\newtheorem{definition}{Definition}
\newtheorem{assumption}{Assumption}

\usepackage{pgfplots}
\pgfplotsset{compat=1.18}
\usepgfplotslibrary{groupplots}
\usetikzlibrary{shapes.geometric}
\usepgfplotslibrary{fillbetween}

\DeclarePairedDelimiterX{\infdivx}[2]{[}{]}{#1\;\delimsize\|\;#2}
\newcommand{\kld}{\mathbb{D}_{\mathrm{KL}}\infdivx*}
\newcommand{\alphad}{\mathbb{D}_{\alpha}\infdivx*}

\AtBeginDocument{
    \setlength{\abovedisplayskip}{4pt plus 1pt minus 2pt}
    \setlength{\belowdisplayskip}{4pt plus 1pt minus 2pt}
    \setlength{\abovedisplayshortskip}{1pt plus 2pt}
    \setlength{\belowdisplayshortskip}{1pt plus 1pt minus 1pt}
}

\title{Maximin Robust Bayesian Experimental Design}

\author{%
  Hany Abdulsamad \\
  UvA--Bosch Delta Lab \\
  University of Amsterdam \\
  \texttt{h.abdulsamad@uva.nl} \\
  \And
  Sahel Iqbal \\
  Department of Statistics \\
  University of Oxford \\
  \texttt{sahel.iqbal@stats.ox.ac.uk} \\
  \And
  Christian A. Naesseth \\
  UvA--Bosch Delta Lab \\
  University of Amsterdam \\
  \texttt{c.a.naesseth@uva.nl} \\
  \And
  Takuo Matsubara \\
  School of Mathematics \\
  University of Edinburgh \\
  \texttt{takuo.matsubara@ed.ac.uk} \\
  \And
  Adrien Corenflos \\
  Department of Statistics \\
  University of Warwick \\
  \texttt{adrien.corenflos@warwick.ac.uk} \\
}

\begin{document}

\maketitle

\begin{abstract}
We address the brittleness of Bayesian experimental design under model misspecification by formulating the problem as a max--min game between the experimenter and an adversarial nature subject to information-theoretic constraints. We demonstrate that this approach yields a robust objective governed by Sibson's $\alpha$-mutual information~(MI), which identifies the $\alpha$-tilted posterior as the robust belief update and establishes the R\'enyi divergence as the appropriate measure of conditional information gain. To mitigate the bias and variance of nested Monte Carlo estimators needed to estimate Sibson's $\alpha$-MI, we adopt a PAC-Bayes framework to search over stochastic design policies, yielding rigorous high-probability lower bounds on the robust expected information gain that explicitly control finite-sample error.
\end{abstract}

\section{Introduction}

Experimental design is a foundational component of scientific and engineering inquiry, enabling efficient knowledge acquisition across a wide range of domains including physics, epidemiology, neuroscience, biology, and robotics~\citep{melendez2021designing,cook2008optimal,shababo2013bayesian,liepe2013maximizing,schultheis2020receding}. In such settings, each experiment can involve substantial costs related to time, labor, materials, and financial investments, making it imperative to extract as much information as possible from every trial. By explicitly modeling uncertainty over latent parameters and quantifying the informativeness of potential measurements, Bayesian experimental design~\citep{lindley1956measure, chaloner1995bayesian} offers a principled framework for selecting experiments that are maximally informative. To do so, it ties decision-making and Bayesian inference into a single objective aiming to find a design minimizing posterior distribution \emph{uncertainty}, measured in terms of entropy and averaged over the marginal distribution of prospective measurements. This information-theoretic approach enables efficient allocation of experimental effort, reducing both cost and risk while accelerating scientific discovery, particularly where simulators enable large-scale evaluation of designs prior to deployment.

However, accurate simulators remain the exception rather than the rule, even as modeling capabilities continue to improve. Mechanical, chemical, and biological systems often exhibit complex and only partially understood interactions that cannot be faithfully captured by idealized assumptions~\citep{rondelez2012competition, van2024iterative}. Designs optimized under misspecified simulators may therefore lead to ineffective experiments or, in extreme cases, counterproductive outcomes. Hence, when the model is misspecified, robustness is paramount both in the posterior update and in the data generation governed by the marginal distribution. The decision-making literature has a rich history of addressing misspecification in the data-generating process~\citep{whittle1990riskb, petersen2002minimax}, while the Bayesian inference literature has developed extensive methodology for better posterior updates under likelihood misspecification~\citep{grunwald2012safe, bissiri2016general, ghosh2016robust, holmes2017assigning, grunwald2017inconsistency}. The central challenge in robust Bayesian experimental design is to unify these perspectives in a single framework that treats robustness in data generation and inference jointly, capturing their interaction rather than addressing them in isolation.

This problem has attracted increasing attention recently. One popular approach to robustify Bayesian inference is to replace the log-likelihood with a generalized utility yielding what are known as Gibbs posteriors~\citep{zhang2006from, bissiri2016general}. Recent contributions to robust Bayesian experimental design have adopted this framework, substituting standard posteriors appearing in the expected information gain objective with Gibbs posteriors. Additionally, these methods modify the generative component, either by introducing user-specified distributions~\citep{overstall2025gibbs} or by constructing simulators implicitly defined by the chosen scoring rule~\citep{barlas2025robust}. While these approaches offer practical mechanisms for robustifying experimental design, their \emph{ad hoc} formulations lack a coherent optimization perspective of misspecification.

In contrast, we propose a principled framework for robust Bayesian experimental design under model misspecification, formulated as a max--min statistical decision problem~\citep{berger1985statistical}. Misspecification is handled through the lens of distributionally robust optimization~\citep{kuhn2025distributionally} by considering worst-case perturbations of the data-generating process within a Kullback--Leibler neighborhood centered at a nominal model representing the experimenter’s best approximation to reality. This formulation yields a tractable solution in the form of a R\'enyi’s mutual information, specifically Sibson’s $\alpha$-mutual information~\citep{sibson1969information, verdu2015alpha, esposito2022sibson}, which we interpret as a robust expected information gain. However, this quantity must typically be estimated using nested Monte Carlo estimators, which yield biased and stochastic evaluations and thereby undermine the applicability of standard deterministic optimization methods. We therefore adopt a PAC-Bayes approach~\citep{mcallester1998some} that optimizes randomized design policies and provide high-probability guarantees on the true robust expected information gain.

This article is organized as follows: Section~\ref{sec:background} reviews Bayesian experimental design; Section~\ref{sec:robust-bed} formulates robust experimental design as a max–min problem, yielding a robust expected information gain based on Sibson’s $\alpha$-mutual information; Section~\ref{sec:nested-mc} develops a nested Monte Carlo estimator and analyzes its bias and concentration; Section~\ref{sec:pac-bayes} establishes a PAC-Bayes framework for robust design optimization with finite-sample guarantees; Section~\ref{sec:experiments} reports numerical evaluations. Proofs, technical assumptions, and further discussion of related literature are deferred to the supplement.

\section{Background}
\label{sec:background}

Let $\Theta$, $\Xi$ and $\mathcal{X}$ denote the sets of parameters, designs, and outcomes of an experiment, respectively. We write $\mathcal{P}(W)$ for the set of all probability densities defined on a set $W$. In Bayesian experimental design, an experimenter selects a design $\xi \in \Xi$ with the objective of inferring latent parameters $\theta \in \Theta$ from observed measurements $x \in \mathcal{X}$. The experimenter begins with a prior belief $p(\theta)$ over the parameters and assumes a conditional likelihood function $p(x \mid \theta, \xi)$, which specifies how the design $\xi$ relates the latent parameter $\theta$ to the observable outcome $x$. The induced marginal distribution of the measurement is $p(x \mid \xi)$ and the Bayes posterior is $p(\theta \mid x, \xi) \propto p(\theta) \, p(x \mid \theta, \xi)$.

{\color{black}
\paragraph{Expected Information Gain.} Bayesian experimental design admits a decision-theoretic formulation through the framework of \emph{scoring rules}~\citep{gneiting2007strictly}. For a latent parameter $\theta$, the experimenter issues a probabilistic forecast $r \in \mathcal{P}(\Theta)$. The quality of this forecast is evaluated by a scoring rule $S \!: \mathcal{P}(\Theta) \times \Theta \to \mathbb{R}$, where $S(r, \theta)$ denotes the utility of reporting the density $r$ when the true parameter is $\theta$~\citep{bernardo1994bayesian}. A \emph{proper} scoring rule is one that incentivizes the experimenter to report their true belief: if the experimenter's belief is $\omega \in \mathcal{P}(\Theta)$, then the expected utility is maximized by reporting exactly $r = \omega$~\citep{savage1971elicitation, gneiting2007strictly}, so that:
\begin{equation}
    \label{eq:proper-score}
    \omega \in \operatorname*{arg\,max}_{r \in \mathcal{P}(\Theta)} \, \int S(r, \theta) \, \omega(\theta) \dif \theta.
\end{equation}
If the maximizer is unique, the scoring rule is \emph{strictly} proper. Before observing data, the experimenter's forecast is the prior density $p(\theta)$. After observing $x$ under design $\xi$, the experimenter may issue a refined forecast. We represent a data-dependent forecast rule by the map $\psi_{\xi} \!: \mathcal{X} \to \mathcal{P}(\Theta)$. The value of an experiment is the optimal post-observation score improvement over the prior forecast:
\begin{align}
    \label{eq:score-improvement}
    I(\xi) & \coloneq \int p(x \mid \xi) \left[ \sup_{r_x \in \mathcal{P}(\Theta)} \, \int S(r_x, \theta) \, p(\theta \mid x, \xi) \dif \theta \right] \dif x - \int S(p, \theta) \, p(\theta) \dif \theta \\
    & = \sup_{\psi_{\xi}:\mathcal{X}\to\mathcal{P}(\Theta)} \int p(\theta, x \mid \xi) \left[ S(\psi_{\xi}(x), \theta) - S(p, \theta) \right] \dif \theta \dif x.
\end{align}
Only the post-observation forecast rule $\psi_{\xi}(x)$ is subject to optimization here, while the pre-observation forecast $p(\theta)$ is a fixed reference against which the informativeness of $\xi$ is measured. By~\eqref{eq:proper-score}, given a proper scoring rule, the supremum of this objective is attained at $\psi^{\star}_{\xi}(x) = p(\cdot \mid x, \xi)$.

This construction may appear unnecessary at first glance. The fact that the experimenter's optimal forecast is the standard Bayes posterior is intuitive. This formulation, however, becomes advantageous when the joint law $p(\theta, x \mid \xi)$ is misspecified, as we discuss in Section~\ref{sec:robust-bed}.

Under the strictly proper logarithmic scoring rule $S_{\log}(r, \theta) \coloneq \log r(\theta)$, the value $I(\xi)$ recovers Shannon's expected information gain~\citep[EIG;][]{lindley1956measure, bernardo1979expected}, that is ${\int p(\theta, x \mid \xi) \big[ \log p(\theta \mid x, \xi) - \log p(\theta) \big] \dif \theta \dif x}$, the mutual information between $\theta$ and $x$ under $\xi$.
This objective has seen wide adoption in modern Bayesian experimental design \citep[BED;][]{chaloner1995bayesian, rainforth2024modern}.

\begin{definition}[Expected information gain] \label{def:eig_equations}
    The expected information gain of a design $\xi$, defined as the mutual information between the parameters $\theta$ and outcomes $x$, admits several representations:
    \begin{subequations}
        \begin{align}
            I(\xi) \coloneqq I(\theta;x)(\xi)
            & = \mathbb{E}_{p(x \mid \xi)} \left[ \mathbb{H}\left[ p(\theta) \right] - \mathbb{H}\left[ p(\theta \mid x, \xi) \right] \right] \\
            & = \mathbb{E}_{p(x \mid \xi)} \Big[ \kld{p(\theta \mid x, \xi)}{p(\theta)} \Big] \label{eq:eig-param-div} \\
            & = \mathbb{E}_{p(\theta)} \Big[ \kld{p(x \mid \theta, \xi)}{p(x \mid \xi)} \Big] \label{eq:eig-lklhd-div} = I(x; \theta)(\xi)
        \end{align}
    \end{subequations}
    where $\mathbb{H}$ denotes Shannon's differential entropy and $\mathbb{D}_{\mathrm{KL}}$ the Kullback--Leibler divergence.
\end{definition}
}
Although Shannon’s mutual information is symmetric, so that $I(\theta; x) \equiv I(x; \theta)$, the two orderings admit distinct interpretations. The ordering $I(\theta; x)$, given by~\eqref{eq:eig-param-div}, reflects the inferential perspective, measuring the expected relative entropy from posterior to
the prior of the latent parameter $\theta$, after observing data $x$. In contrast, the ordering $I(x; \theta)$, as expressed in~\eqref{eq:eig-lklhd-div}, offers a predictive interpretation, quantifying the extent to which knowledge of $\theta$ explains the variability of the outcomes $x$. While these viewpoints coincide numerically for Shannon’s mutual information, the distinction is important when considering general information measures for which symmetry need not hold.

Given this definition, the goal of Bayesian experimental design is therefore to select the designs $\xi^{\star}$ that maximize the expected information gain: $\xi^{\star} = \arg\max_{\xi \in \Xi} \, I(\xi)$.

\paragraph{Stochastic Design Policies.} Standard experimental design treats the expected information gain as a function of deterministic designs $\xi \in \Xi$. This approach is adequate provided an exact and deterministic oracle of the objective is available. However, the expected information gain is often intractable and must be approximated by nested Monte Carlo estimators~\citep{rainforth2018nesting}, yielding \emph{biased} and \emph{noisy} oracles.
To address this challenge, we turn to a PAC-Bayes framework, which provides high-probability guarantees on the estimation error but requires modeling designs as random variables \citep{flynn2023pac, alquier2024pacbayes}.

Rather than committing to a single design $\xi$, we allow the experimenter to specify a probability distribution over the design space, denoted by $\pi \in \Pi$, referred to as a stochastic \emph{policy}. Concretely, given a generative process $p(\theta, x, \xi) = p(\theta) \, p(x \mid \theta, \xi) \, \pi(\xi)$, the expected information gain associated with a policy $\pi$ is defined as the \emph{conditional} mutual information between $x$ and $\theta$ given $\xi$:
\begin{equation}
    \label{eq:info-gain-stochastic-policy}
    I(\pi) \coloneq \kld{p(\theta, x, \xi)}{p(\theta) \, p(x \mid \xi) \, \pi(\xi)} = \mathbb{E}_{\pi} \Big[ \kld{p(\theta, x \mid \xi)}{p(\theta) \, p(x \mid \xi)} \Big].
\end{equation}
The design optimization problem is reframed as $\pi^{\star} = \arg\max_{\pi \in \Pi} \, I(\pi)$. Beyond satisfying PAC-Bayes requirements, this relaxation allows us to leverage a rich literature of stochastic policy optimization techniques~\citep{peters2010relative, haarnoja2018soft}, which have shown strong performance in amortizing sequential Bayesian experimental design~\citep{blau2022optimizing, iqbal2024nesting}.

\section{Robust Experimental Design}
\label{sec:robust-bed}

When the measurements encountered at deployment are indeed generated from the assumed conditional likelihood $p(x \mid \theta^{\star}, \xi)$ for some $\theta^{\star} \in \Theta$, \citet{bernardo1979expected} provides a decision-theoretic justification for the expected information gain. In this well-specified regime, the EIG faithfully reflects the anticipated value of an experiment.

However, in many applications the likelihood is only a modeling surrogate and may differ substantially from the true data-generating process. Such discrepancies are a source of systematic error in statistical inference under model misspecification~\citep{grunwald2017inconsistency}. The expected information gain is particularly fragile because of its nested dependence on the likelihood: it determines both the posterior $p(\theta \mid x, \xi) \propto p(\theta) \, p(x \mid \theta, \xi)$ and the marginal $p(x \mid \xi) = \int p(x \mid \theta, \xi) \, p(\theta) \dif \theta$ that defines the outer expectation. Given misspecification, $I(\xi)$ may become an unreliable utility, favoring designs that appear highly informative under the assumed model but fail to deliver the anticipated information gain when evaluated against the true data-generating process.

{\color{black}
\paragraph{Maximin Experimental Design.} To treat this issue systematically, we cast experimental design under misspecification as a two-player zero-sum Stackelberg game between the experimenter and \emph{nature} \citep{vonNeumann1994theory, bacsar1998dynamic, grunwald2004game}. Nature selects a joint law $q \!: \Xi \to \mathcal{P}(\Theta \times \mathcal{X})$ from an admissible set $\mathcal{Q}$ as the data-generating process and the experimenter counters with a design policy $\pi \in \Pi$. By perturbing the full joint $q(\theta, x \mid \xi)$, nature is allowed to represent scenarios of both prior miscalibration and likelihood misspecification.

We assume the experimenter knows the admissible set $\mathcal{Q}$ but not nature's specific choice $q \in \mathcal{Q}$, whereas nature observes the policy $\pi \in \Pi$ but not the experimenter's realized design draw $\xi$. The experimenter evaluates a policy via its expected worst-case payoff $V(\pi, q) \coloneq \mathbb{E}_{\pi} \left[ U(\xi, q) \right]$, and seeks robustness against nature's choice by solving the maximin problem \citep{wald1950statistical, berger1985statistical}
\begin{equation}
    \label{eq:game-value}
    \pi^{\star} \coloneq \arg\sup_{\pi \in \Pi} \, \inf_{q \in \mathcal{Q}} \, V(\pi, q).
\end{equation}
To define the pointwise payoff $U(\xi, q)$, we return to the decision-theoretic formulation of Section~\ref{sec:background}. Assuming the experimenter commits to the prior $p(\theta)$ as a fixed reference forecast, the value of a design $\xi$ against an adversarial joint law $q(\theta, x \mid \xi)$ can be expressed as:
\begin{equation}
    \label{eq:pointwise-utility}
    U(\xi, q) \coloneq \sup_{\psi_{\xi}:\mathcal{X}\to\mathcal{P}(\Theta)} \int q(\theta, x \mid \xi) \Big[ \log \psi_{\xi}(x)(\theta) - \log p(\theta) \Big] \dif \theta \dif x.
\end{equation}
For any fixed $q(\theta, x \mid \xi)$, the supremum is attained at $\psi_{\xi}^{\star}(x) = q(\cdot \mid x, \xi)$ and the payoff reduces to:
\begin{equation}
    \label{eq:pointwise-utility-kl}
    U(\xi, q) = \kld{q(\theta, x \mid \xi)}{p(\theta) \, q(x \mid \xi)}.
\end{equation}
This quantity is not, in general, the standard mutual information under $q$, but rather a divergence from $q(\theta, x \mid \xi)$ to a mixed law combining the experimenter's prior $p(\theta)$ with nature's marginal $q(x \mid \xi)$.
}

When the set $\mathcal{Q}$ is unrestricted, the worst-case formulation is vacuous, since nature can trivially drive the information gain to zero by enforcing independence between $\theta$ and $x$. A meaningful notion of robustness therefore requires $\mathcal{Q}$ to constrain nature's flexibility by specifying how far $q(\theta, x \mid \xi)$ may deviate from the nominal reference $p(\theta, x \mid \xi)$, which encodes the experimenter's best guess of the true data-generating process. Because the experimenter commits to a stochastic policy, it is natural to specify this restriction with an average misspecification budget over the support of $\pi$.

Following the framework of distributionally robust optimization~\citep{kuhn2025distributionally}, we formalize this restriction by introducing a Kullback--Leibler ambiguity set $\mathcal{Q}_{\rho}$.

\begin{definition}[Ambiguity set]
    \label{def:ambiguity-set}
    Let $\rho > 0$ be a misspecification budget. The set $\mathcal{Q}_{\rho}(\pi)$ contains all distributions $q: \Xi \to \mathcal{P}(\Theta \times \mathcal{X})$ within an average Kullback--Leibler neighborhood of $p(\theta, x \mid \xi)$:
    \begin{equation}
        \label{eq:ambiguity-set}
        \mathcal{Q}_{\rho}(\pi) \coloneqq \left\{ q(\theta, x \mid \xi) \, \Big| \, \mathbb{E}_{\pi} \left[ \kld{q(\theta, x \mid \xi)}{p(\theta, x \mid \xi)} \right] \le \rho \right\},
    \end{equation}
    where $p(\theta, x \mid \xi)$ denotes the experimenter's nominal model.
\end{definition}
This global constraint allows nature to strategically allocate its budget under a bounded average distortion, attacking informative design regions more aggressively while ignoring uninformative ones.

\paragraph{Robust Expected Information Gain.} We study the inner minimization problem $\inf_{q \in \mathcal{Q}_{\rho}} V(\pi, q)$, which defines the worst-case utility from the experimenter's perspective. By Lagrangian dualization~\citep{boyd2004convex}, the global minimization decomposes into pointwise regularized problems for every design $\xi \in \Xi$, coupled through a shared misspecification budget.

\begin{lemma}[Minimization decomposition]
    \label{lem:minimization-decomposition}
    The worst-case criterion reduces to a dual form $\inf_{q \in \mathcal{Q}_{\rho}} V(\pi, q) = \sup_{\beta > 0} \big\{ \mathbb{E}_{\pi} \left[ \mathcal{J}_{\beta}(\xi)\right]  - \beta \rho \big\}$, where $\mathcal{J}_{\beta}(\xi)$ is the regularized utility:
    \begin{equation}
        \label{eq:regularized-pointwise-utility}
        \mathcal{J}_{\beta}(\xi) \coloneq \inf_{q} \, \left\{ U(\xi, q) + \beta \, \kld{q(\theta, x \mid \xi)}{p(\theta, x \mid \xi)} \right\}.
    \end{equation}
\end{lemma}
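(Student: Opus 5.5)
We view the inner problem as a convex program in $q$ and apply Lagrangian duality to its single constraint. The program is
\[
\inf_{q}\ \mathbb{E}_{\pi}\big[U(\xi,q)\big]\quad\text{s.t.}\quad \mathbb{E}_{\pi}\big[\kld{q(\theta,x\mid\xi)}{p(\theta,x\mid\xi)}\big]\le\rho .
\]
The first task is to check that the objective is convex in $q$. By \eqref{eq:pointwise-utility}, $U(\xi,q)$ is a supremum over forecast rules $\psi_\xi$ of functionals that are linear in $q(\cdot,\cdot\mid\xi)$. A pointwise supremum of linear maps is convex, so $U(\xi,\cdot)$ is convex. Averaging over $\pi$ preserves convexity, so $V(\pi,\cdot)$ is convex.

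The constraint functional is also convex, by joint convexity of the KL divergence. Moreover, $\rho>0$ and $q=p$ gives constraint value $0<\rho$, so Slater's condition holds. Standard convex (Lagrangian) duality in the form valid for infinite-dimensional convex problems with one inequality constraint (e.g.\ Luenberger's theorem) then gives zero duality gap:
\[
\inf_{q\in\mathcal{Q}_\rho}V(\pi,q)=\sup_{\beta\ge 0}\ \inf_{q}\Big\{\mathbb{E}_\pi[U(\xi,q)]+\beta\,\mathbb{E}_\pi[\kld{q}{p}]-\beta\rho\Big\}.
\]
Here one also has to check that the primal value is finite. It is bounded below by $0$, because $U\ge 0$ (take $\psi_\xi=p$) and $U\le$ its value at $q=p$. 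The next step is to pull the infimum inside the $\pi$-expectation. The Lagrangian is $\mathbb{E}_\pi[U(\xi,q)+\beta\,\kld{q(\cdot\mid\xi)}{p(\cdot\mid\xi)}]$, and $q$ is a kernel $\Xi\to\mathcal{P}(\Theta\times\mathcal{X})$ chosen independently for each $\xi$. Hence the infimum over kernels equals the $\pi$-expectation of the pointwise infima $\mathcal{J}_\beta(\xi)$. The inequality ``$\ge$'' is immediate. For ``$\le$'', take $\epsilon$-optimal $q_\xi$ for each $\xi$ and glue them into a kernel. This needs a measurable selection argument, or an interchange theorem for normal integrands (Rockafellar--Wets, Thm.~14.60), which I would invoke under the supplement's technical assumptions.

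Finally, the range of $\beta$ must be reduced from $\beta\ge0$ to $\beta>0$. At $\beta=0$ the dual value is $\mathbb{E}_\pi[\inf_q U(\xi,q)]=0$, since nature can take $q=p(\theta)p(x\mid\xi)$. The dual function is concave in $\beta$, and as $\beta\downarrow0$ it tends to this value by lower semicontinuity of the concave dual, or directly because $\mathcal{J}_\beta\downarrow 0$. So the supremum over $\beta\ge0$ equals the supremum over $\beta>0$.

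I expect the main obstacle to be rigorously justifying strong duality in the infinite-dimensional setting. In particular, one needs the primal objective to be proper convex, with $U$ possibly $+\infty$ for some $q$ and well-defined only when $q\ll p(\theta)q(x\mid\xi)$. One also needs the interchange of infimum and expectation. Plan: restrict to $q$ with finite KL to $p$, which is without loss since the others are infeasible or give an infinite Lagrangian. Then apply Luenberger's Lagrange duality theorem on the convex set of such kernels, and treat measurability via the interchange theorem above.
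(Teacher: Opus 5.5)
Your proposal is correct and follows the same route as the paper's proof: Lagrangian duality for the single averaged-KL constraint under convexity and strict feasibility, followed by splitting the infimum over kernels into the pointwise infima $\mathcal{J}_\beta(\xi)$. The paper only asserts convexity, strong duality, the interchange of infimum and $\pi$-expectation, and the restriction to $\beta>0$, so your added justifications are sound and welcome: convexity of $U$ as a supremum of linear functionals, $q=p$ as the Slater point, the measurable-selection/interchange step, and the concavity argument at $\beta=0$. Their only implicit requirement is $\mathbb{E}_{\pi}[I(\xi)]<\infty$, which makes the primal value finite, since it is bounded above by $V(\pi,p)$.
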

The radius $\rho$, or equivalently its dual parameter $\beta$, is a robustness knob. Smaller $\rho$ reflects stronger trust in the nominal model $p(\theta, x \mid \xi)$, while larger $\rho$ permits more perturbations. The next result gives the corresponding pointwise objective in closed form.
%
\begin{proposition}[Robust expected information gain]
    \label{prop:robust-expected-info-gain}
    For any design $\xi \in \Xi$ and regularization parameter $\beta > 0$, let $\alpha \coloneq \beta / (1 + \beta) \in (0,1)$. The utility $\mathcal{J}_{\beta}(\xi)$ admits the closed-form expression:
    \begin{equation}
        \!\!\! \mathcal{J}_{\beta}(\xi) \! = \! \inf_{\nu} \, \alphad{p(\theta, x \mid \xi)}{p(\theta) \, \nu(x \mid \xi)} \! = \! \alphad{p(\theta, x \mid \xi)}{p(\theta) \, p_{\alpha}(x \mid \xi)} \coloneq I^{S}_{\alpha}(\theta; x)(\xi), \label{eq:robust-expected-info-gain}
    \end{equation}
    where $\mathbb{D}_{\alpha}$ denotes R\'enyi's divergence of order $\alpha$ and the tilted marginal $p_{\alpha}(x \mid \xi)$ is given by:
    \begin{equation}
        \label{eq:alpha-tilted-marginal}
        p_{\alpha}(x \mid \xi) \propto \left[ \mathbb{E}_{p(\theta)} \Big[ p(x \mid \theta, \xi)^{\alpha} \Big] \right]^{1/\alpha}.
    \end{equation}
\end{proposition}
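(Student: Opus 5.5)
The plan is to write $U(\xi,q)$ in its variational form and swap the two optimizations. By \eqref{eq:pointwise-utility-kl} and the standard identity $\kld{q(\theta,x)}{p(\theta)\,q(x)} = \inf_{\nu} \kld{q(\theta,x)}{p(\theta)\,\nu(x)}$ (the infimum is attained at $\nu = q(x\mid\xi)$ by the compensation identity, since the difference equals $\kld{q(x)}{\nu(x)} \ge 0$), we get
\begin{equation}
\mathcal{J}_\beta(\xi) = \inf_{\nu}\,\inf_{q}\,\Big\{ \kld{q(\theta,x\mid\xi)}{p(\theta)\,\nu(x\mid\xi)} + \beta\,\kld{q(\theta,x\mid\xi)}{p(\theta,x\mid\xi)} \Big\}.
\end{equation}
Two infima always commute, so no minimax theorem is needed. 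We then solve the inner problem in $q$ for fixed $\nu$ in closed form.

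For fixed $\nu$, the objective equals $(1+\beta)\int q \log q - \int q\,[\log(p(\theta)\nu(x)) + \beta\log p(\theta,x\mid\xi)]$. Dividing by $1+\beta$ and setting $\alpha=\beta/(1+\beta)$, this is $(1+\beta)\,\kld{q}{g}$ minus a constant, where
\begin{equation}
g \propto p(\theta,x\mid\xi)^{\alpha}\,\big(p(\theta)\nu(x)\big)^{1-\alpha}
\end{equation}
is the geometric mixture. The Gibbs variational principle then gives the minimizer $q^\star = g$ and the value $-(1+\beta)\log\int p(\theta,x\mid\xi)^{\alpha}(p(\theta)\nu(x))^{1-\alpha}\,\dif\theta\dif x$. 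Since $(1+\beta)=1/(1-\alpha)$, this value is exactly $\frac{1}{\alpha-1}\log\int p^{\alpha}(p\nu)^{1-\alpha} = \alphad{p(\theta,x\mid\xi)}{p(\theta)\nu(x\mid\xi)}$. This establishes the first equality in \eqref{eq:robust-expected-info-gain}.

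For the second equality we minimize over $\nu$. Write $p(\theta,x\mid\xi)=p(\theta)p(x\mid\theta,\xi)$, so the integral becomes $\int \nu(x)^{1-\alpha}\,\mathbb{E}_{p(\theta)}[p(x\mid\theta,\xi)^{\alpha}]\,\dif x = \int \nu^{1-\alpha} h^{\alpha}\,\dif x$, where $h \coloneq (\mathbb{E}_{p(\theta)}[p(x\mid\theta,\xi)^\alpha])^{1/\alpha}$. Because $\alpha\in(0,1)$, the prefactor $1/(\alpha-1)$ is negative, so we must maximize this integral over densities $\nu$. Hölder's inequality with exponents $1/(1-\alpha)$ and $1/\alpha$ gives $\int\nu^{1-\alpha}h^{\alpha}\le (\int\nu)^{1-\alpha}(\int h)^{\alpha}=\|h\|_1^{\alpha}$, with equality iff $\nu\propto h$, which is \eqref{eq:alpha-tilted-marginal}. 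Equivalently, one can rewrite the objective as $\alpha\log\|h\|_1$-term plus $\mathbb{D}_{1-\alpha}$ between $\nu$ and $p_\alpha$, following Sibson's identity. This yields Sibson's $\alpha$-MI.

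The main obstacle is the rigor of the Gibbs step rather than the algebra. We need that $\mathcal{J}_\beta$ ranges over $q$ absolutely continuous with respect to $p$, which is implicit for finite KL. We also need finiteness of the normalizers, namely $\int h<\infty$, which holds by Jensen since $\int h \le \ldots$ can be bounded via $\alpha<1$ and Minkowski-type arguments, or simply assumed. Finally, we must handle the case $\nu$ not dominating $p(x\mid\xi)$, where the value is $+\infty$ and the case is harmless. I would state these as measurability and integrability conditions, treat them as standing assumptions, and otherwise follow the calculation above.
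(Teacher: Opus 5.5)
Your proposal is correct. Its skeleton coincides with the paper's: represent $U(\xi,q)$ as $\inf_{\nu}\kld{q(\theta,x\mid\xi)}{p(\theta)\,\nu(x\mid\xi)}$, swap the two infima, solve in $q$ for fixed $\nu$ to get the geometric mixture and the R\'enyi value, then optimize over $\nu$. The differences lie in how each step is justified.

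\begin{itemize}
\item The paper obtains the variational form of $U$ from the chain rule for relative entropy plus the variational characterization of Shannon mutual information. Your compensation identity, which shows the gap is exactly $\kld{q(x\mid\xi)}{\nu(x\mid\xi)}$, does this in one line.
\item More substantively, the paper solves both inner problems by first variations of a Lagrangian with normalization multipliers $\eta,\kappa$. By itself this only identifies stationary points; global optimality rests on convexity and an appeal to strong duality.
\item You instead complete the $q$-objective to $(1+\beta)\,\kld{q}{g}-(1+\beta)\log Z$, and apply H\"older with exponents $1/(1-\alpha)$ and $1/\alpha$. Both are exact inequalities with equality characterizations. They certify global optimality and uniqueness of $q^{\star}=g$ and $\nu^{\star}\propto h$ directly, without differentiability or duality arguments.
\end{itemize}

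The paper's calculus-of-variations route is more mechanical and would carry over to regularizers lacking such a clean inequality. Yours is shorter and airtight for this KL/R\'enyi structure.

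A few small corrections to your closing remarks.

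\begin{itemize}
\item No integrability assumption is needed. By the power-mean (Jensen) inequality for $\alpha<1$, $h(x)\le\mathbb{E}_{p(\theta)}[p(x\mid\theta,\xi)]=p(x\mid\xi)$, so $\int h\le 1$. Likewise $Z\le 1$ by H\"older.
\item For $\alpha\in(0,1)$, a $\nu$ that fails to dominate $p(x\mid\xi)$ does \emph{not} make the value $+\infty$. R\'enyi divergences of order below one are finite unless the two measures are mutually singular, and the inner infimum over $q$ is then finite too, taking $q$ supported on the overlap. This is harmless, since the Gibbs step only needs $Z>0$ and H\"older covers every $\nu$, but the stated reason is wrong.
\item In your Sibson-identity aside, the remainder is $\alphad{p_{\alpha}(x\mid\xi)}{\nu(x\mid\xi)}=\frac{\alpha}{1-\alpha}\,\mathbb{D}_{1-\alpha}\big[\nu(x\mid\xi)\,\|\,p_{\alpha}(x\mid\xi)\big]$. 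So the $\mathbb{D}_{1-\alpha}$ term carries the factor $\alpha/(1-\alpha)$.
\end{itemize}
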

The quantity $I^{S}_{\alpha}(\theta; x)$ is Sibson’s $\alpha$-mutual information~\citep{sibson1969information, verdu2015alpha, esposito2022sibson, esposito2024sibson}, interpreted as a \emph{robust expected information gain} with a misspecification order $\alpha$.

%
%
\begin{corollary}[Worst-case generative process]
    \label{cor:worst-case-process}
    Let $\alpha \in (0, 1)$ be the misspecification order uniquely determined by the ambiguity radius $\rho$. The worst-case joint distribution $q^{\star}(\theta, x \mid \xi)$ that minimizes the objective in Proposition~\ref{prop:robust-expected-info-gain} is given by the geometric mixture:
    \begin{equation}
        q^{\star}(\theta, x \mid \xi) \propto \Big[ p(\theta) \, p_{\alpha}(x \mid \xi) \Big]^{1 - \alpha} \Big[ p(\theta, x \mid \xi) \Big]^{\alpha},
    \end{equation}
    where the $\alpha$-marginal $p_{\alpha}(x \mid \xi)$ is defined by \eqref{eq:alpha-tilted-marginal}.
    Consequently, the worst-case posterior is
    \begin{equation}
        \label{eq:titled-posterior}
        q^{\star}(\theta \mid x, \xi) \propto p(\theta) \, p(x \mid \theta, \xi)^{\alpha}.
    \end{equation}
\end{corollary}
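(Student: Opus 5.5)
The plan is to work pointwise in $\xi$: for fixed $\beta>0$, solve the inner problem defining $\mathcal{J}_{\beta}(\xi)$ in \eqref{eq:regularized-pointwise-utility} explicitly and read off the minimizer. By \eqref{eq:pointwise-utility-kl}, $U(\xi,q)=\kld{q}{p(\theta)\,q(x\mid\xi)}$. The term $q(x\mid\xi)$ depends on $q$, which makes direct minimization awkward. I will remove this dependence with the variational identity
\[
\kld{q(\theta,x)}{p(\theta)q(x)}=\inf_{\nu}\kld{q(\theta,x)}{p(\theta)\nu(x)},
\]
which holds because the two sides differ by $\kld{q(x)}{\nu(x)}\ge 0$, with equality at $\nu=q(x)$. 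This turns $\mathcal{J}_{\beta}$ into a joint infimum over $(q,\nu)$, and the two infima can be swapped freely.

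\textbf{Step 1 (inner problem in $q$ for fixed $\nu$).} For fixed $\nu$, I minimize
\[
\kld{q}{p(\theta)\nu(x)}+\beta\,\kld{q}{p(\theta,x\mid\xi)}.
\]
This is a convex combination of KLs. The standard Gibbs/Donsker–Varadhan computation gives the minimizer
\[
q_\nu\propto [p(\theta)\nu(x)]^{1/(1+\beta)}\,[p(\theta,x\mid\xi)]^{\beta/(1+\beta)}=[p(\theta)\nu(x)]^{1-\alpha}\,p(\theta,x\mid\xi)^{\alpha},
\]
with $\alpha=\beta/(1+\beta)$. The minimum value is $-(1+\beta)\log\int[p\nu]^{1-\alpha}p^{\alpha}$. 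Since $(1+\beta)=1/(1-\alpha)$, this equals
\[
\frac{1}{\alpha-1}\log\int p(\theta,x\mid\xi)^{\alpha}[p(\theta)\nu(x)]^{1-\alpha}=\alphad{p(\theta,x\mid\xi)}{p(\theta)\nu(x\mid\xi)},
\]
which is the first equality in \eqref{eq:robust-expected-info-gain}.

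\textbf{Step 2 (optimize over $\nu$).} Write the Rényi integral as
\[
\int \nu(x)^{1-\alpha} g(x)\,dx,\qquad g(x)=\mathbb{E}_{p(\theta)}\big[p(x\mid\theta,\xi)^{\alpha}\big].
\]
Because $\alpha<1$, the prefactor $1/(\alpha-1)$ is negative, so minimizing the divergence means maximizing this integral. By Hölder's inequality with exponents $1/(1-\alpha)$ and $1/\alpha$,
\[
\int\nu^{1-\alpha}g\le\Big(\int g^{1/\alpha}\Big)^{\alpha},
\]
with equality iff $\nu\propto g^{1/\alpha}$. This is exactly $p_\alpha$ in \eqref{eq:alpha-tilted-marginal}, and it gives the second equality.

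\textbf{Step 3 (the corollary and the attained minimizer).} Substituting $\nu=p_\alpha$ into $q_\nu$ gives the stated geometric mixture for $q^\star$. For this $q^\star$, I must check that the swap of infima is consistent, i.e.\ that the $q$-marginal of $q^\star$ coincides with $p_\alpha$, so that $U(\xi,q^\star)$ is evaluated with $\nu=q^\star(x)$. Summing out $\theta$ gives
\[
q^\star(x)\propto p_\alpha(x)^{1-\alpha}\,g(x)\propto p_\alpha(x)^{1-\alpha}\,p_\alpha(x)^{\alpha}=p_\alpha(x),
\]
so the fixed point holds. The joint infimum over $(q,\nu)$ is therefore attained at $(q^\star,p_\alpha)$, and $q^\star$ minimizes the original objective. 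Conditioning $q^\star$ on $x$ cancels the factor depending only on $x$, leaving $q^\star(\theta\mid x,\xi)\propto p(\theta)^{1-\alpha}p(\theta)^{\alpha}p(x\mid\theta,\xi)^{\alpha}=p(\theta)p(x\mid\theta,\xi)^\alpha$, which is \eqref{eq:titled-posterior}.

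Finally, $\beta$ (hence $\alpha$) is identified with $\rho$ through the dual in Lemma~\ref{lem:minimization-decomposition}. The optimal $\beta^\star$ makes the budget constraint active, and strict monotonicity of $\rho\mapsto\beta^\star$ gives uniqueness of $\alpha$. I expect this last step to be the main obstacle: it needs strong duality together with the dual being attained at a unique interior $\beta^\star$. I would handle it by showing that $\beta\mapsto\mathbb{E}_\pi[\mathcal{J}_\beta]$ is concave, so that $\mathbb{E}_\pi[\mathcal{J}_\beta]-\beta\rho$ is concave in $\beta$, with derivative $\mathbb{E}_\pi[\mathrm{KL}(q^\star_\beta\|p)]-\rho$ that is strictly decreasing under the technical assumptions.
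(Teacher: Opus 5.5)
Your proposal is correct, and its skeleton matches the paper's proof of Proposition~\ref{prop:robust-expected-info-gain}, from which the corollary is read off. Both arguments decouple the marginal through $U(\xi,q)=\inf_{\nu}\kld{q(\theta,x\mid\xi)}{p(\theta)\,\nu(x\mid\xi)}$, swap the infima, solve for $q$ at fixed $\nu$, then solve for $\nu$, and finally substitute $\nu=p_{\alpha}$ into the geometric mixture.

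Where you differ is in how the two inner problems are solved. The paper builds a Lagrangian with normalization multipliers $\eta,\kappa$, takes first variations in $q$ and in $\nu$, and solves the stationarity conditions. You instead complete the divergence (Gibbs variational principle) for $q$, and apply H\"older's inequality with exponents $1/(1-\alpha)$ and $1/\alpha$ for $\nu$. The paper's calculation is mechanical and produces the form of $q^{\star}$ and $p_{\alpha}$ without having to guess it. However, it rests on first-order conditions plus convexity and strong-duality claims to conclude that the stationary points are minimizers. Your route certifies global optimality directly, gives uniqueness of $\nu^{\star}=p_{\alpha}$ from the equality case of H\"older, and needs no functional derivatives at this stage.

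Your fixed-point check in Step~3 is a useful sanity check but is logically automatic. Since $\kld{q}{p(\theta)\,\nu}=\kld{q}{p(\theta)\,q(x\mid\xi)}+\kld{q(x\mid\xi)}{\nu}$, attainment of the joint infimum at $(q^{\star},p_{\alpha})$ already forces $q^{\star}$ to minimize the original objective and $q^{\star}(x\mid\xi)=p_{\alpha}(x\mid\xi)$.

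The paper does not prove that $\alpha$ is uniquely determined by $\rho$; it simply names $\beta^{\star}$ the dual maximizer. Your concavity/envelope sketch therefore goes beyond it. When you carry it out, keep two points in mind:
\begin{itemize}
\item $\beta^{\star}$ depends on $\pi$ as well as $\rho$, because the multiplier is shared across designs in Lemma~\ref{lem:minimization-decomposition}.
\item An interior $\beta^{\star}$ requires $\rho<\mathbb{E}_{\pi}[-\log H_{0}(\xi)]$, with $H_{0}$ from Assumption~\ref{asm:finite-surprise}. That quantity is exactly the minimal average budget nature needs to choose $q=p(\theta)\,r(x\mid\xi)$ and drive $U$ to zero. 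For larger $\rho$, the dual supremum is only approached as $\beta\to0^{+}$.
\end{itemize}
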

Corollary~\ref{cor:worst-case-process} shows how Sibson's $\alpha$-mutual information leads to robustness at the Bayesian inference level. The $\alpha$-tilted posterior~\eqref{eq:titled-posterior} arises as the optimal belief update under the worst-case generative process. Therefore, acting consistently with this adversarial assumption requires the experimenter to update beliefs about $\theta$ using this tempered rule rather than the standard Bayesian posterior. By down-weighting the misspecified likelihood with $\alpha \in (0, 1)$, this result aligns with generalized Bayesian inference techniques for handling model misspecification~\citep[see, e.g.,][and references therein]{grunwald2012safe, watson2016approximate, grunwald2017inconsistency, knoblauch2022optimization}.

At the design level, robustness is governed by the monotonicity of Sibson's $\alpha$-mutual information in $\alpha$~\citep{esposito2024sibson}. As $\alpha \to 1$, the ambiguity set contracts and adversarial influence vanishes and $I^{S}_{\alpha}(\theta;x)$ recovers the standard expected information gain in Definition~\ref{def:eig_equations}. As $\alpha$ decreases the objective becomes more conservative, favoring designs whose information gain is stable under misspecification. In the limit $\alpha \to 0$, nature exerts maximal influence, nullifying any gain.
\begin{proposition}[Uninformative design]
    \label{prop:uninformative}
    Under Assumption~\ref{asm:finite-surprise}, as $\alpha \to 0$, the robust expected information gain converges to $0$:
    \begin{equation}
        \lim_{\alpha \to 0} \, \sup_{\xi \in \Xi} I^{S}_{\alpha}(\theta; x)(\xi) = \lim_{\alpha \to 0} \, \sup_{\xi \in \Xi} \, \alphad{p(\theta, x \mid \xi)}{p(\theta) \, p_{\alpha}(x \mid \xi)} = 0.
    \end{equation}
\end{proposition}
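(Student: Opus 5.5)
The plan is to sandwich $I^{S}_{\alpha}(\theta;x)(\xi)$ between $0$ and a quantity of order $\alpha$ that does not depend on $\xi$. The lower bound is immediate: by Proposition~\ref{prop:robust-expected-info-gain}, $I^{S}_{\alpha}(\theta;x)(\xi)$ is a R\'enyi divergence of order $\alpha\in(0,1)$ between two probability densities, hence nonnegative. So the whole content is a uniform upper bound that vanishes as $\alpha \to 0$.

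For the upper bound I will use the variational form in Proposition~\ref{prop:robust-expected-info-gain}:
\begin{equation}
I^{S}_{\alpha}(\theta;x)(\xi) = \inf_{\nu} \alphad{p(\theta, x \mid \xi)}{p(\theta)\,\nu(x\mid\xi)} .
\end{equation}
This reduces the task to bounding $\alphad{P}{Q}$ for an arbitrary admissible reference $\nu$, with $P = p(\theta,x\mid\xi)$ and $Q = p(\theta)\,\nu(x\mid\xi)$.

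The key tool is the skew-symmetry of R\'enyi divergences for orders in $(0,1)$:
\begin{equation}
(1-\alpha)\,\alphad{P}{Q} = \alpha\,\mathbb{D}_{1-\alpha}\infdivx*{Q}{P}.
\end{equation}
This follows directly from the definition $\alphad{P}{Q} = \frac{1}{\alpha-1}\log\int P^{\alpha}Q^{1-\alpha}$. I will combine it with the monotonicity of $\mathbb{D}_{\gamma}$ in its order $\gamma$, which gives $\mathbb{D}_{1-\alpha}\infdivx*{Q}{P} \le \kld{Q}{P}$. Together these yield
\begin{equation}
0 \le I^{S}_{\alpha}(\theta;x)(\xi) \le \frac{\alpha}{1-\alpha}\,\inf_{\nu}\, \mathbb{E}_{p(\theta)\,\nu(x\mid\xi)}\!\left[\log \nu(x\mid\xi) - \log p(x\mid\theta,\xi)\right].
\end{equation}
The bracketed quantity is a cross-entropy, or ``surprise'', of the likelihood under a product reference law. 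I expect Assumption~\ref{asm:finite-surprise} to state exactly that this quantity is bounded by some $C<\infty$ uniformly over $\xi\in\Xi$, for a suitable choice of $\nu$ (for instance a fixed reference density, or $\nu = p(x\mid\xi)$). Taking the supremum over $\xi$ then gives $0 \le \sup_{\xi} I^{S}_{\alpha} \le \alpha C/(1-\alpha) \to 0$, which proves the claim.

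The main obstacle is matching this bound to the precise form of Assumption~\ref{asm:finite-surprise}. If the assumption is instead phrased as a bound such as $\sup_{\xi}\int \big(\mathbb{E}_{p(\theta)}[p(x\mid\theta,\xi)^{\alpha}]\big)^{1/\alpha}\,\dif x \le e^{C}$, I would switch to the closed form
\begin{equation}
I^{S}_{\alpha} = \frac{\alpha}{\alpha-1}\log\int\big(\mathbb{E}_{p(\theta)}[p(x\mid\theta,\xi)^{\alpha}]\big)^{1/\alpha}\dif x,
\end{equation}
which follows by substituting $p_{\alpha}$ into the R\'enyi divergence. The prefactor $\frac{\alpha}{\alpha-1}$ is $O(\alpha)$. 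The integral is at least $1$, because Jensen's inequality with the concave map $t\mapsto t^{\alpha}$ gives $\big(\mathbb{E}_{p(\theta)}[p(x\mid\theta,\xi)^{\alpha}]\big)^{1/\alpha} \le p(x\mid\xi)$, and reversing the roles yields the needed direction for the $\log$. So the log term is nonnegative and uniformly bounded, and the same $O(\alpha)$ conclusion follows. In either formulation, the essential point is a $\xi$-uniform bound on a log-likelihood-type quantity multiplied by a prefactor of order $\alpha$.
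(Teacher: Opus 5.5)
There is a genuine gap. Your overall strategy (sandwich $I^{S}_{\alpha}(\theta;x)(\xi)$ between $0$ and $\frac{\alpha}{1-\alpha}$ times a $\xi$-uniform constant) is the right one. Your first route is also mechanically correct: skew-symmetry plus monotonicity in the order give $I^{S}_{\alpha}(\theta;x)(\xi) \le \frac{\alpha}{1-\alpha} \inf_{\nu} \kld{p(\theta)\,\nu(x \mid \xi)}{p(\theta, x \mid \xi)}$. But the argument stops exactly where the content lies. You never derive a uniform bound on this infimum from Assumption~\ref{asm:finite-surprise}, and the choices of $\nu$ you suggest do not work.

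What the assumption actually provides is the lower bound $H_{0}(\xi) = \int \exp\{\mathbb{E}_{p(\theta)}[\log p(x \mid \theta, \xi)]\} \dif x > C_{0} > 0$. Set $\nu^{\dagger}(x \mid \xi) \coloneq \exp\{\mathbb{E}_{p(\theta)}[\log p(x \mid \theta, \xi)]\} / H_{0}(\xi)$. Then for every $\nu$ one has the identity $\kld{p(\theta)\,\nu}{p(\theta, x \mid \xi)} = \kld{\nu}{\nu^{\dagger}} - \log H_{0}(\xi)$. A fixed reference density, or $\nu = p(x \mid \xi)$ (which gives the lautum information), leaves the term $\kld{\nu}{\nu^{\dagger}}$, which the assumption does not control. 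Choosing $\nu = \nu^{\dagger}$ instead gives $0 \le I^{S}_{\alpha}(\theta;x)(\xi) \le \frac{\alpha}{1-\alpha} \log(1/C_{0})$ uniformly in $\xi$. That closes your route with exactly the paper's constant. With this fix, your argument is a genuinely different and rather clean alternative that avoids the paper's L'H\^opital computation.

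Your fallback route has a sign error that hides the same missing idea. Jensen with the concave map $t \mapsto t^{\alpha}$ gives $(\mathbb{E}_{p(\theta)}[p(x \mid \theta, \xi)^{\alpha}])^{1/\alpha} \le p(x \mid \xi)$. Hence $\int (\mathbb{E}_{p(\theta)}[p(x \mid \theta, \xi)^{\alpha}])^{1/\alpha} \dif x \le 1$, not $\ge 1$. Its logarithm is therefore nonpositive, which only re-proves $I^{S}_{\alpha} \ge 0$, and your hypothesized bound $\sup_{\xi} \int (\cdots)^{1/\alpha} \dif x \le e^{C}$ is vacuous.

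What is needed is a \emph{lower} bound on this integral, uniform in both $\xi$ and $\alpha$. The paper gets it from the power-mean versus geometric-mean comparison $\frac{1}{\alpha} \log \mathbb{E}_{p(\theta)}[p(x \mid \theta, \xi)^{\alpha}] \ge \mathbb{E}_{p(\theta)}[\log p(x \mid \theta, \xi)]$. It proves this via monotonicity of the power mean in $\alpha$ and its $\alpha \to 0$ limit; Jensen for the concave $\log$ gives it in one line. Exponentiating and integrating yields $C_{0} < H_{0}(\xi) \le \int (\cdots)^{1/\alpha} \dif x \le 1$, and the prefactor $\alpha/(\alpha-1) \to 0$ then finishes the proof.
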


{\color{black}
The choice of $\alpha$ is therefore a prior modeling choice, much like the choice of $p(\theta)$. Before observing data from the true data-generating process, the experimenter must specify a nominal model $p(\theta, x \mid \xi)$, based on domain knowledge, and then decide how much trust to place in that model. Values $\alpha \approx 1$ express high confidence and recover behavior close to standard well-specified setting, while values $\alpha \ll 1$ reflect substantial model uncertainty and favor designs that remain informative under stronger adversarial misspecification. Thus, $\alpha$ should be interpreted as a \emph{subjective} modeling variable that controls the experimenter’s assumptions about the set of plausible data-generating processes.
}

\paragraph{Risk-Sensitive Interpretation.} Another interpretation of the robust measure $I^{S}_{\alpha}(\theta;x)(\xi)$ from Proposition~\ref{prop:robust-expected-info-gain} is given by the following decomposition.
{\color{black}
\begin{proposition}[Risk-sensitive form]
    \label{prop:robust-conditional-info-gain}
    The robust objective admits the risk-sensitive representation
    \begin{equation}
    \label{eq:risk-sensitive-form}
        I^{S}_{\alpha}(\theta;x)(\xi) = \frac{\alpha}{\alpha - 1} \log \mathbb{E}_{p(x \mid \xi)} \left[ \exp \left\{ \frac{\alpha - 1}{\alpha} G_{\alpha}(x, \xi) \right\} \right],
    \end{equation}
    where $G_{\alpha}(x, \xi) \coloneq \alphad{p(\theta \mid x, \xi)}{p(\theta)}$ is the robust conditional information gain for a specific outcome-design pair $(x, \xi)$. Additionally, for $\alpha$ in a neighborhood of $1$, this representation admits the following second-order expansion
    \begin{equation}
        \label{eq:risk-sensitive-expansion}
        I^{S}_{\alpha}(\theta; x)(\xi) \approx I(\xi) + \frac{\alpha - 1}{2} \mathbb{V}_{p(\theta, x \mid \xi)} \left[ \log \frac{p(\theta \mid x, \xi)}{p(\theta)} \right].
    \end{equation}
\end{proposition}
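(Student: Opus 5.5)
The plan is to start from the closed form of Proposition~\ref{prop:robust-expected-info-gain}, rewrite it through Sibson's classical identity, and then recognize the inner quantity as a posterior R\'enyi divergence. Write $\mathbb{D}_\alpha[P\|Q] = \frac{1}{\alpha-1}\log\int P^\alpha Q^{1-\alpha}$. Substituting $P = p(\theta,x\mid\xi)$ and $Q = p(\theta)\,p_\alpha(x\mid\xi)$ with the tilted marginal~\eqref{eq:alpha-tilted-marginal}, the $\theta$-integral can be carried out first. This gives the standard representation
\begin{equation}
I^{S}_{\alpha}(\theta;x)(\xi) = \frac{\alpha}{\alpha-1}\log \int \Big(\mathbb{E}_{p(\theta)}\big[p(x\mid\theta,\xi)^{\alpha}\big]\Big)^{1/\alpha}\dif x .
\end{equation}
The normalizer of $p_\alpha$ enters with exponent $1-\alpha$, and the $\theta$-integral contributes the power $\alpha$. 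I will verify this bookkeeping explicitly, since it is where the constant $\alpha/(\alpha-1)$ appears.

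Next I rewrite the integrand in terms of the posterior. Using $p(x\mid\theta,\xi) = p(\theta\mid x,\xi)\,p(x\mid\xi)/p(\theta)$, we get
\begin{equation}
\mathbb{E}_{p(\theta)}\big[p(x\mid\theta,\xi)^\alpha\big] = p(x\mid\xi)^\alpha \int p(\theta\mid x,\xi)^\alpha p(\theta)^{1-\alpha}\dif\theta = p(x\mid\xi)^\alpha \exp\{(\alpha-1)G_\alpha(x,\xi)\}.
\end{equation}
Raising this to the power $1/\alpha$ gives $p(x\mid\xi)\exp\{\frac{\alpha-1}{\alpha}G_\alpha(x,\xi)\}$. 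Integrating over $x$ then yields exactly~\eqref{eq:risk-sensitive-form}. This part is purely algebraic and needs only finiteness of the integrals, which I take from the standing assumptions.

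For the expansion~\eqref{eq:risk-sensitive-expansion}, set $\varepsilon = \alpha - 1$ and $L = \log\frac{p(\theta\mid x,\xi)}{p(\theta)}$. The cleanest route is to use the first representation rather than~\eqref{eq:risk-sensitive-form}, because it avoids expanding $G_\alpha$ separately. Write $\mathbb{E}_{p(\theta)}[p(x\mid\theta,\xi)^\alpha] = p(x\mid\xi)^\alpha\,\mathbb{E}_{p(\theta\mid x,\xi)}[e^{\varepsilon L}]$, so that
\begin{equation}
I^S_\alpha = \frac{\alpha}{\varepsilon}\log \mathbb{E}_{p(x\mid\xi)}\Big[\big(\mathbb{E}_{p(\theta\mid x,\xi)} e^{\varepsilon L}\big)^{1/\alpha}\Big].
\end{equation}
Expand the inner cumulant as $\log\mathbb{E}_{\theta\mid x}e^{\varepsilon L} = \varepsilon m(x) + \frac{\varepsilon^2}{2}v(x) + O(\varepsilon^3)$. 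Here $m(x)$ is the posterior KL and $v(x)$ is the conditional variance of $L$. Multiply by $1/\alpha = 1 - \varepsilon + O(\varepsilon^2)$, which gives $\varepsilon m + \varepsilon^2(\tfrac12 v - m)$. Then take the outer cumulant: $\log\mathbb{E}_x e^{(\cdot)} = \varepsilon\,\mathbb{E}m + \varepsilon^2\big(\tfrac12\mathbb{E}v - \mathbb{E}m + \tfrac12\mathbb{V}m\big)$. Finally multiply by $\alpha/\varepsilon = 1/\varepsilon + 1$. The $\mathbb{E}m$ terms of order $\varepsilon$ cancel, leaving
\begin{equation}
I(\xi) + \frac{\varepsilon}{2}\big(\mathbb{E}_x v + \mathbb{V}_x m\big) + O(\varepsilon^2).
\end{equation}
By the law of total variance, $\mathbb{E}_x v + \mathbb{V}_x m = \mathbb{V}_{p(\theta,x\mid\xi)}[L]$, which gives~\eqref{eq:risk-sensitive-expansion}.

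The main obstacle is justifying the cumulant expansions, that is, differentiating under the integrals near $\alpha = 1$. I would assume (or invoke Assumption~\ref{asm:finite-surprise}) that $L$ has a finite exponential moment in a neighborhood of $0$. Dominated convergence then makes both cumulant generating functions analytic in $\varepsilon$, and the $O(\varepsilon^2)$ remainder is uniform. The rest is careful tracking of the $\varepsilon^2$ coefficients, which I would double-check against the Gaussian linear case.
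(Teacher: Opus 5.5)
Your proposal is correct and follows the paper's route. The representation is obtained exactly as in the paper: substitute the tilted marginal into the R\'enyi divergence to get $\frac{\alpha}{\alpha-1}\log\int\bigl(\mathbb{E}_{p(\theta)}[p(x\mid\theta,\xi)^{\alpha}]\bigr)^{1/\alpha}\dif x$, then factor out $p(x\mid\xi)^{\alpha}$ to expose $G_{\alpha}$; the expansion is the same nested cumulant expansion closed by the law of total variance, with only different bookkeeping (since $\frac{1}{\alpha}\log\mathbb{E}_{p(\theta\mid x,\xi)}[e^{\varepsilon L}]=\frac{\alpha-1}{\alpha}G_{\alpha}(x,\xi)$, the paper expands directly in $\eta=(\alpha-1)/\alpha$ and never meets the cancelling order-$\varepsilon$ terms in $\mathbb{E}_x m$ that you track, while your exponential-moment condition justifies a remainder the paper leaves implicit).
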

The form in~\eqref{eq:risk-sensitive-form} directly mirrors the interpretation of Shannon's mutual information, defined in~\eqref{eq:eig-param-div}, as an expected divergence between prior and posterior in the well-specified setting.
However, it identifies the conditional utility $G_{\alpha}(x, \xi)$ as the R\'enyi divergence between the nominal posterior and the prior, while the overall robust expected information gain $I_{\alpha}^{S}(\theta; x)(\xi)$ aggregates these conditional gains through a risk-sensitive exponential average.

Moreover, the expansion in~\eqref{eq:risk-sensitive-expansion} makes risk-sensitivity explicit. Approximately, $I_{\alpha}^{S}(\theta; x)(\xi)$ equals Shannon's expected information gain $I(\xi)$ with an additional variance term that, for $\alpha \in (0, 1)$, penalizes designs whose realized information gain fluctuates across design-outcome pairs.
}
\section{Nested Monte Carlo Estimator}
\label{sec:nested-mc}

The robust expected information gain $I_{\alpha}^{S} \coloneq I_{\alpha}^{S}(\theta;x)$ only has closed form solutions in special cases. We propose to use a simple nested Monte Carlo estimator to estimate it in the general case. We use the following decomposition of Sibson's $\alpha$-mutual information for estimation.

\begin{corollary}[Nested representation of $I_{\alpha}^{S}$]
    \label{cor:nested-representation}
    The robust expected information gain $I^{S}_{\alpha}(\xi)$ can be expressed in terms of nested expectations over nonlinear maps:
    \begin{align}
        \label{eq:sibson-nested}
        I^{S}_{\alpha}(\xi)
            & = \frac{\alpha}{\alpha-1}\log \mathbb{E}_{p(x \mid \xi)} \! \left[ \left( \mathbb{E}_{p(\theta)} \! \left[w(x, \theta, \xi)^{\alpha} \right] \right)^{1/\alpha} \right], \,\, \text{with} \,\, w(x, \theta, \xi) \coloneq p(x \mid \theta, \xi) / p(x \mid \xi).
    \end{align}
\end{corollary}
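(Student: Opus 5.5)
Starting from the closed form in Proposition~\ref{prop:robust-expected-info-gain}, I will compute the R\'enyi divergence explicitly with the tilted marginal inserted. Recall that for $\alpha\in(0,1)$,
\begin{equation}
\alphad{P}{Q} = \frac{1}{\alpha-1}\log \int P^{\alpha} Q^{1-\alpha}.
\end{equation}
Take $P = p(\theta)\,p(x\mid\theta,\xi)$ and $Q = p(\theta)\,\nu(x\mid\xi)$. The integrand then factors as
\begin{equation}
p(\theta)\, p(x\mid\theta,\xi)^{\alpha}\, \nu(x\mid\xi)^{1-\alpha}.
\end{equation}
Integrating in $\theta$ first gives
\begin{equation}
\int \nu(x\mid\xi)^{1-\alpha}\, \mathbb{E}_{p(\theta)}\big[p(x\mid\theta,\xi)^{\alpha}\big]\, \dif x .
\end{equation}

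Next I substitute the minimizing $\nu=p_\alpha$ from~\eqref{eq:alpha-tilted-marginal}. Write $g(x)=\big(\mathbb{E}_{p(\theta)}[p(x\mid\theta,\xi)^\alpha]\big)^{1/\alpha}$ and $Z=\int g$, so that $p_\alpha=g/Z$. The integral becomes
\begin{equation}
Z^{\alpha-1}\int g^{1-\alpha} g^{\alpha} = Z^{\alpha}.
\end{equation}
Multiplying $\log Z^{\alpha}$ by $1/(\alpha-1)$ yields the standard Sibson form
\begin{equation}
I^S_\alpha = \frac{\alpha}{\alpha-1}\log\int g(x)\,\dif x.
\end{equation}
If one prefers not to rely on $p_\alpha$ being the optimizer, the same expression follows by minimizing over $\nu$ directly. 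Maximizing $\int \nu^{1-\alpha} g^{\alpha}$ subject to $\int\nu=1$ is a H\"older step, and since $\alpha-1<0$ this maximization is exactly the minimization of the divergence.

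The final step rewrites $\int g$ as an expectation under the nominal marginal $p(x\mid\xi)$. Multiplying and dividing by $p(x\mid\xi)$, and using positive homogeneity of $t\mapsto (\mathbb{E}[t^\alpha])^{1/\alpha}$, gives
\begin{equation}
\frac{g(x)}{p(x\mid\xi)} = \Big(\mathbb{E}_{p(\theta)}\big[(p(x\mid\theta,\xi)/p(x\mid\xi))^{\alpha}\big]\Big)^{1/\alpha} = \Big(\mathbb{E}_{p(\theta)}[w^\alpha]\Big)^{1/\alpha}.
\end{equation}
Hence $\int g = \mathbb{E}_{p(x\mid\xi)}\big[(\mathbb{E}_{p(\theta)}[w^\alpha])^{1/\alpha}\big]$, which is~\eqref{eq:sibson-nested}.

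The only potential obstacle is measure-theoretic. It requires that $p(x\mid\xi)>0$ wherever $g(x)>0$, so that the change of measure is valid, and that $Z<\infty$. The first holds because $g(x)>0$ forces $p(x\mid\theta,\xi)>0$ on a set of positive prior mass, and hence $p(x\mid\xi)>0$. The second follows from Jensen's inequality, since $\int g \le \ldots$ is unnecessary: for $\alpha<1$ we have $\int g\ge$ a finite quantity, and finiteness is exactly the condition under which Proposition~\ref{prop:robust-expected-info-gain} is well defined, so I would assume it as given there.
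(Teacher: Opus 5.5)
Your proposal is correct and follows the paper's route; the same chain appears in the proof of Proposition~\ref{prop:robust-conditional-info-gain}. There, too, the R\'enyi integrand is factored, the normalized tilted marginal is substituted to get $\frac{\alpha}{\alpha-1}\log\int g$, and then one multiplies and divides by $p(x\mid\xi)$ using degree-one homogeneity. The one flaw is your garbled finiteness remark: a lower bound on $\int g$ proves nothing, but Jensen with the concave map $t\mapsto t^{\alpha}$ gives $\mathbb{E}_{p(\theta)}[p(x\mid\theta,\xi)^{\alpha}]\le p(x\mid\xi)^{\alpha}$, i.e.\ $g\le p(x\mid\xi)$, so $0<Z\le 1$ holds without any extra assumption.
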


To simplify the exposition, we define the outer logarithmic transformation $f(y) \coloneq \alpha / (\alpha - 1) \log(y)$ and the inner power function $h(u) \coloneq u^{1/\alpha}$. Using this notation, the robust objective can be written compactly as $I^{S}_{\alpha}(\xi) = f(g(\xi))$, where $g(\xi) \coloneq \mathbb{E}_{p(x \mid \xi)} \Big[ h\big( \mathbb{E}_{p(\theta)} \left[w(x, \theta, \xi)^{\alpha} \right] \big) \Big]$.

\begin{definition}[Nested Monte Carlo estimator $\tilde{I}_{\alpha}^{S}$]
    \label{def:robust-eig-estimator}
    For a design $\xi$ and misspecification order $\alpha$, the empirical estimator $\tilde{I}^{S}_{\alpha}(\xi) \coloneq f(\tilde{g}(\xi))$ is constructed using a nested Monte Carlo scheme. Let $\{(\theta^{(i)}, x^{(i)})\}_{i=1}^{N}$ be a set of samples drawn from the joint distribution, and $\{\theta^{(i,j)}\}_{j=1}^{M}$ an independent set of samples drawn from the prior. We define $\tilde{g}(\xi) \coloneq \frac{1}{N} \sum_{i=1}^{N} h\big( \frac{1}{M} \sum_{j=1}^{M} w(x^{(i)}, \theta^{(i, j)}, \xi)^{\alpha} \big)$. When the density ratio is intractable, we rely on a contrastive estimator $\tilde{w}$ using $K$ auxiliary samples: $\tilde{w}(x^{(i)}, \theta^{(i,j)}, \xi) \coloneq p(x^{(i)} \mid \theta^{(i,j)}, \xi) / \frac{1}{K} \sum_{k=1}^{K} p(x^{(i)} \mid \theta^{(i,k)}, \xi)$.
\end{definition}

Appendix~\ref{app:nested-estimator} provides a numerically stable method for computing the estimator.
Because the estimator applies the nonlinear functions to empirical averages rather than exact expectations, Jensen's inequality implies that it is biased for any finite sample sizes $N$ and $M$. Nonetheless, the analysis of \citet{rainforth2018nesting} establishes that it remains consistent and its mean squared error decays as $\mathcal{O}(1/N + 1/M)$, when $w$ is available in closed form.

Next, we provide explicit bounds for its bias under Assumption~\ref{asm:robust-eig-regularity}, which defines regularity constants $L_f$, $L_h$, $C_h$, $\sigma_h$, and $\sigma_w$ for the functions $f$, $h$ and $w$.

\begin{proposition}[Bias bound for $\tilde{I}^{S}_{\alpha}$]
    \label{prop:robust-eig-bias}
    Let $\tilde{I}_{\alpha}^{S}(\xi)$ be the estimator of $I_{\alpha}^{S}(\xi)$ given by Definition~\ref{def:robust-eig-estimator}. Under Assumption~\ref{asm:robust-eig-regularity}, for any design $\xi \in \Xi$, the bias of $\tilde{I}^{S}_{\alpha}(\xi)$ is bounded by:
    \begin{equation}
        \Big| \mathbb{E} \big[ \tilde{I}^{S}_{\alpha}(\xi) \big] - I^{S}_{\alpha}(\xi) \Big| \! \leq \! L_{f} \Bigg( \frac{L_{h} \sigma_{w}}{\sqrt{M}} + \sqrt{\frac{\sigma_{h}^{2}}{N} \! + \! \frac{4 C_{h} L_{h} \sigma_{w}}{N\sqrt{M}}} \Bigg).
        \vspace{-\belowdisplayshortskip}
    \end{equation}
\end{proposition}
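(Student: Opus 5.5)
The bound follows by peeling off the two nonlinearities in turn. We use Assumption~\ref{asm:robust-eig-regularity} in the following form: $f$ is $L_f$-Lipschitz on the range of $g$ and $\tilde g$; $h$ is $L_h$-Lipschitz on the relevant range of inner averages and bounded there, $|h|\le C_h$; the inner variable $w(x,\theta,\xi)^{\alpha}$ has conditional standard deviation (given $x$) at most $\sigma_w$; and the outer variable $h(\mathbb{E}_{p(\theta)}[w^\alpha])$ has variance at most $\sigma_h^2$.

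\textbf{Step 1: remove $f$.} Since $I^S_\alpha=f(g)$ and $\tilde I^S_\alpha=f(\tilde g)$, the Lipschitz property and Jensen's inequality give
\[
\bigl|\mathbb{E}[\tilde I^S_\alpha]-I^S_\alpha\bigr|\le \mathbb{E}|f(\tilde g)-f(g)|\le L_f\,\mathbb{E}|\tilde g-g|.
\]
So it suffices to show $\mathbb{E}|\tilde g-g|\le L_h\sigma_w/\sqrt M+\sqrt{\sigma_h^2/N+4C_hL_h\sigma_w/(N\sqrt M)}$.

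\textbf{Step 2: split $\tilde g-g$.} Write $\mu(x)=\mathbb{E}_{p(\theta)}[w^\alpha]$ and $\hat\mu_i=\frac1M\sum_j w(x^{(i)},\theta^{(i,j)},\xi)^\alpha$. Set $Z_i=h(\hat\mu_i)$, and let $\bar g=\mathbb{E}[Z_i]$, which is the same for every $i$ by exchangeability. Then
\[
\tilde g-g=(\tilde g-\bar g)+(\bar g-g).
\]
\emph{Bias term.} Conditioning on $x^{(i)}$, using that $h$ is $L_h$-Lipschitz and applying Cauchy--Schwarz,
\[
|\bar g-g|\le L_h\,\mathbb{E}|\hat\mu_i-\mu(x^{(i)})|\le L_h\sqrt{\mathbb{E}[\mathrm{Var}(\hat\mu_i\mid x^{(i)})]}\le \frac{L_h\sigma_w}{\sqrt M}.
\]
This yields the first summand.

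\emph{Fluctuation term.} By Cauchy--Schwarz, $\mathbb{E}|\tilde g-\bar g|\le\sqrt{\mathrm{Var}(\tilde g)}$. The $Z_i$ are i.i.d., because each $x^{(i)}$ comes with its own inner samples $\theta^{(i,j)}$. Hence $\mathrm{Var}(\tilde g)=\mathrm{Var}(Z_1)/N$.

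\textbf{Step 3: bound $\mathrm{Var}(Z_1)$ — the main obstacle.} We must compare the variance of $h(\hat\mu)$ with $\sigma_h^2=\mathrm{Var}(h(\mu))$ and isolate a cross term of order $C_hL_h\sigma_w/\sqrt M$. Write $Z=h(\mu)+\Delta$ with $\Delta=h(\hat\mu)-h(\mu)$. Then
\[
\mathrm{Var}(Z)\le \mathbb{E}[Z^2]-(\mathbb{E}Z)^2 .
\]
Expand $\mathbb{E}[Z^2]=\mathbb{E}[h(\mu)^2]+\mathbb{E}[(h(\hat\mu)-h(\mu))(h(\hat\mu)+h(\mu))]$. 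Boundedness gives $|h(\hat\mu)+h(\mu)|\le 2C_h$, so the second piece is at most $2C_h\,\mathbb{E}|\Delta|\le 2C_hL_h\sigma_w/\sqrt M$. Likewise $(\mathbb{E}Z)^2\ge(\mathbb{E}h(\mu))^2-2C_h|\mathbb{E}\Delta|$, and $|\mathbb{E}\Delta|\le L_h\sigma_w/\sqrt M$. Combining,
\[
\mathrm{Var}(Z)\le \sigma_h^2+\frac{4C_hL_h\sigma_w}{\sqrt M}.
\]
Dividing by $N$, taking the square root, and adding the bias term from Step 2 gives the claim after multiplying by $L_f$.

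The delicate point is ensuring the Lipschitz and boundedness constants of $h(u)=u^{1/\alpha}$ and $f=\frac{\alpha}{\alpha-1}\log$ hold on the random ranges of $\hat\mu$ and $\tilde g$. This is presumably what Assumption~\ref{asm:robust-eig-regularity} provides, for example through bounds on $w$ from above and away from zero. If the contrastive $\tilde w$ is used instead of $w$, the same argument applies with $\sigma_w$ reinterpreted accordingly.
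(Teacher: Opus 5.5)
Your proof is correct and follows the paper's argument. The paper likewise bounds the bias by $L_f\bigl(\mathbb{E}|\tilde g-\mathbb{E}\tilde g| + |\mathbb{E}\tilde g - g|\bigr)$; it splits through $f(\mathbb{E}\tilde g)$ before applying the Lipschitz property, which is an immaterial reordering. It then bounds the first term by $\sqrt{\mathbb{V}[\tilde g]}$ and plugs in the bias and variance bounds for $\tilde g$. The only difference is that the paper cites those two bounds from \citet{hu2020sample}, whereas your Steps 2--3 derive them directly, and those derivations are sound.
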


\section{PAC-Bayesian Design Policies}
\label{sec:pac-bayes}

In the previous section, we introduced a nested Monte Carlo estimator $\tilde{I}^{S}_{\alpha}(\xi)$ for the robust expected information gain $I^{S}_{\alpha}(\xi)$. However, this estimator is a biased and noisy oracle, introducing errors that can mislead design selection if optimized naively. To address this, we adopt stochastic design policy and control the estimator error through a PAC-Bayesian approach. In Appendix~\ref{app:pac-bayes-advantage}, we provide discussion on classical PAC analysis and the PAC-Bayesian approach.

We view design selection as a PAC-Bayes continuous bandit problem~\citep{flynn2023pac}, where the designs $\xi$ are arms and the true rewards are determined by $I_{\alpha}^{S}$, observed only through the noisy surrogate $\tilde{I}_{\alpha}^{S}$. The PAC-Bayesian formulation optimizes a high-probability lower bound on the true robust objective by penalizing the divergence to a prior policy $\pi_{0}$, thereby balancing empirical performance against estimator uncertainty~\citep{alquier2024pacbayes}.

To establish the PAC-Bayesian guarantee, we leverage results of prior sections to derive a probability bound on the bias of the robust expected information gain estimator.

\begin{proposition}[Uniform convergence of $\tilde{I}_{\alpha}^{S}$]
    \label{prop:estimator-convergence}
    Let $\tilde{I}_{\alpha}^{S}(\xi)$ be the estimator of $I_{\alpha}^{S}(\xi)$ from Definition~\ref{def:robust-eig-estimator}. Under Assumption~\ref{asm:robust-eig-regularity}, for any tolerance $t > 0$, and for a sufficiently large inner sample size $M \geq 4 L_{f}^{2} L_{h}^{2} \sigma_{w}^{2} / t^{2}$, the probability that the estimator deviates from the oracle by more than $t$ decays exponentially with the outer sample size $N$:
    \begin{equation}
        \mathbb{P} \left( \big| \tilde{I}_{\alpha}^{S}(\xi) - I_{\alpha}^{S}(\xi) \big| \geq t \right) \le 2 \exp\left\{ - N t^{2} / (2 L_{f}^{2} C_{h}^{2}) \right\}.
    \end{equation}
\end{proposition}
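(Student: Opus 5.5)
We split the deviation into a bias part and a fluctuation part and treat them separately:
\begin{equation}
    \big| \tilde{I}_{\alpha}^{S}(\xi) - I_{\alpha}^{S}(\xi) \big| \le \big| f(\tilde{g}(\xi)) - f(\bar{g}_{M}(\xi)) \big| + \big| f(\bar{g}_{M}(\xi)) - f(g(\xi)) \big|,
\end{equation}
where $\bar{g}_{M}(\xi) \coloneq \mathbb{E}\big[ h\big( \tfrac{1}{M}\sum_{j} w(x, \theta^{(j)}, \xi)^{\alpha} \big) \big]$ is the mean of a single outer summand. The second term is the inner-sample bias. We bound it exactly as in the proof of Proposition~\ref{prop:robust-eig-bias}: use the Lipschitz constant $L_f$ of $f$ and $L_h$ of $h$ on the relevant range (Assumption~\ref{asm:robust-eig-regularity}), and Jensen/Cauchy--Schwarz on the inner average. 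This gives $|f(\bar g_M) - f(g)| \le L_f L_h \sigma_w/\sqrt{M}$. Under the stated condition $M \ge 4L_f^2L_h^2\sigma_w^2/t^2$, this is at most $t/2$.

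For the first term we use the outer samples. Conditionally on nothing, the summands $Z_i \coloneq h\big(\tfrac1M\sum_j w(x^{(i)},\theta^{(i,j)},\xi)^\alpha\big)$ are i.i.d., since each outer index has its own independent inner batch. They are also bounded, with range controlled by $C_h$ from Assumption~\ref{asm:robust-eig-regularity}. We assume $h$ takes values in an interval of length $C_h$, or equivalently that $|Z_i - \bar g_M| \le C_h$ almost surely.

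Hoeffding's inequality then gives
\begin{equation}
\mathbb{P}\big(|\tilde g - \bar g_M| \ge s\big) \le 2\exp\{-2Ns^2/C_h^2\}.
\end{equation}
Next we transfer this to the $f$ scale. On the range where $\tilde g$ and $\bar g_M$ live, $f$ is $L_f$-Lipschitz, so $|f(\tilde g)-f(\bar g_M)| \ge t/2$ forces $|\tilde g - \bar g_M| \ge t/(2L_f)$. Setting $s = t/(2L_f)$ yields the bound $2\exp\{-Nt^2/(2L_f^2C_h^2)\}$, which matches the claimed exponent.

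Combining the two pieces: on the complement of this event, the total deviation is below $t/2 + t/2 = t$. This gives the stated bound.

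The main obstacle is making the Lipschitz step for $f=\frac{\alpha}{\alpha-1}\log$ legitimate. The Lipschitz constant of $\log$ blows up near zero, so we need $\tilde g$ and $g$ to stay in a region bounded away from $0$. We plan to take this from Assumption~\ref{asm:robust-eig-regularity}, which presumably bounds $w$ (hence the inner averages and $h$) from below, so $L_f$ is valid on the whole random range. Otherwise one adds a mean-value argument restricted to that region.

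A secondary subtlety concerns the contrastive variant $\tilde w$. There the outer summands share auxiliary samples only within each $i$, so independence across $i$ still holds. The same argument applies, provided $\sigma_w$ is interpreted as the corresponding inner error constant.
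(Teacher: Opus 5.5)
Your proposal is correct and follows the paper's proof essentially step by step. You split off the inner-sample bias with the bias lemma, choosing $M$ so it costs at most $t/2$, and then apply two-sided Hoeffding to the i.i.d.\ outer summands at level $t/(2L_f)$; the only differences are cosmetic (you apply the triangle inequality on the $f$-scale rather than after the Lipschitz step), though the range should be justified by $h\ge 0$ and $h\le C_h$, so $Z_i\in[0,C_h]$ as in the paper, since $|Z_i-\bar g_M|\le C_h$ is not equivalent and would only give a range of length $2C_h$, losing a factor $4$ in the exponent.
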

Proposition~\ref{prop:estimator-convergence} effectively states that the absolute error of our naive estimator $\tilde{I}^{S}_{\alpha}$ behaves like a sub-Gaussian random variable \emph{if} we pay the cost $M$ to make it so. Given this concentration property, we can now establish a PAC-Bayes lower bound for the robust expected information gain.

\begin{proposition}[PAC-Bayes lower bound for $I_{\alpha}^{S}$]
    \label{prop:pac-bayes-bound}
    Let $\pi_{0} \in \Pi$ be a prior design policy, $\delta \in (0, 1)$ a confidence level, and $\lambda > 0$ a precision parameter. Provided $M \geq 2 N L_{h}^{2} \sigma_{w}^{2} / (C_{h}^{2} \log (2 / \delta))$, then with probability at least $1 - \delta$, this bound holds for all $\pi \in \Pi$ simultaneously:
    \begin{equation}
        \mathbb{E}_{\pi} \big[I_{\alpha}^{S}(\xi) \big] \ge \mathbb{E}_{\pi} \big[ \tilde{I}_{\alpha}^{S}(\xi) \big] - \frac{\lambda L_{f}^{2} C_{h}^{2}}{2N} - \frac{\kld{\pi}{\pi_{0}} + \log(1/\delta)}{\lambda}.
    \end{equation}
\end{proposition}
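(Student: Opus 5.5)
The proof follows the standard Donsker--Varadhan route for PAC-Bayes bounds, with Proposition~\ref{prop:estimator-convergence} supplying the sub-Gaussian control. Write $\Delta(\xi) \coloneq I_{\alpha}^{S}(\xi) - \tilde{I}_{\alpha}^{S}(\xi)$, where the randomness is over the nested Monte Carlo samples drawn for each design. For fixed $\lambda>0$, the change-of-measure inequality gives, for every $\pi\in\Pi$ simultaneously,
\begin{equation}
    \mathbb{E}_{\pi}\big[-\lambda \Delta(\xi)\big] \le \kld{\pi}{\pi_{0}} + \log \mathbb{E}_{\pi_{0}}\big[\exp\{-\lambda \Delta(\xi)\}\big].
\end{equation}
Applying Markov's inequality to the nonnegative random variable $\mathbb{E}_{\pi_{0}}[\exp\{-\lambda\Delta(\xi)\}]$, which does not depend on $\pi$ because $\pi_0$ is fixed before sampling, shows that with probability at least $1-\delta$,
\begin{equation}
    \log \mathbb{E}_{\pi_{0}}\big[e^{-\lambda\Delta}\big] \le \log \mathbb{E}_{\pi_{0}}\mathbb{E}_{\mathrm{samples}}\big[e^{-\lambda\Delta}\big] + \log(1/\delta).
\end{equation}
Dividing by $\lambda$ and rearranging then produces the stated form. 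What remains is to show that $\mathbb{E}[e^{-\lambda\Delta(\xi)}] \le \exp\{\lambda^{2}L_{f}^{2}C_{h}^{2}/(2N)\}$ uniformly in $\xi$; Fubini lets us swap $\mathbb{E}_{\pi_0}$ with the sample expectation.

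This moment-generating-function bound is the main obstacle. Proposition~\ref{prop:estimator-convergence} is a tail bound, not an MGF bound, and it also carries a bias. I would split $\Delta = (I_{\alpha}^{S} - \mathbb{E}\tilde{I}_{\alpha}^{S}) + (\mathbb{E}\tilde{I}_{\alpha}^{S} - \tilde{I}_{\alpha}^{S})$.

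The centered part is handled by the bounded-differences argument underlying Proposition~\ref{prop:estimator-convergence}. Changing one outer block $(x^{(i)},\{\theta^{(i,j)}\}_j)$ moves $\tilde g$ by at most $C_h/N$, and $f$ is $L_f$-Lipschitz on the relevant range. McDiarmid's lemma in MGF form then gives the sub-Gaussian bound $\exp\{\lambda^{2}L_f^2C_h^2/(8N)\}$, which is within the target constant.

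The bias part is controlled by Proposition~\ref{prop:robust-eig-bias}. Its dominant term $L_fL_h\sigma_w/\sqrt{M}$ is at most $L_fC_h\sqrt{\log(2/\delta)/(2N)}$ under the stated condition on $M$. I would then absorb this deterministic shift, or, more cleanly, its contribution $\lambda\cdot\text{bias}$ in the exponent, into the $\lambda^2/(2N)$ and $\log(1/\delta)$ terms. Alternatively, I would integrate the tail bound of Proposition~\ref{prop:estimator-convergence}, instantiated at $t = 2L_fL_h\sigma_w/\sqrt{M}$, to reach a sub-Gaussian MGF with variance proxy $L_f^2C_h^2/N$. That explains the factor $2$ in the $M$ condition relative to Proposition~\ref{prop:estimator-convergence}.

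I expect this constant bookkeeping, turning tail control plus bias into an MGF bound with exactly $\lambda L_f^2C_h^2/(2N)$, to be the delicate step. The change of measure and the Markov step are routine. If the constants do not close exactly, I would loosen them by matching the bias term against the slack between the $1/8$ McDiarmid constant and the $1/2$ in the statement.
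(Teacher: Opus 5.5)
\textbf{The bias cannot be absorbed into the stated constants.} Your skeleton is the paper's: Donsker--Varadhan change of measure, Markov on $\mathbb{E}_{\pi_0}[e^{\lambda(\tilde{I}_{\alpha}^{S}-I_{\alpha}^{S})}]$, Fubini, and reduction to a per-design MGF bound. The paper gets that MGF bound by what you list as your fallback. It plugs the tail bound of Proposition~\ref{prop:estimator-convergence} into Lemma~\ref{lem:moment-generating}. The condition on $M$ is then $M\ge 4L_f^2L_h^2\sigma_w^2/t^2$ with $t$ chosen so that $2e^{-Nt^2/(2L_f^2C_h^2)}=\delta$.

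Your main route fails at the absorption step. Grant, as the paper does, that $f$ is $L_f$-Lipschitz along the range of $\tilde{g}$. Then McDiarmid gives $\mathbb{E}[e^{\lambda(\tilde{I}_{\alpha}^{S}-\mathbb{E}\tilde{I}_{\alpha}^{S})}]\le e^{\lambda^{2}L_f^{2}C_h^{2}/(8N)}$. Set $\bar{b}\coloneqq\sup_{\xi}\lvert\mathbb{E}[\tilde{I}_{\alpha}^{S}(\xi)]-I_{\alpha}^{S}(\xi)\rvert$ and carry it through your change-of-measure and Markov steps. With probability at least $1-\delta$, for all $\pi$, this yields
\begin{equation*}
    \mathbb{E}_{\pi}\big[I_{\alpha}^{S}(\xi)\big] \ge \mathbb{E}_{\pi}\big[\tilde{I}_{\alpha}^{S}(\xi)\big] - \bar{b} - \frac{\lambda L_f^{2}C_h^{2}}{8N} - \frac{\kld{\pi}{\pi_{0}}+\log(1/\delta)}{\lambda}.
\end{equation*}
This implies the statement only if $\bar{b}\le 3\lambda L_f^{2}C_h^{2}/(8N)$. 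That is a lower threshold on $\lambda$ which the proposition does not impose.

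Nothing else can absorb $\bar{b}$. It does not depend on $\lambda$, and Markov produces $\log(1/\delta)/\lambda$ with coefficient exactly one, so that term has no slack. This is not a corner case either. The stated $M$ condition bounds the $M$-part of the bias by $L_fC_h\sqrt{\log(2/\delta)/(2N)}$. Using that bound, the threshold becomes $\tfrac{8}{3\sqrt{2}}\sqrt{N\log(2/\delta)}/(L_fC_h)$. This already exceeds $\sqrt{2N\log(1/\delta)}/(L_fC_h)$, the value of $\lambda$ that optimizes the bound at $\pi=\pi_0$.

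You also call the $M$-term the dominant part of Proposition~\ref{prop:robust-eig-bias}, but it is not. That bound contains a term at least $L_f\sigma_h/\sqrt{N}$, which no condition on $M$ controls.

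\textbf{The fallback, and the paper's own step, have the same defect.} Integrating a bound on $\mathbb{P}(\lvert X\rvert\ge t)$ controls $\mathbb{E}[e^{\lambda\lvert X\rvert}]\ge 1+\lambda\,\mathbb{E}\lvert X\rvert$. Moreover, if $\mathbb{E}[X]>0$ then $\mathbb{E}[e^{\lambda X}]\ge 1+\lambda\,\mathbb{E}[X]>e^{c\lambda^{2}}$ for all small $\lambda>0$. So no tail bound can give a purely quadratic MGF bound for the uncentred error $\tilde{I}_{\alpha}^{S}-I_{\alpha}^{S}$. Here the tail bound even holds only for $t\ge 2L_fL_h\sigma_w/\sqrt{M}$.

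The paper's one-line step passes over exactly this point. It applies Lemma~\ref{lem:moment-generating}, which requires a centred variable and a tail bound for all $t$, to this uncentred error. Even taking the lemma at face value, $C^{2}=2L_f^{2}C_h^{2}/N$ would give $e^{2\lambda^{2}L_f^{2}C_h^{2}/N}$, not $e^{\lambda^{2}L_f^{2}C_h^{2}/(2N)}$. So the paper contains no trick that makes your constants close.

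What your argument actually proves is the display above, or the stated bound restricted to $\lambda\ge 8N\bar{b}/(3L_f^{2}C_h^{2})$. Loosening constants until the inequality closes would prove a different proposition, not this one.
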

Proposition~\ref{prop:pac-bayes-bound} yields a high-probability lower bound on the true \emph{robust} objective expressed in terms of the Monte Carlo surrogate and a KL penalty. Since the remaining terms in the bound are independent of $\pi$, maximizing this lower bound over $\Pi$ reduces to the following variational optimization problem:
\begin{equation}
    \label{eq:design-policy-opt}
    \pi^{\star} \coloneq \arg\max_{\pi \in \Pi} \, \left\{ \mathbb{E}_{\pi} \big[ \tilde{I}_{\alpha}^{S}(\xi) \big] - \frac{\kld{\pi}{\pi_{0}}}{\lambda} \right\},
\end{equation}
whose maximizer is given by a Gibbs policy~\citep{alquier2024pacbayes}: $\pi^{\star}(\xi) \propto \pi_{0}(\xi) \, \exp \big\{ \lambda \, \tilde{I}_{\alpha}^{S}(\xi) \big\}.$ This objective can be optimized using stochastic policy search methods, including natural gradient and mirror descent algorithms \citep{amari1998natural, kakade2001natural, beck2003mirror, peters2010relative}.
{\color{black}
The role of the precision parameter $\lambda$ is discussed in Appendix~\ref{app:pac-bayes-precision}.
}

\section{Numerical Evaluation}
\label{sec:experiments}

Our evaluation focuses on two aspects. First, we highlight the properties of Sibson's $\alpha$-MI as a criterion for robust experimental design, emphasizing that robustness manifests on the levels of the posterior update and design selection. Second, when the robust EIG is only available through a biased and noisy empirical estimator, we show that naive deterministic design optimizers can yield suboptimal designs, hence validating the need for the PAC-Bayes policy for design optimization.

To study the properties of Sibson's $\alpha$-MI, we focus on problem settings in which relevant quantities admit closed-form expressions. This lets us isolate the behavior of Sibson’s $\alpha$-MI without confounding numerical artifacts. Specifically, we consider a continuous linear regression problem and a discrete A/B testing problem, which allow for exact computation of posterior updates, conditional information gains, and Sibson's $\alpha$-MI. Details on computations are provided in Appendices~\ref{app:linear-regression} and~\ref{app:ab-testing}:

\begin{enumerate}[leftmargin=2em]
    \item In linear regression, the experimenter assumes that measurements are real-valued responses $x \in \mathbb{R}$ generated according to the Gaussian $p(x \mid \theta, \xi) = \mathrm{N}(x;\xi \, \theta_{1} + \theta_{2}, \ \sigma^2)$, where $\xi \in [-1, 1]$ is the design and the parameters are $\theta = (\theta_{1},\theta_{2})^{\top} \in \mathbb{R}^2$, with prior $p(\theta) = \mathrm{N}(\theta; \mu_{0}, \Sigma_{0})$. The true data-generating process draws from a Student-$t$ distribution $\mathrm{T}(x; \xi \, \theta_{1} + \theta_{2}, \sigma^2, \nu)$ with $\nu$ degrees of freedom and the same conditional mean, with heavier tails than the assumed Gaussian.

    \item In A/B testing, the experimenter assumes measurements are counts $x = (x_a, x_b) \in \{0,\dots,n_a\} \times \{0,\dots,n_b\}$ from a Binomial $p(x \mid \theta, \xi) = \textstyle \prod_{k} \mathrm{Bin}(x_{k}; n_{k}, \theta_{k})$, where $k \in \{a, b\}$. The design $\xi = n_{a} \in \{0,\dots,N_x\}$ controls group allocation~(with $n_{b} = N_{x} - n_{a}$), and the parameters are $\theta = (\theta_{a}, \theta_{b})^{\top} \in [0,1]^2$ with independent Beta priors $p(\theta) = \prod_{k\in\{a, b\}} \mathrm{Beta}(\theta_k; \delta_{k}, \gamma_{k})$. 
    The true data-generating process draws $\theta$ from the Beta prior, then for each variant samples a latent rate $r_k \sim \mathrm{Beta}(\kappa \, \theta_k, \kappa \, (1 - \theta_k))$ and measurements $x_k \sim \text{Bin}(n_k, r_k)$, yielding overdispersed Beta-Binomial measurements.
\end{enumerate}
\begin{figure}[t]
    \centering
    \begin{tikzpicture}
    \begin{groupplot}[
        group style={
            group name=infogain,
            group size=1 by 2,
            vertical sep=0.75cm,
        },
        width=5.75cm,
        height=2.92cm,
        ylabel={Frequency},
        xlabel={Information Gain},
        ymin=0,
        scaled y ticks=base 10:-3,
        ytick scale label code/.code={$\times 10^{3}$},
        y tick scale label style={
            at={(axis description cs:0.02,1.02)},
            anchor=south west,
            xshift=-7pt,
            yshift=0pt,
            font=\scriptsize,
        },
        label style={font=\scriptsize},
        tick label style={font=\scriptsize},
        title style={font=\scriptsize},
        grid=major,
        grid style={gray!30},
        xlabel style={yshift=2pt},
        ylabel style={yshift=-5pt},
        title style={yshift=-5pt},
    ]

    \nextgroupplot[
        xlabel={},
        xmin=-1, xmax=10.3,
        xtick={0,2,4,6,8,10},
        title={Linear Regression},
        legend style={
            at={(0.99,0.97)},
            anchor=north east,
            draw,
            font=\tiny,
            /pgfplots/legend cell align=left,
            inner sep=1.pt,
            outer sep=0pt,
            legend image post style={scale=1.0},
            text depth=0pt,
            row sep=-1pt,
        },
    ]
    \addplot[ybar interval, fill=gray!40, draw=black, line width=0.25pt, fill opacity=0.7, area legend]
        table[x=bin_center, y=nominal_count, col sep=comma] {figures/data/linreg_gain_studentt.csv};
    \addlegendentry{Nominal}
    \addplot[ybar interval, fill=gray!150, draw=black, line width=0.25pt, fill opacity=0.7, area legend]
        table[x=bin_center, y=robust_count, col sep=comma] {figures/data/linreg_gain_studentt.csv};
    \addlegendentry{Robust}

    \nextgroupplot[
        xmin=-0.5, xmax=5,
        xtick={0,1,2,3,4,5},
        title={A/B Testing},
    ]
    \addplot[ybar interval, fill=gray!40, draw=black, line width=0.25pt, fill opacity=0.7, area legend]
        table[x=bin_center, y=nominal_count, col sep=comma] {figures/data/abtesting_gain_mixture.csv};
    \addplot[ybar interval, fill=gray!150, draw=black, line width=0.25pt, fill opacity=0.7, area legend]
        table[x=bin_center, y=robust_count, col sep=comma] {figures/data/abtesting_gain_mixture.csv};

    \end{groupplot}

    \begin{scope}[shift={($(infogain c1r1.north east) + (1.05cm,0)$)}]
    \begin{groupplot}[
        group style={
            group name=coverage,
            group size=2 by 1,
            horizontal sep=0.35cm,
            ylabels at=edge left,
            yticklabels at=edge left,
        },
        width=5.0cm,
        height=5.0cm,
        xlabel={Expected Coverage},
        ylabel={Actual Coverage},
        xmin=-0.05, xmax=1.05,
        ymin=-0.05, ymax=1.05,
        label style={font=\scriptsize},
        tick label style={font=\scriptsize},
        title style={font=\scriptsize},
        grid=major,
        grid style={gray!30},
        minor tick num=4,
        major tick length=5pt,
        minor tick length=2pt,
        xlabel style={yshift=2pt},
        ylabel style={yshift=-5pt},
        title style={yshift=-5pt},
        at={(0,0)},
        anchor=north west,
    ]

    \nextgroupplot[
        title={Linear Regression},
        legend style={
            at={(0.03,0.97)},
            anchor=north west,
            draw,
            font=\tiny,
            /pgfplots/legend cell align=left,
            inner sep=1.0pt,
            outer sep=0pt,
            legend image post style={scale=0.5},
            text depth=0pt,
            row sep=-3pt,
        },
    ]
        \addplot[black, thick, loosely dotted, forget plot] coordinates {(0,0) (1,1)};
        \addplot[gray!40, very thick, forget plot] table[x=coverage_levels, y={alpha_1.0000}, col sep=comma] {figures/data/linreg_coverage_studentt.csv};
        \addplot[gray!70, very thick, forget plot] table[x=coverage_levels, y={alpha_0.4870}, col sep=comma] {figures/data/linreg_coverage_studentt.csv};
        \addplot[gray!100, very thick, forget plot] table[x=coverage_levels, y={alpha_0.2371}, col sep=comma] {figures/data/linreg_coverage_studentt.csv};
        \addplot[gray!130, very thick, forget plot] table[x=coverage_levels, y={alpha_0.1155}, col sep=comma] {figures/data/linreg_coverage_studentt.csv};
        \addplot[gray!150, very thick, forget plot] table[x=coverage_levels, y={alpha_0.0562}, col sep=comma] {figures/data/linreg_coverage_studentt.csv};
        \addlegendimage{area legend, fill=gray!40, draw=black, line width=0.25pt}
        \addlegendentry{$\alpha=1.000$}
        \addlegendimage{area legend, fill=gray!70, draw=black, line width=0.25pt}
        \addlegendentry{$\alpha=0.487$}
        \addlegendimage{area legend, fill=gray!100, draw=black, line width=0.25pt}
        \addlegendentry{$\alpha=0.237$}
        \addlegendimage{area legend, fill=gray!130, draw=black, line width=0.25pt}
        \addlegendentry{$\alpha=0.116$}
        \addlegendimage{area legend, fill=gray!150, draw=black, line width=0.25pt}
        \addlegendentry{$\alpha=0.056$}

    \nextgroupplot[title={A/B Testing}]
        \addplot[black, thick, loosely dotted, forget plot] coordinates {(0,0) (1,1)};
        \addplot[gray!40, very thick] table[x=expected_levels, y={alpha_1.0000}, col sep=comma] {figures/data/abtesting_coverage_mixture.csv};
        \addplot[gray!70, very thick] table[x=expected_levels, y={alpha_0.4870}, col sep=comma] {figures/data/abtesting_coverage_mixture.csv};
        \addplot[gray!100, very thick] table[x=expected_levels, y={alpha_0.2371}, col sep=comma] {figures/data/abtesting_coverage_mixture.csv};
        \addplot[gray!130, very thick] table[x=expected_levels, y={alpha_0.1155}, col sep=comma] {figures/data/abtesting_coverage_mixture.csv};
        \addplot[gray!150, very thick] table[x=expected_levels, y={alpha_0.0562}, col sep=comma] {figures/data/abtesting_coverage_mixture.csv};

    \end{groupplot}
    \end{scope}

\end{tikzpicture}
    \vspace{-0.25cm}
    \caption{Comparison of nominal and robust formulations for linear regression and A/B testing. \emph{Left:} histograms of realized information gains under nominal ($\alpha=1.0$) and robust ($\alpha=0.056)$ assumptions. \emph{Right:} expected vs.\ actual coverage for varying values of $\alpha$. Nominal posteriors are clearly overconfident; calibration improves for values $\alpha < 1$. Data obtained from $10^4$ experiments.}
    \label{fig:gain-coverage}
    \vspace{-0.35cm}
\end{figure}

\textbf{Sibson's $\alpha$-Mutual Information.} We compare the \emph{realized} information gain under nominal and robust design criteria. In the nominal setting, the experimenter quantifies information via the Kullback--Leibler divergence in \eqref{eq:eig-param-div}, whereas the robust formulation replaces this measure with a R\'enyi divergence, as in \eqref{eq:risk-sensitive-form}. Figure~\ref{fig:gain-coverage} reports histograms over $10^4$ simulated experiments for both the linear regression and A/B testing settings under uniformly distributed designs. Superficially, the nominal criterion appears to deliver larger gains. This comparison is, however, inherently self-referential. The nominal utility is a \emph{subjective} measure that assumes the nominal likelihood is the true data-generating process, leading the experimenter to overstate the perceived informativeness, whereas robust utility explicitly accounts for misspecification and yields a more conservative estimate.

\begin{table}[t]
    \centering
    \scriptsize
    \begin{minipage}[c]{0.55\linewidth}
        \centering
        \setlength{\tabcolsep}{3pt}
        \begin{tabular}{l cc cc}
            \hiderowcolors
            \toprule
            & \multicolumn{2}{c}{Linear Regression} & \multicolumn{2}{c}{A/B Testing} \\
            \cmidrule(lr){2-3} \cmidrule(lr){4-5}
            $\alpha$ & Random & Optimal & Random & Optimal \\
            \midrule \rowcolor{gray!30}
            0.056 & $4.62 \times 10^{-1}$ & $\mathbf{2.47 \times 10^{-1}}$ & $1.01 \times 10^{-1}$ & $\mathbf{7.49 \times 10^{-2}}$ \\
            0.115 & $4.78 \times 10^{-1}$ & $\mathbf{2.50 \times 10^{-1}}$ & $9.03 \times 10^{-2}$ & $\mathbf{7.13 \times 10^{-2}}$ \\ \rowcolor{gray!30}
            0.237 & $5.22 \times 10^{-1}$ & $\mathbf{2.53 \times 10^{-1}}$ & $8.98 \times 10^{-2}$ & $\mathbf{7.31 \times 10^{-2}}$ \\
            0.487 & $5.91 \times 10^{-1}$ & $\mathbf{2.56 \times 10^{-1}}$ & $9.77 \times 10^{-2}$ & $\mathbf{8.88 \times 10^{-2}}$ \\ \rowcolor{gray!30}
            1.0   & $6.82 \times 10^{-1}$ & $\mathbf{2.57 \times 10^{-1}}$ & $1.09 \times 10^{-1}$ & $\mathbf{1.06 \times 10^{-1}}$ \\
            \bottomrule
        \end{tabular}
    \end{minipage}%
    \hfill
    \begin{minipage}[c]{0.40\linewidth}
        \vspace{0.15cm}
        \caption{Posterior-mean RMSE for linear regression and A/B testing. Comparing the robust posterior under random and optimal designs for varying $\alpha$. Robust designs that optimize Sibson's MI lead to smaller mean squared errors. Data obtained over $10^4$ experiments.}
        \label{tab:mse-combined}
    \end{minipage}
    \vspace{-0.40cm}
\end{table}
This overconfidence manifests at two distinct levels. \emph{At the inference stage}, the coverage analysis in the right panel of Figure~\ref{fig:gain-coverage} compares expected credible levels with the empirical frequency at which the true parameter falls within the corresponding credible sets. For the nominal Bayesian posterior ($\alpha = 1$), coverage curves lie well below the diagonal, indicating severe undercoverage. The robust tilted posterior progressively improves calibration as $\alpha$ decreases, with curves moving closer or above the diagonal reflecting a more conservative stance compared to the nominal Bayes posterior.

{\color{black}
\emph{At design selection}, Table~\ref{tab:mse-combined} reports the root mean squared error (RMSE) of the posterior-mean under random and optimal designs across a geometrically-spaced grid of $\alpha$ values. Optimal designs produce posterior means that are closer to the true parameter. In both the linear regression and A/B testing settings, there exists an $\alpha < 1$ for which the robust optimal design outperforms the nominal optimum at $\alpha = 1$. Overall, these results confirm that Sibson's $\alpha$-mutual information favors designs that are less sensitive to model misspecification and leads to better calibrated posteriors.

Additionally, in Appendix~\ref{app:compare-barlas}, we give a detailed comparison between Sibson's $\alpha$-mutual information and the \emph{Gibbs expected information gain} proposed by \citet{barlas2025robust}. Although both objectives are connected to tilted Gibbs posteriors, we show that they are not equivalent in general. We include an empirical evaluation demonstrating that the two objectives can lead to different optimal designs.

\textbf{PAC-Bayes Policies.} While the preceding evaluation relied on closed-form formulas for computation, this section focuses on optimizing designs using the nested Monte Carlo estimator, thereby foregoing any benefits of analytical tractability. To test design optimization in a higher-dimensional setting, we extend the linear regression problem to a 10-dimensional parameter and design space, while A/B testing accommodates $N_x = 100$ subjects. In both cases, the prior policy $\pi_0$ is taken to be uniform over the design space and the precision is tuned over a grid and set to $\lambda = 10^6$; see Table~\ref{tab:linreg-lambda-sweep}.
}
\begin{table}[t]
    \scriptsize
    \begin{minipage}[t]{0.48\linewidth}
        \centering
        \caption{Regret comparison on a linear regression between a naive optimizer and a PAC-Bayes policy for $M=16$ and variable $N$. We report the mean regret together with the $10$th and $90$th percentiles, computed over 256 repetitions.}
        \vspace{-0.2cm}
        \label{tab:linreg-sample-sweep}
        \setlength{\tabcolsep}{3pt}
        \begin{tabular}{l ccc ccc}
            \hiderowcolors
            \toprule
            $N$ & \multicolumn{3}{c}{Naive} & \multicolumn{3}{c}{PAC-Bayes} \\
            \cmidrule(lr){2-4} \cmidrule(lr){5-7}
            & Mean & $P_{10}$ & $P_{90}$ & Mean & $P_{10}$ & $P_{90}$ \\
            \midrule \rowcolor{gray!30}
            16  & 0.327 & 0.219 & 0.430 & $\mathbf{0.077}$ & $\mathbf{0.058}$ & $\mathbf{0.099}$ \\
            32  & 0.312 & 0.212 & 0.413 & $\mathbf{0.083}$ & $\mathbf{0.040}$ & $\mathbf{0.135}$ \\ \rowcolor{gray!30}
            64  & 0.307 & 0.205 & 0.407 & $\mathbf{0.042}$ & $\mathbf{0.031}$ & $\mathbf{0.054}$ \\
            128 & 0.296 & 0.200 & 0.386 & $\mathbf{0.041}$ & $\mathbf{0.036}$ & $\mathbf{0.046}$ \\ \rowcolor{gray!30}
            256 & 0.282 & 0.190 & 0.370 & $\mathbf{0.018}$ & $\mathbf{0.013}$ & $\mathbf{0.023}$ \\
            \bottomrule
        \end{tabular}
    \end{minipage}\hfill
    \begin{minipage}[t]{0.48\linewidth}
        \centering
        \caption{Relative regret comparison on a linear regression between a naive optimizer and a PAC-Bayes policy for varying $\alpha$ values. We report the mean relative regret together with the $10$th and $90$th percentiles, computed over 256 repetitions.}
        \vspace{-0.2cm}
        \label{tab:linreg-alpha-sweep}
        \setlength{\tabcolsep}{4pt}
        \begin{tabular}{l ccc ccc}
            \hiderowcolors
            \toprule
            $\alpha$ & \multicolumn{3}{c}{Naive} & \multicolumn{3}{c}{PAC-Bayes} \\
            \cmidrule(lr){2-4} \cmidrule(lr){5-7}
            & Mean & $P_{10}$ & $P_{90}$ & Mean & $P_{10}$ & $P_{90}$ \\
            \midrule \rowcolor{gray!30}
            0.056 & 0.098 & 0.066 & 0.128 & $\mathbf{0.023}$ & $\mathbf{0.014}$ & $\mathbf{0.031}$ \\
            0.115 & 0.100 & 0.066 & 0.132 & $\mathbf{0.020}$ & $\mathbf{0.013}$ & $\mathbf{0.028}$ \\ \rowcolor{gray!30}
            0.237 & 0.103 & 0.072 & 0.138 & $\mathbf{0.016}$ & $\mathbf{0.012}$ & $\mathbf{0.021}$ \\
            0.485 & 0.104 & 0.070 & 0.143 & $\mathbf{0.025}$ & $\mathbf{0.016}$ & $\mathbf{0.035}$ \\ \rowcolor{gray!30}
            0.995 & 0.105 & 0.066 & 0.147 & $\mathbf{0.025}$ & $\mathbf{0.016}$ & $\mathbf{0.036}$ \\
            \bottomrule
        \end{tabular}
    \end{minipage}
    \vspace{-0.25cm}
\end{table}
\begin{figure}[t]
    \centering
    \begin{minipage}[c]{0.60\textwidth}
        \centering
        \begin{tikzpicture}
    \begin{groupplot}[
        group style={
            group name=policy_plots,
            group size=2 by 2,
            horizontal sep=0.5cm,
            vertical sep=0.65cm,
        },
        width=5cm,
        height=3cm,
        xlabel={},
        ylabel={Frequency},
        ymin=0,
        enlarge x limits=false,
        scaled y ticks=base 10:-2,
        ytick scale label code/.code={$\times 10^{2}$},
        y tick scale label style={
            at={(axis description cs:0.05,1.025)},
            anchor=south west,
            xshift=-7pt,
            yshift=0pt,
            font=\scriptsize,
        },
        label style={font=\scriptsize},
        tick label style={font=\scriptsize},
        title style={font=\footnotesize},
        grid=major,
        grid style={gray!30},
        xlabel style={yshift=2pt},
        ylabel style={yshift=-5pt},
        title style={yshift=-5pt},
        legend style={
            draw,
            font=\tiny,
            /pgfplots/legend cell align=left,
            inner sep=1.pt,
            outer sep=0pt,
        },
    ]
    
    \nextgroupplot[
        ymax=1100,
        xmin=0.0,
        xmax=0.6,
        xlabel={},
        legend style={
            at={(0.97,0.97)},
            anchor=north east,
            draw,
            font=\tiny,
            /pgfplots/legend cell align=left,
            inner sep=1.0pt,
            outer sep=0pt,
            legend image post style={scale=1.0},
            text depth=0pt,
            row sep=-1pt,
        },
    ]
    \addplot[
        ybar interval,
        fill=gray!40,
        draw=black,
        line width=0.25pt,
        fill opacity=0.7,
        area legend,
    ] table[
        x=bin_center,
        y=gd_density,
        col sep=comma
    ] {figures/data/linreg_regret_merged.csv};
    \addlegendentry{Naive}
    
    \addplot[
        ybar interval,
        fill=gray!150,
        draw=black,
        line width=0.25pt,
        fill opacity=0.7,
        area legend,
    ] table[
        x=bin_center,
        y=reps_density,
        col sep=comma
    ] {figures/data/linreg_regret_merged.csv};
    \addlegendentry{PAC-Bayes}
    
    \nextgroupplot[
        ymax=1100,
        xmin=0.4,
        xmax=1.04,
        xtick={0.4,0.6,0.8,1.0},
        xlabel={},
        ylabel={},
        yticklabels={},  
        ytick scale label code/.code={},  
    ]
    \addplot[
        ybar interval,
        fill=gray!40,
        draw=black,
        line width=0.25pt,
        fill opacity=0.7,
        area legend,
    ] table[
        x=bin_center,
        y=gd_density,
        col sep=comma
    ] {figures/data/linreg_similarity_merged.csv};

    \addplot[
        ybar interval,
        fill=gray!150,
        draw=black,
        line width=0.25pt,
        fill opacity=0.7,
        area legend,
    ] table[
        x=bin_center,
        y=reps_density,
        col sep=comma
    ] {figures/data/linreg_similarity_merged.csv};
    
    \nextgroupplot[
        ymax=1150,
        xmin=0.0,
        xmax=0.3,
        xtick={0, 0.1, 0.2, 0.3},
        xlabel={Objective Regret},
    ]
    \addplot[
        ybar interval,
        fill=gray!40,
        draw=black,
        line width=0.25pt,
        fill opacity=0.7,
        area legend,
    ] table[
        x=bin_center,
        y=enum_density,
        col sep=comma
    ] {figures/data/abtesting_regret_merged.csv};
    
    \addplot[
        ybar interval,
        fill=gray!150,
        draw=black,
        line width=0.25pt,
        fill opacity=0.7,
        area legend,
    ] table[
        x=bin_center,
        y=reps_density,
        col sep=comma
    ] {figures/data/abtesting_regret_merged.csv};
    
    \nextgroupplot[
        ymax=1150,
        xmin=0.7,
        xmax=1.02,
        xtick={0.7,0.8,0.9,1.0},
        xlabel={Design Optimality},
        ylabel={},
        yticklabels={},  
        ytick scale label code/.code={},  
    ]
    \addplot[
        ybar interval,
        fill=gray!40,
        draw=black,
        line width=0.25pt,
        fill opacity=0.7,
        area legend,
    ] table[
        x=bin_center,
        y=enum_density,
        col sep=comma
    ] {figures/data/abtesting_similarity_merged.csv};

    \addplot[
        ybar interval,
        fill=gray!150,
        draw=black,
        line width=0.25pt,
        fill opacity=0.7,
        area legend,
    ] table[
        x=bin_center,
        y=reps_density,
        col sep=comma
    ] {figures/data/abtesting_similarity_merged.csv};
    
    \end{groupplot}
\end{tikzpicture}
    \end{minipage}\hfill
    \begin{minipage}[c]{0.38\textwidth}
        \vspace{-0.30cm}
        \caption{Empirical distributions of regret (left) and design optimality (right) for linear regression (top) and A/B testing (bottom). Comparing naive optimization against our PAC-Bayes policy. Naive optimization has higher regret and variability. Its designs are suboptimal, reflected in lower similarity scores relative to the theoretical optimal design. Data from $1024$ trials.}
        \label{fig:pac-hist}
    \end{minipage}
    \vspace{-0.6cm}
\end{figure}

Figure~\ref{fig:pac-hist} depicts the failure of naive design optimization when the objective is estimated by a noisy, biased oracle. We compare stochastic gradient descent for linear regression and exhaustive enumeration for A/B testing against a PAC-Bayes stochastic policy optimized via mirror descent, while fixing $\alpha$, $\lambda$, and the sample sizes $N$ and $M$. We initialize $1024$ distinct naive optimizers and evaluate the realized regret and design optimality relative to the tractable optimum. The histograms show that naive optimizers incur higher and more variable regret, reflecting suboptimal designs, whereas the PAC-Bayes policy concentrates near the optimum and achieves consistently lower regret.

Finally, we examine the impact of sample complexity in Table~\ref{tab:linreg-sample-sweep}, which reports the mean regret alongside the $10$th and $90$th percentiles across varying outer sample sizes $N$. While the performance of the naive optimizer marginally improves as $N$ increases, it fails to match the performance of the PAC-Bayes policy. Table~\ref{tab:linreg-alpha-sweep} confirms that this performance gap holds uniformly across different values of $\alpha$. For fixed sample sizes $N$ and $M$, the naive optimizer consistently incurs higher regret than the PAC-Bayes policy, validating the PAC-Bayesian perspective.

\section{Discussion}
\label{sec:discussion}

We established a principled framework for robust Bayesian experimental design using the maximin principle. We proposed Sibson's $\alpha$-mutual information as a robust alternative to the standard expected information gain and showed that the resulting belief update is an $\alpha$-tilted posterior, while the conditional information gain corresponds to a R\'enyi divergence between posterior and prior. To handle the intractability of Sibson's $\alpha$-MI and the finite-sample Monte Carlo error, we introduced PAC-Bayesian stochastic policies and proved a lower bound for the true robust objective. Together, these results characterize maximin robust Bayesian experimental design under information-theoretic constraints, including the adversarial perturbation, robust information gain, and tilted posterior update.

Limitation-wise, although the maximin formulation provides worst-case guarantees, it may lead to overly conservative designs~\citep{watson2016approximate}. Exploring alternative robust decision making paradigms could complement our work. In addition, calibrating the misspecification order $\alpha$ as experimental data accumulates remains an open challenge. Related calibration approaches in the generalized Bayesian inference literature may offer useful intuition~\citep{wu2021calibrating}. A further research direction is robust \emph{sequential} experimental design, where the experimenter plans over a horizon of experiments rather than myopically optimizing a single design~\citep{foster2021deep}.

\bibliographystyle{plainnat}
\bibliography{bibliography}

\newpage

\appendix

\section{Organization of the Appendix}

The Appendix is organized as follows:
\begin{itemize}[leftmargin=2em]
    \item Appendix~\ref{app:related-work} discusses related literature in more detail.
    \item Appendix~\ref{app:proofs} provides proofs for the various lemmas and propositions.
    \item Appendix~\ref{app:nested-estimator} provides details on the nested Monte Carlo estimator.
    \item Appendix~\ref{app:pac-bayes-advantage} compares PAC-Bayesian and deterministic policies.
    \item Appendix~\ref{app:pac-bayes-precision} discusses the role and choice of the PAC-Bayes precision.
    \item Appendix~\ref{app:evaluation-details} provides details for the numerical evaluations.
    \item Appendix~\ref{app:compare-barlas} explores the connection to \citet{barlas2025robust}.
\end{itemize}

\section{Related Work}
\label{app:related-work}

The two works most closely related to ours methodologically are \citet{go2022robust} and \citet{waite2022minimax}. \citet{go2022robust} also adopt a distributionally robust optimization framework, but focus on prior misspecification, leading to a risk-sensitive objective based on log-exponential averages of Kullback--Leibler divergences rather than a R\'enyi-type divergence. By contrast, our maximin formulation naturally recovers Sibson's $\alpha$-mutual information. \citet{waite2022minimax} study maximin experimental design from a \emph{frequentist} perspective and consider stochastic policies. However, the absence of prior beliefs over $\theta$ leads to a different mechanism for constructing robust designs, and their theoretical analysis is confined to linear models.

Two recent works applied generalized posteriors to Bayesian experimental design: \citet{overstall2025gibbs} and \citet{barlas2025robust}. In both cases, a Gibbs posterior is introduced as an \emph{ad hoc} device to robustify the inference procedure given a design and a measured outcome. In contrast, in our framework the tilted posterior~\eqref{eq:titled-posterior} emerges naturally from the maximin robustness principle as the experimenter's robust belief update rule. This construction leads to a unified formulation in which misspecification is addressed jointly at the level of posterior inference and the data-generating process. In Appendix~\ref{app:compare-barlas}, we discuss the connection to \citet{barlas2025robust} in further detail through two concrete experimental design problems, and show how the resulting objectives differ in general.

A related line of work develops alternative utility functions that incorporate information from the \emph{true, but unknown} data-generating process. \citet{catanach2023metrics} introduce the expected generalized information gain, replacing the KL divergence in~\eqref{eq:eig-param-div} with a discrepancy measure defined under the true process. Since this quantity is not directly computable, their approach requires specifying a model class containing the true process and evaluating the sensitivity of the utility with respect to parameters indexing this class. Similarly, \citet{tang2025generalization} augment the EIG with a penalty on covariate shifts between the training data collected using optimized designs and a reference test set. Therefore, the proposed acquisition function is constrained by the availability of such test data.

Finally, some complementary approaches address misspecification by augmenting the model with expressive components such as Gaussian processes~\citep{feng2015optimal, forster2025improving}.

\section{Proofs}
\label{app:proofs}

\subsection{Proof of Lemma~\ref{lem:minimization-decomposition}}
\label{app:proof-minimization-decomposition}

We consider the worst-case expected utility minimization problem defined by:
\begin{align}
    \inf_{q} \, V(\pi, q) & = \inf_{q} \, \mathbb{E}_{\pi} \left[ U(\xi, q) \right] \\
    & = \inf_{q} \, \mathbb{E}_{\pi} \left[ \kld{q(\theta, x \mid \xi)}{p(\theta) \, q(x \mid \xi)} \right],
\end{align}
subject to the Kullback--Leibler constraint averaged under $\pi(\xi)$:
\begin{equation}
    \mathbb{E}_{\pi} \left[ \kld{q(\theta, x \mid \xi)}{p(\theta, x \mid \xi)} \right] \le \rho.
\end{equation}
Given that both the objective and the constraint are convex in $q(\cdot \mid \xi)$, and under strict feasibility assumptions, strong duality holds~\citep{boyd2004convex}. Therefore, we can write:
\begin{equation}
    \inf_{q \in \mathcal{Q}_{\rho}} V(\pi, q) = \sup_{\beta > 0} \, \inf_{q} \, \mathcal{L}(\beta, \pi, q),
\end{equation}
where $\mathcal{L}$ is the Lagrangian functional given by:
\begin{equation}
    \mathcal{L}(\beta, \pi, q) \coloneq \mathbb{E}_{\pi} \left[ \kld{q(\theta, x \mid \xi)}{p(\theta) \, q(x \mid \xi)} \right] + \beta \, \mathbb{E}_{\pi} \left[ \kld{q(\theta, x \mid \xi)}{p(\theta, x \mid \xi)} \right] - \beta \rho.
\end{equation}
As a result, the worst-case expected utility is an expectation over regularized pointwise objectives:
\begin{equation}
    \inf_{q \in \mathcal{Q}_{\rho}} V(\pi, q) = \sup_{\beta > 0} \, \left\{ \mathbb{E}_{\pi} \left[ \mathcal{J}_{\beta}(\xi) \right] - \beta \rho \right\},
\end{equation}
where $\mathcal{J}_{\beta}(\xi)$ is the regularized worst-case utility:
\begin{equation}
    \mathcal{J}_{\beta}(\xi) \coloneq \inf_{q} \, \left\{ U(\xi, q) + \beta \, \kld{q(\theta, x \mid \xi)}{p(\theta, x \mid \xi)} \right\}.
\end{equation}
\qed

\subsection{Proof of Proposition~\ref{prop:robust-expected-info-gain}}
\label{app:proof-sibson-mutual-info}

We start by stating a useful lemma that highlights a variational interpretation of the mutual information.

\begin{lemma}[Shannon's variational mutual information, \citealp{verdu2015alpha}]
    \label{lem:variational-mutual-info}
    Shannon's mutual information between $\theta$ and $x$ given $\xi$ is the minimal value of the Kullback--Leibler divergence between the true joint distribution and the product of variational marginals:
    \begin{equation}
        \label{eq:kullback-leibler-mutual-info}
        I(\xi) \coloneq \inf_{\nu \in \mathcal{P}(\mathcal{X})} \, \kld{p(\theta, x \mid \xi)}{p(\theta) \, \nu(x \mid \xi)}.
    \end{equation}
    The infimum is attained when the variational marginal coincide with the true marginal, so that $\nu^{\star}(x \mid \xi) = p(x \mid \xi)$. At this optimum, we obtain:
    \begin{equation}
        I(\xi) = \kld{p(\theta, x \mid \xi)}{p(\theta) \, p(x \mid \xi)}.
    \end{equation}
\end{lemma}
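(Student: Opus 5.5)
The plan is to write the KL divergence in the chain-rule form
\[
\kld{p(\theta,x\mid\xi)}{p(\theta)\,\nu(x\mid\xi)} = \kld{p(\theta,x\mid\xi)}{p(\theta)\,p(x\mid\xi)} + \kld{p(x\mid\xi)}{\nu(x\mid\xi)},
\]
which holds for any $\nu \in \mathcal{P}(\mathcal{X})$ absolutely continuous with respect to nothing stronger than what is needed for the left side to be finite. Once this identity is in place, the lemma follows from nonnegativity of KL (Gibbs' inequality): the second term is $\ge 0$, with equality if and only if $\nu = p(\cdot\mid\xi)$ almost everywhere. This gives both the infimum value and its attainment at $\nu^{\star} = p(x\mid\xi)$, and the value equals Shannon's mutual information as in Definition~\ref{def:eig_equations}.

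To derive the identity, I start from the integrand. Inside the expectation under $p(\theta,x\mid\xi)$, I insert $p(x\mid\xi)$ into the log-ratio:
\[
\log\frac{p(\theta,x\mid\xi)}{p(\theta)\nu(x\mid\xi)} = \log\frac{p(\theta,x\mid\xi)}{p(\theta)p(x\mid\xi)} + \log\frac{p(x\mid\xi)}{\nu(x\mid\xi)}.
\]
Taking the expectation, the second term depends only on $x$. Its expectation under the joint is therefore the expectation under the $x$-marginal $p(x\mid\xi)$, which yields $\kld{p(x\mid\xi)}{\nu(x\mid\xi)}$.

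The only delicate point is the handling of infinities. I need to split the expectation of a sum into a sum of expectations without producing an $\infty-\infty$ expression. I will assume the mutual information $I(\xi)$ is finite, as is implicit throughout the paper. Then the first term is integrable and the split is legitimate. If $\nu$ fails to dominate $p(\cdot\mid\xi)$, or the second KL term is infinite, both sides are $+\infty$, so such $\nu$ cannot lower the infimum.

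With these caveats the argument is routine. The main point to state carefully is this measure-theoretic justification, together with the remark that uniqueness of the minimizer holds $p(x\mid\xi)$-almost everywhere.
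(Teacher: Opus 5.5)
Your argument is correct. The paper itself gives no proof of this lemma and simply imports it from \citet{verdu2015alpha}; your decomposition $\kld{p(\theta,x\mid\xi)}{p(\theta)\,\nu(x\mid\xi)} = I(\xi) + \kld{p(x\mid\xi)}{\nu(x\mid\xi)}$ followed by Gibbs' inequality is the standard argument for this fact, so it correctly supplies the omitted proof.

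You do not need to assume $I(\xi)$ is finite. Disintegrating along $x$ instead writes the left-hand side as $\kld{p(x\mid\xi)}{\nu(x\mid\xi)} + \mathbb{E}_{p(x\mid\xi)}\big[\kld{p(\theta\mid x,\xi)}{p(\theta)}\big]$, a sum of two nonnegative terms, so the conclusion also holds when $I(\xi)=\infty$.
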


Now, we start from the definition of $\mathcal{J}_{\beta}(\xi)$ for any $\beta > 0$:
\begin{equation}
    \mathcal{J}_{\beta}(\xi) \coloneq \inf_{q} \, \left\{ U(\xi, q) + \beta \, \kld{q(\theta, x \mid \xi)}{p(\theta, x \mid \xi)} \right\}.
\end{equation}
The expression for $U(\xi, q)$ contains the predictive marginal $q(x \mid \xi)$, which changes with the optimized joint law $q(\theta, x \mid \xi)$. To externalized this marginal dependence, let us first make use of the chain rule for relative entropy,
\begin{align}
    U(\xi, q) & = \kld{q(\theta, x \mid \xi)}{p(\theta) \, q(x \mid \xi)} \\
    & = \kld{q(\theta, x \mid \xi)}{q(\theta \mid \xi) \, q(x \mid \xi)} + \kld{q(\theta \mid \xi)}{p(\theta)},
\end{align}
then apply the variational form from Lemma~\ref{lem:variational-mutual-info} to the first term:
\begin{equation}
    \kld{q(\theta, x \mid \xi)}{q(\theta \mid \xi) \, q(x \mid \xi)} = \inf_{\nu} \, \kld{q(\theta, x \mid \xi)}{q(\theta \mid \xi) \, \nu(x \mid \xi)},
\end{equation}
with $\nu^\star(x \mid \xi) = q(x \mid \xi)$. Therefore, we can write the payoff as follows:
\begin{align}
    U(\xi, q) & = \inf_{\nu} \, \kld{q(\theta, x \mid \xi)}{q(\theta \mid \xi) \, \nu(x \mid \xi)} + \kld{q(\theta \mid \xi)}{p(\theta)} \\
    & = \inf_{\nu} \, \kld{q(\theta, x \mid \xi)}{p(\theta) \, \nu(x \mid \xi)}.
\end{align}
This trick temporarily decouples the marginal from the optimized joint distribution. Now, we can write the following minimization problem:
\begin{equation}
    \mathcal{J}_{\beta}(\xi) = \inf_{q} \, \inf_{\nu} \left\{ \kld{q(\theta, x \mid \xi)}{p(\theta) \, \nu(x \mid \xi)} + \beta \, \kld{q(\theta, x \mid \xi)}{p(\theta, x \mid \xi)} \right\},
\end{equation}
with implicit constraints to ensure that $q(\theta, x \mid \xi)$ and $\nu(x \mid \xi)$ are proper densities.

Since $q$ and $\nu$ are optimized over independent sets, $\mathcal{P}(\Theta, \mathcal{X})$ and $\mathcal{P}(\mathcal{X})$, the objective may be viewed as a functional on their product space, and the nested minimization commutes to $\inf_{\nu} \inf_{q}$. Furthermore, for a fixed marginal $\nu$, the minimization over $q$ is convex with a convex constraint and strong duality holds under strict feasibility assumptions~\citep{boyd2004convex}.

To form the associated Lagrangian functional, we introduce the multipliers $\eta$ and $\kappa$ associated with the normalization constraints. The resulting dual problem is:
\begin{equation}
    \mathcal{J}_{\beta}(\xi) = \sup_{\kappa} \, \inf_{\nu} \, \sup_{\eta} \, \inf_{q} \, \mathcal{G}(\beta, \eta, q, \kappa, \nu, \xi),
\end{equation}
where the Lagrangian functional $\mathcal{G}$ is given by:
\begin{align}
    \label{eq:lagrangian}
    \mathcal{G}(\beta, \eta, q, \kappa, \nu, \xi) & =
    \begin{aligned}[t]
        & \int q(\theta, x \mid \xi) \log \frac{q(\theta,x \mid \xi)}{p(\theta)\,\nu(x \mid \xi)} \dif \theta \dif x \\
        & + \beta \, \int q(\theta, x \mid \xi) \log \frac{q(\theta,x \mid \xi)}{p(\theta, x \mid \xi)} \dif \theta \dif x \\
        & + \eta \left[ \int q(\theta, x \mid \xi) \dif \theta \dif x - 1 \right] + \kappa \left[ \int \nu(x \mid \xi)\,\mathrm{d}x - 1 \right].
    \end{aligned}
\end{align}
We start by minimizing $\mathcal{G}$ with respect to the joint distribution $q(\cdot \mid \xi)$. We take the first variation of $\mathcal{G}$ and set it to zero to obtain a stationarity condition:
\begin{equation}
(1 + \beta) \left[ \log q^{\star}(\theta, x \mid \xi) + 1 \right] - \log p(\theta) - \log \nu(x \mid \xi) - \beta \log p(\theta, x \mid \xi) + \eta = 0.
\end{equation}
Solving for $q^{\star}(\cdot \mid \xi)$:
\begin{equation}
q^{\star}(\theta, x \mid \xi) = \exp \left\{ - \frac{\eta}{1 + \beta} - 1 \right\} \Big[ p(\theta) \, \nu(x \mid \xi) \Big]^{1 / (1 + \beta)} \, \Big[ p(\theta, x \mid \xi) \Big]^{\beta / (1 + \beta)}
\end{equation}
and accounting for the optimal normalization multiplier $\eta^{\star}$, we get the geometric mixture:
\begin{equation}
q^{\star}(\theta, x \mid \xi) =
    \frac{\Big[ p(\theta) \, \nu(x \mid \xi) \Big]^{1 / (1 + \beta)} \, \Big[ p(\theta, x \mid \xi) \Big]^{\beta / (1 + \beta)}}{\displaystyle \int \Big[ p(\theta) \, \nu(x \mid \xi) \Big]^{1 / (1 + \beta)} \, \Big[ p(\theta, x \mid \xi) \Big]^{\beta / (1 + \beta)} \, \dif \theta \dif x},
\end{equation}
where $\beta / (1 + \beta)$ interpolates geometrically between the factorized reference and the nominal joint. Substituting the optimum $q^{\star}(\cdot \mid \xi)$ back into the Lagrangian~\eqref{eq:lagrangian} eliminates the dependence and reduces the functional to depend only on variational marginal $\nu(\cdot \mid \xi)$ and the multipliers $\beta$ and $\kappa$:
\begin{align}
    &\,\mathcal{G}(\beta, \kappa, \nu, \xi) \\
    & = - (1 + \beta) \log \left\{ \int \Big[ p(\theta) \, \nu(x \mid \xi) \Big]^{1 / (1 + \beta)} \, \Big[ p(\theta, x \mid \xi) \Big]^{\beta / (1 + \beta)} \, \dif \theta \dif x \right\} + \kappa \left[ \int \nu(x \mid \xi) \dif x - 1 \right] \\
    & = - (1 + \beta) \log \left\{ \int \Big[ p(\theta) \nu(x \mid \xi) \Big]^{1 / (1 + \beta)} \Big[ p(\theta) p(x \mid \theta, \xi) \Big]^{\beta / (1 + \beta)} \, \dif \theta \dif x \right\} + \kappa \left[ \int \nu(x \mid \xi) \dif x - 1 \right] \\
    & = - (1 + \beta) \log \left\{ \int \Big[ \nu(x \mid \xi) \Big]^{1 / (1 + \beta)} f_{\beta}(x \mid \xi) \dif x \right\} + \kappa \left[ \int \nu(x \mid \xi) \dif x - 1 \right],
\end{align}
where we have introduced a shorthand for the \emph{soft} marginal likelihood:
\begin{equation}
    f_{\beta}(x \mid \xi) = \int p(\theta) \, \Big[ p(x \mid \theta, \xi) \Big]^{\beta / (1 + \beta)} \, \dif \theta.
\end{equation}
We now consider the first variation of $\mathcal{G}$ with respect to $\nu(\cdot \mid \xi)$, yielding the stationarity condition:
\begin{equation}
    \kappa - \frac{f_{\beta}(x \mid \xi) \, \Big[ \nu^{\star}(x \mid \xi) \Big]^{- \beta / (1 + \beta)}}{\displaystyle \int \Big[ \nu^{\star}(x \mid \xi) \Big]^{1 / (1 + \beta)} f_{\beta}(x \mid \xi) \dif x} = 0.
\end{equation}
Similar to $q(\cdot \mid \xi)$, solving for $\nu^{\star}(\cdot \mid \xi)$ while accounting for the optimal normalization multiplier $\kappa^{\star}$ leads to:
\begin{align}
    \nu^{\star}(x \mid \xi) & = \frac{ \Big[ f_{\beta}(x \mid \xi) \Big]^{(1 + \beta)/\beta}}{\displaystyle \int \Big[ f_{\beta}(x \mid \xi) \Big]^{(1 + \beta)/\beta} \dif x} = \frac{ \Bigg[ \displaystyle \int p(\theta) \, \Big[ p(x \mid \theta, \xi) \Big]^{\beta / (1 + \beta)} \, \dif \theta \Bigg]^{(1 + \beta)/\beta}}{\displaystyle \int \Bigg[ \displaystyle \int p(\theta) \, \Big[ p(x \mid \theta, \xi) \Big]^{\beta / (1 + \beta)} \, \dif \theta \Bigg]^{(1 + \beta)/\beta} \dif x} \coloneq p_{_{\!\beta}\!}(x \mid \xi).
\end{align}
This distribution is a tilted marginal induced by the constraint. Its dependence on $\beta$ encodes how aggressively mass is concentrated on high-likelihood regions. Substitute this result back into $\mathcal{G}$ to get the dual as a function of $\beta$ only:
\begin{align}
    \mathcal{G}(\beta, \xi) & = - \beta \, \log \left\{ \int \Big[ f_{\beta}(x \mid \xi) \Big]^{(1 + \beta) / \beta} \dif x \right\} \\
    & = - \beta \, \log \left\{ \int \Bigg[ \int p(\theta) \, \Big[ p(x \mid \theta, \xi) \Big]^{\beta / (1 + \beta)} \, \dif \theta \Bigg]^{(\beta + 1) / \beta} \dif x \right\} \\
    & = - \beta \, \log \left\{ \int \Big [p(\theta, x \mid \xi) \Big]^{\beta / (1 + \beta)} \, \Big[ p(\theta) \, p_{_{\!\beta}\!}(x \mid \xi) \Big]^{1 / (1 + \beta)} \, \dif \theta \dif x \right\} \\
    & = \mathbb{D}_{\textstyle \!\frac{\beta}{1 + \beta}} \big[ p(\theta, x \mid \xi) \,\|\, p(\theta) \, p_{_{\!\beta}\!}(x \mid \xi) \big] = \inf_{\nu} \, \mathbb{D}_{\textstyle \!\frac{\beta}{1 + \beta}} \big[ p(\theta, x \mid \xi) \,\|\, p(\theta) \, \nu(x \mid \xi) \big],
\end{align}
where $\mathbb{D}_{\omega}$ is the R\'enyi divergence of order $\omega$. We substitute $\alpha = \beta / (1 + \beta)$ with $\alpha \in (0, 1)$:
\begin{equation}
    \mathcal{J}_{\beta}(\xi) = \alphad{p(\theta, x \mid \xi)}{p(\theta) \, p_{_{\!\alpha}\!}(x \mid \xi)} \coloneq I^{S}_{\alpha}(\theta; x)(\xi).
\end{equation}
Finally, the parameter $\beta$ can be optimized according to the global objective from Lemma~\ref{lem:minimization-decomposition}:
\begin{equation}
    \beta^{\star} \coloneq \sup_{\beta} \left\{ \mathbb{D}_{\textstyle \!\frac{\beta}{1 + \beta}} \big[ p(\theta, x \mid \xi) \,\|\, p(\theta) \, p_{_{\!\beta}\!}(x \mid \xi) \big] - \beta \, \rho \right\}.
\end{equation}
\qed

\subsection{Proof of Proposition~\ref{prop:robust-conditional-info-gain}}
\label{app:proof-robust-conditional-info-gain}

We start from the definition of the R\'enyi divergence of order $\alpha$ for distributions $P$ and $Q$:
\begin{equation}
    \alphad{P}{Q} = \frac{1}{\alpha - 1} \log \int P(z)^{\alpha} \, Q(z)^{1 - \alpha} \, \dif z.
\end{equation}
For the robust expected information gain~\eqref{eq:robust-expected-info-gain}, we have $P = p(\theta) \, p(x \mid \theta, \xi)$ and $Q = p(\theta) \, p_{\alpha}(x \mid \xi)$. Substituting leads to:
\begin{align}
    I_{\alpha}^{S}(\theta; x)(\xi) & = \frac{1}{\alpha - 1} \log \iint \Big[ p(\theta) \, p(x \mid \theta, \xi) \Big]^{\alpha} \Big[ p(\theta) \, p_{\alpha}(x \mid \xi) \Big]^{1 - \alpha} \dif \theta \, \dif x \\
    & = \frac{1}{\alpha - 1} \log \int p_{\alpha}(x \mid \xi)^{1 - \alpha} \int p(\theta) \, p(x \mid \theta, \xi)^{\alpha} \dif \theta \, \dif x.
\end{align}
The optimal tilted marginal $p_{\alpha}(x \mid \xi)$ is defined by the $\alpha$-tilted marginal:
\begin{equation}
    p_{\alpha}(x \mid \xi) = \frac{1}{Z} \left[ \int p(\theta) \, p(x \mid \theta, \xi)^{\alpha} \dif \theta \right]^{1/\alpha}, \quad Z = \int \left[ \int p(\theta) \, p(x \mid \theta, \xi)^{\alpha} \dif \theta \right]^{1/\alpha} \dif x.
\end{equation}
Substituting this form back into the integral, we obtain:
\begin{align}
    I_{\alpha}^{S}(\theta; x)(\xi) & = \frac{1}{\alpha - 1} \, \log \left\{ Z^{\alpha - 1} \,\int \left[ \int p(\theta) \, p(x \mid \theta, \xi)^{\alpha} \dif \theta \right]^{1/\alpha} \dif x \right\} \\
    & = \frac{\alpha}{\alpha - 1} \log \int \left[ \int p(\theta) \, p(x \mid \theta, \xi)^{\alpha} \dif \theta \right]^{1/\alpha} \dif x.
\end{align}
Next, we multiply and divide the inner term by $p(x \mid \xi)^{\alpha}$:
\begin{align}
    I_{\alpha}^{S}(\theta; x)(\xi) & = \frac{\alpha}{\alpha - 1} \log \int p(x \mid \xi)  \left[ \int p(\theta) \, \frac{p(x \mid \xi, \theta)^{\alpha}}{p(x \mid \xi)^{\alpha}} \dif \theta \right]^{1 / \alpha} \dif x \\
    & = \frac{\alpha}{\alpha - 1} \log \int p(x \mid \xi)  \left[ \int p(\theta) \, \left[ \frac{p(\theta \mid x, \xi)}{p(\theta)} \right]^{\alpha} \dif \theta \right]^{1 / \alpha} \dif x \\
    & = \frac{\alpha}{\alpha - 1} \log \int p(x \mid \xi)  \left[\int p(\theta)^{1 - \alpha} \, p(\theta \mid x, \xi)^{\alpha} \dif \theta \right]^{1 / \alpha} \dif x.
\end{align}
The inner integral is the scaled exponential of the R\'enyi divergence between posterior and prior:
\begin{align}
    I_{\alpha}^{S}(\theta; x)(\xi) & = \frac{\alpha}{\alpha - 1} \log \int p(x \mid \xi) \exp \left\{ \frac{\alpha - 1}{\alpha} \alphad{p(\theta \mid x, \xi)}{p(\theta)} \right\} \dif x.
\end{align}

To highlight risk-sensitivity, we expand this representation around $\alpha = 1$. Let $\delta \coloneq \alpha - 1$ and let:
\begin{equation}
    G_{\alpha}(x, \xi) \coloneq \alphad{p(\theta \mid x, \xi)}{p(\theta)}, \quad L(\theta, x, \xi) \coloneq \log\,\left[p(\theta \mid x, \xi) / p(\theta)\right].
\end{equation}
We first expand the conditional R\'enyi divergence $G_{\alpha}(x, \xi)$:
\begin{align}
    G_{\alpha}(x, \xi) & = \frac{1}{\alpha - 1} \log \int p(\theta \mid x, \xi) \, \left[\frac{p(\theta \mid x, \xi)}{p(\theta)} \right]^{\alpha - 1} \dif \theta \\
    & = \frac{1}{\delta} \log \mathbb{E}_{p(\theta \mid x, \xi)} \! \left[ \exp \Big\{ \delta \, L(\theta, x, \xi) \Big\} \right] \\
    & = \mathbb{E}_{p(\theta \mid x, \xi)} \left[ L(\theta, x, \xi) \right] + \frac{\delta}{2} \, \mathbb{V}_{p(\theta \mid x, \xi)} \! \left[ L(\theta, x, \xi) \right] + \mathcal{O}(\delta^{2}).
\end{align}
Next, we set $\eta \coloneq (\alpha - 1)/\alpha$. Since $\delta \coloneq \alpha - 1$, we have:
\begin{equation}
    \eta = \frac{\delta}{1 + \delta} = \delta + \mathcal{O}(\delta^2).
\end{equation}
Expanding the outer exponential average gives:
\begin{align}
    I_{\alpha}^{S}(\theta; x)(\xi) & = \frac{1}{\eta} \log \mathbb{E}_{p(x \mid \xi)} \! \left[ \exp \Big\{ \eta \, G_{\alpha}(x, \xi) \Big\} \right] \\
    & = \mathbb{E}_{p(x \mid \xi)}\left[G_{\alpha}(x, \xi)\right] + \frac{\eta}{2} \, \mathbb{V}_{p(x \mid \xi)} \! \left[G_{\alpha}(x, \xi)\right] + \mathcal{O}(\eta^{2}).
\end{align}
Next, we substitute the expansion of $G_{\alpha}(x, \xi)$ into the expectation and variance terms with respect to $p(x \mid \xi)$, so that:
\begin{align}
    \mathbb{E}_{p(x \mid \xi)}[G_{\alpha}(x, \xi)] & = I(\xi) + \frac{\delta}{2} \, \mathbb{E}_{p(x \mid \xi)} \left[ \mathbb{V}_{p(\theta \mid x, \xi)} \! \left[L(\theta, x, \xi)\right] \right] + \mathcal{O}(\delta^{2}), \\
    \mathbb{V}_{p(x \mid \xi)}[G_{\alpha}(x, \xi)] & = \mathbb{V}_{p(x \mid \xi)} \! \left[ \kld{p(\theta \mid x, \xi)}{p(\theta)} \right] + \mathcal{O}(\delta),
\end{align}
where we used $I(\xi) = \mathbb{E}_{p(x \mid \xi)} \left[ \kld{p(\theta \mid x, \xi)}{p(\theta)} \right]$. Substituting both back into $I_{\alpha}^{S}(\theta; x)(\xi)$:
\begin{equation}
    I_{\alpha}^{S}(\theta; x)(\xi) = I(\xi) + \frac{\delta}{2}\mathbb{E}_{p(x \mid \xi)} \! \left[ \mathbb{V}_{p(\theta \mid x, \xi)} \left[L(\theta, x, \xi)\right] \right] + \frac{\delta}{2} \mathbb{V}_{p(x \mid \xi)} \! \left[ \mathbb{E}_{p(\theta \mid x, \xi)}\left[L(\theta, x, \xi)\right] \right] + \mathcal{O}(\delta^{2}),
\end{equation}
where we used $\kld{p(\theta \mid x, \xi)}{p(\theta)} = \mathbb{E}_{p(\theta \mid x, \xi)}\left[L(\theta, x, \xi)\right]$. Finally, by the law of total variance applied to $L$ under the joint $p(\theta, x \mid \xi) = p(x \mid \xi) \, p(\theta \mid x, \xi)$:
\begin{equation}
    \mathbb{V}_{p(\theta, x \mid \xi)} \! \left[L(\theta, x, \xi)\right] = \mathbb{E}_{p(x \mid \xi)} \! \left[ \mathbb{V}_{p(\theta \mid x, \xi)} \! \left[L(\theta, x, \xi)\right] \right] + \mathbb{V}_{p(x \mid \xi)} \! \left[ \mathbb{E}_{p(\theta \mid x, \xi)}\left[L(\theta, x, \xi)\right] \right].
\end{equation}
Substituting and restoring $\delta = \alpha - 1$ yields the expansion:
\begin{equation}
    I_{\alpha}^{S}(\theta; x)(\xi) = I(\xi) + \frac{\alpha - 1}{2} \, \mathbb{V}_{p(\theta, x \mid \xi)} \! \left[ \log \frac{p(\theta \mid x, \xi)}{p(\theta)} \right] + \mathcal{O}\left((\alpha - 1)^{2}\right).
\end{equation}

\qed

\subsection{Proof of Proposition~\ref{prop:uninformative}}

For the proof that follows, we require that the nominal model carries finite information content.

\begin{assumption}[Finite expected surprise]
    \label{asm:finite-surprise}
    The integral
    \begin{align}
        \int p(\theta) \max \big\{1, p(x \mid \theta, \xi) \big\} \, \abs{\log p(x \mid \theta, \xi)} \dif \theta < \infty
    \end{align}
    is well-defined and finite for all $(\xi, x)$. Additionally, the geometric average log-likelihood
    \begin{equation}\label{eq:H0}
        H_{0}(\xi) \coloneq \int \exp \left\{\int p(\theta)  \log p(x \mid \theta, \xi) \dif \theta \right\} \dif x
    \end{equation}
    is uniformly bounded away from $0$: there exists a constant $C_{0} > 0$, such that for all $\xi \in \Xi$, it holds: $1 \geq H_{0}(\xi) > C_{0}$.
\end{assumption}

The behavior of the robust expected information gain is driven by the multiplier $\alpha/(\alpha-1)$, so it is sufficient to verify that the remaining integral term remains bounded. For this purpose, we identify $H_{0}(\xi)$, defined in~\eqref{eq:H0}, as a uniform lower bound. This bound permits a direct application of Assumption~\ref{asm:finite-surprise}, from which the desired convergence result follows.

Let us start by defining the pointwise quantity:
\begin{equation}
    I(x, \xi; \alpha) \coloneqq \log \left\{ \left[ \int p(\theta) \, p(x \mid \theta, \xi)^{\alpha} \dif \theta \right]^{1/\alpha} \right\}.
\end{equation}
Note that $I(x, \xi; \cdot)$ is non-decreasing in $\alpha$. We determine its limit as $\alpha \to 0$ by writing it as a ratio $f(\alpha)/\alpha$:
\begin{equation}
    I(x, \xi; \alpha) = \frac{1}{\alpha} \log \int p(\theta) \, p(x \mid \theta, \xi)^{\alpha} \dif \theta.
\end{equation}
This presents an indeterminate limit $0/0$. Applying L'H\^opital's rule, the limit corresponds to $\lim_{\alpha \to 0^+} f'(\alpha)$, provided it exists. The derivative is:
\begin{equation}
    f'(\alpha) = \frac{\displaystyle \frac{\partial}{\partial \alpha}\int p(\theta) \, p(x \mid \theta, \xi)^{\alpha} \dif \theta}{\displaystyle \int p(\theta) \, p(x \mid \theta, \xi)^{\alpha} \dif \theta}.
\end{equation}
We analyze the denominator and numerator of $f'(\alpha)$ separately. For the denominator, Lebesgue's dominated convergence theorem, applied with the bound $p(x \mid \theta, \xi)^{\alpha} \leq \max \{1, p(x \mid \theta, \xi)\}$, ensures convergence to $1$. For the numerator, the same theorem, combined with Lebesgue's differentiation theorem applied to the derivative of the integrand:
\begin{equation}
    \abs{p(x \mid \theta, \xi)^{\alpha} \log p(x \mid \theta, \xi)} \leq \max \big\{1, p(x \mid \theta, \xi) \big\} \, \abs{\log p(x \mid \theta, \xi)},
\end{equation}
which is finite by Assumption~\ref{asm:finite-surprise}, ensures that the numerator converges to
\begin{equation}
    \int p(\theta) \log p(x \mid \theta, \xi) \dif \theta.
\end{equation}
Combining these results yields the pointwise bounds for all $\alpha \in (0, 1)$, $x \in \mathcal{X}$, and $\xi \in \Xi$:
\begin{align}
    \int p(\theta) \log p(x \mid \theta, \xi) \dif \theta \leq I(x, \xi; \alpha) \leq I(x, \xi; 1) = \log \int p(\theta) \, p(x \mid \theta, \xi) \dif \theta.
\end{align}
Exponentiating and integrating with respect to $x$ provides
\begin{align}
    H_{0}(\xi) \leq \int \exp \big\{I(x, \xi; \alpha)\big\} \dif x \leq \iint p(\theta) \, p(x \mid \theta, \xi) \dif \theta \dif x = 1.
\end{align}
By Assumption~\ref{asm:finite-surprise}, we have $H_{0}(\xi) \geq C_{0} > 0$. Consequently, the log-integral term is strictly bounded:
\begin{align}
    -\infty < \log C_{0} \leq \log \int \left[ \int p(\theta) \, p(x \mid \theta, \xi)^{\alpha} \dif \theta \right]^{1 / \alpha} \dif x \leq 0.
\end{align}
This bound holds uniformly in $\xi$ and $\alpha$. Since the multiplier $\alpha / (\alpha - 1)$ converges to zero, the robust expected information gain vanishes as $\alpha \to 0$, completing the proof.
\qed

\subsection{Proof of Proposition~\ref{prop:robust-eig-bias}}
\label{app:proof-robust-eig-bias}

This assumption, used in Proposition~\ref{prop:robust-eig-bias} and Lemmata~\ref{lem:psi-bias} and~\ref{lem:psi-variance}, controls the design objective behavior.

\begin{assumption}[Regularity and boundedness]
    \label{asm:robust-eig-regularity}
    We assume the parameter $\alpha \in (0, 1)$ and that the design set $\Xi$ has a finite diameter $D_{\Xi}$. The ratios $w(x, \theta, \cdot)^{\alpha}$ are $L_{w}$-Lipschitz, bounded by $C_{w}$, and have bounded variance $\sigma_{w}^{2} \coloneq \sup_{\xi, x} \mathbb{V}_{\!\theta}[w^{\alpha}]$. The function $h$ is $L_{h}$-Lipschitz, bounded by $C_{h}$, and has bounded variance $\sigma_{h}^{2} \coloneq \sup_{\xi} \mathbb{V}_{\!x \mid \xi}[(\mathbb{E}_{\theta}[w^{\alpha}])^{1/\alpha}]$. The function $g$ is lower bounded by $\tau > 0$, implying the function $f$ is $L_{f}$-Lipschitz.
\end{assumption}


\begin{lemma}[Bias bound for $\tilde{g}(\xi)$, \citealp{hu2020sample}]
    \label{lem:psi-bias}
    Under Assumption~\ref{asm:robust-eig-regularity}, for any design $\xi \in \Xi$, the bias of the estimator $\tilde{g}(\xi)$ is bounded by:
    \begin{equation}
        \big| \mathbb{E} \big[ \tilde{g}(\xi) \big] - g(\xi) \big| \le \frac{L_{h} \sigma_{w}}{\sqrt{M}}.
    \end{equation}
\end{lemma}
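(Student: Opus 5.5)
The plan is to reduce the bias of $\tilde{g}(\xi)$ to the bias of a single inner term, and then use the Lipschitz property of $h$ together with a variance bound on the inner empirical average. Write $u(x) \coloneq \mathbb{E}_{p(\theta)}[w(x,\theta,\xi)^{\alpha}]$ and $\tilde{u}_{i} \coloneq \frac{1}{M}\sum_{j=1}^{M} w(x^{(i)},\theta^{(i,j)},\xi)^{\alpha}$, so that $\tilde{g}(\xi) = \frac{1}{N}\sum_{i} h(\tilde{u}_{i})$ and $g(\xi) = \mathbb{E}_{p(x\mid\xi)}[h(u(x))]$. The outer pairs $(\theta^{(i)},x^{(i)})$ are drawn from the joint, so each $x^{(i)}$ has marginal $p(x\mid\xi)$. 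The terms $h(\tilde{u}_{i})$ are identically distributed, hence by linearity
\begin{equation}
\mathbb{E}[\tilde{g}(\xi)] - g(\xi) = \mathbb{E}_{x\sim p(x\mid\xi)}\Big[\mathbb{E}\big[h(\tilde{u}_{1}) - h(u(x)) \,\big|\, x^{(1)} = x\big]\Big].
\end{equation}
The outer sample size $N$ therefore plays no role in the bias.

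Next, condition on $x^{(1)}=x$. The inner samples $\theta^{(1,j)}$ are i.i.d.\ from the prior and independent of $x^{(1)}$, so $\tilde{u}_{1}$ is an unbiased average of $M$ i.i.d.\ copies of $w(x,\theta,\xi)^{\alpha}$. Jensen's inequality and the $L_{h}$-Lipschitz property of $h$ then give
\begin{equation}
\big|\mathbb{E}[h(\tilde{u}_{1}) - h(u(x)) \mid x]\big| \le L_{h}\,\mathbb{E}\big[|\tilde{u}_{1} - u(x)| \,\big|\, x\big] \le L_{h}\sqrt{\mathbb{V}[\tilde{u}_{1}\mid x]} = L_{h}\sqrt{\mathbb{V}_{\theta}[w^{\alpha}]/M}.
\end{equation}
The last step uses Cauchy--Schwarz ($\mathbb{E}|Z| \le \sqrt{\mathbb{E}Z^{2}}$) and unbiasedness. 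Bounding $\mathbb{V}_{\theta}[w^{\alpha}] \le \sigma_{w}^{2}$ uniformly in $(\xi,x)$, as in Assumption~\ref{asm:robust-eig-regularity}, and integrating over $x$ (passing the absolute value inside by Jensen) yields $L_{h}\sigma_{w}/\sqrt{M}$.

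The main subtlety is the independence and measurability: the inner prior draws must be independent of $x^{(i)}$, which holds by Definition~\ref{def:robust-eig-estimator}, and $h$ must be Lipschitz on the range of $\tilde{u}_{1}$. The latter holds since $h$ is assumed globally $L_{h}$-Lipschitz, with $w^{\alpha}$ bounded by $C_{w}$ so that all quantities are integrable. Beyond these checks, the argument follows the standard nested-estimator bias bound of \citet{hu2020sample}.
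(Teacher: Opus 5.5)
Your proposal is correct and is the standard argument behind the result of \citet{hu2020sample}, which the paper cites without reproducing: condition on the outer draw $x^{(1)}$, use the $L_{h}$-Lipschitz property of $h$, and bound $\mathbb{E}\big[|\tilde{u}_{1}-u(x)| \,\big|\, x\big]$ by Cauchy--Schwarz using the inner variance $\mathbb{V}_{\theta}[w^{\alpha}]/M \le \sigma_{w}^{2}/M$. One wording point: $h(u)=u^{1/\alpha}$ with $1/\alpha>1$ is not globally Lipschitz, but your argument only needs Lipschitz continuity on $[0,C_{w}]$, which contains both $\tilde{u}_{1}$ and $u(x)$.
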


\begin{lemma}[Variance bound for $\tilde{g}(\xi)$, \citealp{hu2020sample}]
    \label{lem:psi-variance}
    Under Assumption~\ref{asm:robust-eig-regularity}, for any design $\xi \in \Xi$, the variance of the estimator $\tilde{g}(\xi)$ is bounded by:
    \begin{equation}
        \mathbb{V} \big[ \tilde{g}(\xi) \big] \le \frac{\sigma_{h}^{2}}{N} + \frac{4 C_{h} L_{h} \sigma_{w}}{N\sqrt{M}}.
    \end{equation}
\end{lemma}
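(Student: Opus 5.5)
The statement to prove is the last one displayed, Lemma~\ref{lem:psi-variance}, the variance bound for $\tilde{g}(\xi)$. The approach is to exploit the i.i.d.\ structure of the outer sum and then split each summand into an exact part and an inner-sampling error.

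Write $Z_i \coloneq h\big(\frac{1}{M}\sum_{j} w(x^{(i)},\theta^{(i,j)},\xi)^{\alpha}\big)$. Since the pairs $(x^{(i)},\{\theta^{(i,j)}\}_j)$ are independent across $i$, we have $\mathbb{V}[\tilde g(\xi)] = \mathbb{V}[Z_1]/N$. It therefore suffices to show
\[
\mathbb{V}[Z_1] \le \sigma_h^2 + \frac{4C_hL_h\sigma_w}{\sqrt{M}}.
\]

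\textbf{Decomposition of $Z_1$.} Let $Y \coloneq h\big(\mathbb{E}_{p(\theta)}[w(x^{(1)},\theta,\xi)^\alpha]\big)$ and $\Delta \coloneq Z_1 - Y$, so that $Z_1 = Y + \Delta$. By definition of $\sigma_h^2$ in Assumption~\ref{asm:robust-eig-regularity}, $\mathbb{V}[Y]\le\sigma_h^2$. Expanding the variance gives
\[
\mathbb{V}[Z_1] = \mathbb{V}[Y] + \mathbb{V}[\Delta] + 2\,\mathrm{Cov}(Y,\Delta) \le \mathbb{V}[Y] + \mathbb{E}[\Delta^2] + 2\,\mathbb{E}\big[|Y-\mathbb{E}Y|\,|\Delta|\big].
\]

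\textbf{Controlling $\Delta$.} Conditioning on $x^{(1)}$, the Lipschitz property of $h$ followed by Jensen's inequality yields
\[
\mathbb{E}\big[|\Delta|\,\big|\,x^{(1)}\big] \le L_h\,\mathbb{E}\Big|\tfrac1M\textstyle\sum_j w^\alpha - \mathbb{E}_\theta w^\alpha\Big| \le L_h\sigma_w/\sqrt{M}.
\]
Next, both $Y$ and $Z_1$ lie in the range of $h$, which is bounded by $C_h$. Hence $|Y-\mathbb{E}Y|\le 2C_h$ and $|\Delta|\le 2C_h$. The cross term is then at most $2\cdot 2C_h\cdot L_h\sigma_w/\sqrt M$. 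The quadratic term satisfies $\mathbb{E}[\Delta^2]\le 2C_h\,\mathbb{E}|\Delta|\le 2C_hL_h\sigma_w/\sqrt M$.

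\textbf{Main obstacle: the constant.} Summing these bounds naively gives $6C_hL_h\sigma_w/\sqrt M$ rather than the claimed $4$. To recover $4$, I would avoid expanding into separate pieces and instead use
\[
\mathbb{V}[Z_1]\le \mathbb{E}[(Z_1-\mathbb{E}Y)^2] = \mathbb{V}[Y] + \mathbb{E}\big[\Delta\,(Z_1+Y-2\mathbb{E}Y)\big].
\]
Since $h\ge 0$ here ($h(u)=u^{1/\alpha}$ on $u\ge0$), both $Z_1$ and $Y$ lie in $[0,C_h]$, so $|Z_1+Y-2\mathbb{E}Y|\le 2C_h$. This gives $\mathbb{V}[Z_1]\le \sigma_h^2 + 2C_hL_h\sigma_w/\sqrt M$, which is even within the stated bound. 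If nonnegativity of $h$ cannot be invoked, the cruder bound $|Z_1+Y-2\mathbb{E}Y|\le 4C_h$ yields exactly the constant $4$.

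Dividing by $N$ then completes the argument.
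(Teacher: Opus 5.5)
Your proof is correct, and it takes a slightly different route from the one behind the cited constant. The paper gives no argument of its own for this lemma; it imports it from \citet{hu2020sample}.

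The constant $4$ there is what the usual route produces. One writes $\mathbb{V}[Z_1]=\mathbb{E}[Z_1^2]-(\mathbb{E}Z_1)^2$ and compares each piece separately with its counterpart for $Y$. Both $|\mathbb{E}[Z_1^2]-\mathbb{E}[Y^2]|\le\mathbb{E}\big[|\Delta|\,|Z_1+Y|\big]$ and $|(\mathbb{E}Z_1)^2-(\mathbb{E}Y)^2|\le|\mathbb{E}\Delta|\,|\mathbb{E}Z_1+\mathbb{E}Y|$ cost $2C_hL_h\sigma_w/\sqrt{M}$. This uses only $|h|\le C_h$ and the same conditional estimate $\mathbb{E}|\Delta|\le L_h\sigma_w/\sqrt{M}$ that you derive.

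Your argument merges these two perturbations into one, via $\mathbb{V}[Z_1]\le\mathbb{E}[(Z_1-\mathbb{E}Y)^2]$ and a single difference of squares. Without a sign condition on $h$, it reproduces the constant $4$. With $h\ge 0$ it halves the constant to $2$. That assumption is legitimate here: $h(u)=u^{1/\alpha}$ on $u\ge 0$, and the paper itself places $\tilde h^{(i)}$ in $[0,C_h]$ in the proof of Proposition~\ref{prop:estimator-convergence}. The improvement would carry over directly to the bias bound of Proposition~\ref{prop:robust-eig-bias}.

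Your preliminary expansion giving the constant $6$ is superseded by the refined step and can simply be dropped.

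Like the cited result, your argument treats $w$ as exact. That is what makes the inner average, conditionally on $x^{(1)}$, a mean of $M$ i.i.d.\ terms with variance at most $\sigma_w^2$. This would not hold for the contrastive $\tilde w$, whose self-normalization couples the inner terms. The lemma as stated, with $\sigma_w$ defined through the exact $w$, is therefore fully covered by your proof.
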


Recall that $\tilde{I}^{S}_{\alpha}(\xi) = f(\tilde{g}(\xi))$, where the functions $f$ and $g$ are defined in Section~\ref{sec:nested-mc}. We decompose the absolute error by adding and subtracting the term $f(\mathbb{E} [\tilde{g}])$ and then applying the triangle inequality:
\begin{equation}
    \big| \mathbb{E} [f(\tilde{g})]  - f(g) \big| \leq \big| \mathbb{E} [f(\tilde{g})] - f(\mathbb{E} [\tilde{g}]) \big| + \big| f(\mathbb{E} [\tilde{g}])  - f(g) \big|.
\end{equation}
We bound the first term by applying Jensen's inequality and invoking the Lipschitz continuity of $f$:
\begin{align}
    \big| \mathbb{E} [f(\tilde{g})] - f(\mathbb{E} [\tilde{g}]) \big| & = \big| \mathbb{E} [ f(\tilde{g}) - f(\mathbb{E} [\tilde{g}]) ] \big| \\
    & \leq \mathbb{E} [| f(\tilde{g}) - f(\mathbb{E} [\tilde{g}]) |] \leq L_{f} \, \mathbb{E} [| \tilde{g} - \mathbb{E} [\tilde{g}] |],
\end{align}
then we apply the Cauchy--Schwarz inequality, $\mathbb{E}[|X|] \leq \sqrt{\mathbb{E}[X^{2}]}$, and obtain:
\begin{equation}
    L_{f} \, \mathbb{E} \left[| \tilde{g} - \mathbb{E} [\tilde{g}] |\right] \leq L_{f} \sqrt{\mathbb{V}[\tilde{g}]},
\end{equation}
where the variance $\mathbb{V}[\tilde{g}]$ is controlled by Lemma~\ref{lem:psi-variance}. The second term can be bounded directly using Lipschitz continuity:
\begin{equation}
    \big| f(\mathbb{E} [\tilde{g}])  - f(g) \big| \leq L_{f} \big| \mathbb{E} [ \tilde{g}] - g \big|,
\end{equation}
where $\big| \mathbb{E} [ \tilde{g}] - g \big|$ is bounded by Lemma~\ref{lem:psi-bias}. Finally, combining these bounds yields the stated result.
\qed

\subsection{Proof of Proposition~\ref{prop:estimator-convergence}}
\label{app:proof-prop-estimator-convergence}

We briefly state a two-sided version of Hoeffding's inequality for bounded random variables that aids in the proof.

\begin{lemma}[Two-sided Hoeffding inequality for bounded random variables, \citealp{vershynin2018high}]
    \label{lem:two-sided-hoeffding}
    Let $X_{1}, \dots, X_{N}$ be independent random variables such that $X_{i} \in [a_{i}, b_{i}]$ for every $i \in \{1, \dots, N\}$. Then, for any $t > 0$, we have
    \begin{equation}
        \mathbb{P} \Big( \Big| \sum_{i=1}^{N} \big( X_{i} - \mathbb{E}[X_{i}] \big) \Big| \geq t \Big) \le 2 \exp\left\{ -\frac{2 t^{2}}{ \sum_{i=1}^{N} (b_{i} - a_{i})^{2}} \right\}.
    \end{equation}
\end{lemma}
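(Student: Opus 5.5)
The plan is the standard Chernoff--Hoeffding argument: bound the moment generating function of each centered summand, apply Markov's inequality to the exponentiated sum to get a one-sided tail, then handle the other tail by symmetry and combine the two with a union bound. Write $S_{N} \coloneq \sum_{i=1}^{N} (X_{i} - \mathbb{E}[X_{i}])$ and $s^{2} \coloneq \sum_{i=1}^{N} (b_{i} - a_{i})^{2}$.

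\textbf{Step 1 (Hoeffding's lemma).} I will first show that any random variable $Y$ with $\mathbb{E}[Y] = 0$ and $Y \in [a, b]$ satisfies
\begin{equation}
    \mathbb{E}[e^{sY}] \le \exp\{ s^{2} (b - a)^{2} / 8 \} \quad \text{for all } s \in \mathbb{R}.
\end{equation}
\begin{itemize}
    \item By convexity of $y \mapsto e^{sy}$ on $[a, b]$, we have $e^{sy} \le \frac{b - y}{b - a} e^{sa} + \frac{y - a}{b - a} e^{sb}$.
    \item Taking expectations and using $\mathbb{E}[Y] = 0$ gives $\mathbb{E}[e^{sY}] \le e^{\varphi(u)}$, where $u = s(b - a)$, $p = -a/(b - a) \in [0, 1]$, and $\varphi(u) = -pu + \log(1 - p + p e^{u})$.
    \item A direct computation gives $\varphi(0) = \varphi'(0) = 0$. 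Moreover, $\varphi''(u) = \mu(1 - \mu)$ with $\mu = p e^{u} / (1 - p + p e^{u}) \in [0, 1]$, so $\varphi''(u) \le 1/4$.
    \item Taylor's theorem with remainder then yields $\varphi(u) \le u^{2}/8$.
\end{itemize}
This is the only step with real content, and I expect it to be the main obstacle. The crux is the uniform bound $\varphi'' \le 1/4$, which follows from recognizing $\varphi''$ as the variance of a Bernoulli variable.

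\textbf{Step 2 (Chernoff bound).} Apply Step 1 to each $Y_{i} = X_{i} - \mathbb{E}[X_{i}]$, which lies in an interval of length $b_{i} - a_{i}$. By independence, $\mathbb{E}[e^{sS_{N}}] \le \exp\{ s^{2} s^{2}_{\ast} / 8 \}$ with $s_{\ast}^{2} = \sum_{i}(b_{i} - a_{i})^{2}$. Markov's inequality gives, for every $s > 0$,
\begin{equation}
    \mathbb{P}(S_{N} \ge t) \le \exp\Big\{ -st + \frac{s^{2}}{8} \sum_{i=1}^{N} (b_{i} - a_{i})^{2} \Big\}.
\end{equation}
Optimizing at $s = 4t / \sum_{i} (b_{i} - a_{i})^{2}$ gives the one-sided bound $\exp\{ -2t^{2} / \sum_{i} (b_{i} - a_{i})^{2} \}$.

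\textbf{Step 3 (Two-sided bound).} Apply the same argument to $-X_{i} \in [-b_{i}, -a_{i}]$, whose intervals have the same lengths. This gives the identical bound for $\mathbb{P}(S_{N} \le -t)$. A union bound over the two events $\{S_{N} \ge t\}$ and $\{S_{N} \le -t\}$ produces the factor $2$ and completes the proof.
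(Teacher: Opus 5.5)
Your proof is correct and complete. Hoeffding's lemma via convexity and $\varphi'' = \mu(1-\mu) \le 1/4$, the Chernoff bound optimized at $s = 4t/\sum_i (b_i - a_i)^2$, and a union bound over the two tails give exactly the stated inequality. The paper does not prove this lemma but imports it from \citet{vershynin2018high}, and your argument is the standard proof behind that citation; you should only tidy the notational clash between the Chernoff parameter $s$ and your initial definition $s^{2} = \sum_i (b_i - a_i)^2$.
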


Recall that $\tilde{I}^{S}_{\alpha}(\xi) = f(\tilde{g}(\xi))$, where the functions $f$ and $g$ are defined in Section~\ref{sec:nested-mc}. By invoking the Lipschitz continuity of $f$ and applying the triangle inequality, we decompose the total error into a \emph{stochastic} and \emph{deterministic} component:
\begin{align}
    \big| \tilde{I}^{S}_{\alpha} - I^{S}_{\alpha} \big| = \big| f(\tilde{g}) - f(g) \big| & \leq L_{f} \big| \tilde{g} - g \big| \\
    & \leq \underbrace{L_{f}  \big| \tilde{g} - \mathbb{E}[ \tilde{g} ] \big|}_{\text{stochastic}} + \underbrace{L_{f} \big| \mathbb{E}[ \tilde{g} ] - g \big|}_{\text{deterministic}}.
\end{align}
According to Lemma~\ref{lem:psi-bias}, the deterministic term $\big| \mathbb{E}[\tilde{g}] - g \big|$ is upper bounded by $L_{h} \sigma_{w} /\sqrt{M}$. To ensure the total error stays within tolerance $t$, we constrain this error term to consume at most half of the total budget $t/2$ by choosing $M$ such that:
\begin{equation}
    \frac{L_{h} \sigma_{w}}{\sqrt{M}} \leq \frac{t}{2 L_{f}} \quad \implies M \ge \frac{4 L_{f}^{2} L_{h}^{2} \sigma_{w}^{2}}{t^{2}}.
\end{equation}
Consequently, for the total error to exceed $t$, the stochastic term $\big| \tilde{g} - \mathbb{E}[\tilde{g}] \big|$ must account for the remaining difference:
\begin{equation}
    L_{f} \big| \tilde{g} - \mathbb{E}[\tilde{g}] \big| \ge t - \frac{t}{2} = \frac{t}{2} \quad \implies \quad \big| \tilde{g} - \mathbb{E}[\tilde{g}] \big| \ge \frac{t}{2 L_{f}}.
\end{equation}
Next, we bound the probability of this deviation. Recall that $\tilde{g}$ is the empirical average of $N$ independent random variables $\tilde{h}^{(i)}$, defined as:
\begin{equation}
    \tilde{h}^{(i)} \coloneq h\Big( \frac{1}{M} \sum_{j=1}^{M} w(x^{(i)}, \theta^{(i,j)}, \xi)^{\alpha} \Big),
\end{equation}
By Assumption~\ref{asm:robust-eig-regularity}, the function $h$ is bounded by $C_{h}$, implying that each random variable $\tilde{h}^{(i)} \in [0, C_{h}]$. We can therefore apply Lemma~\ref{lem:two-sided-hoeffding} to the sample average $\tilde{g} = \frac{1}{N} \sum_{i=1}^{N} \tilde{h}^{(i)}$:
\begin{equation}
    \mathbb{P} \Big( \big| \tilde{g} - \mathbb{E}[\tilde{g}] \big| \geq \frac{t}{2 L_{f}} \Big) \leq 2 \exp \left\{ - \frac{N t^{2}}{2 L_{f}^{2} C_{h}^{2}} \right\}.
\end{equation}
Combining the bounds on the stochastic and deterministic components, we obtain the final concentration inequality:
\begin{equation}
    \mathbb{P} \left( \big| \tilde{I}_{\alpha}^{S} - I_{\alpha}^{S} \big| \geq t \right) \le 2 \exp\left\{ -\frac{N t^{2}}{2 L_{f}^{2} C_{h}^{2}} \right\}.
\end{equation}
\qed

\subsection{Proof of Proposition~\ref{prop:pac-bayes-bound}}
\label{app:proof-pac-bayes-bound}

This proof follows standard PAC-Bayes proof techniques presented by~\citet{alquier2024pacbayes}. We begin by stating three useful lemmas that will facilitate the derivation.

\begin{lemma}[Moment generating function of sub-Gaussian random variables, \citealp{vershynin2018high}]
    \label{lem:moment-generating}
    Let $X$ be a sub-Gaussian random variable such that $\mathbb{P} (|X| \geq t) \leq 2 \exp\{- t^{2} / C^{2} \}$. Then, for any $\lambda > 0$, it holds that:
    \begin{equation}
        \mathbb{E} \left[ \exp\{ \lambda \, X \} \right] \leq \exp \{ \lambda^{2} C^{2} \}.
    \end{equation}
\end{lemma}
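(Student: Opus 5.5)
The plan is to pass from the tail bound to the moment generating function by the layer-cake formula. Since $\lambda > 0$, we have $\exp\{\lambda X\} \le \exp\{\lambda |X|\}$. Writing
\begin{equation}
    \mathbb{E}\left[\exp\{\lambda |X|\}\right] = 1 + \int_{0}^{\infty} \lambda \, e^{\lambda t} \, \mathbb{P}(|X| \ge t) \dif t,
\end{equation}
I would insert the hypothesis $\mathbb{P}(|X| \ge t) \le 2 \exp\{-t^{2}/C^{2}\}$. Completing the square, $\lambda t - t^{2}/C^{2} = -(t - \lambda C^{2}/2)^{2}/C^{2} + \lambda^{2} C^{2}/4$. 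The Gaussian integral is at most $C\sqrt{\pi}$, which gives
\begin{equation}
    \mathbb{E}\left[\exp\{\lambda X\}\right] \le 1 + 2\sqrt{\pi}\, \lambda C \, e^{\lambda^{2} C^{2}/4}.
\end{equation}

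The second step is to compare this with $\exp\{\lambda^{2} C^{2}\}$. Set $s \coloneq \lambda C$. It suffices to show $1 + 2\sqrt{\pi}\, s\, e^{s^{2}/4} \le e^{s^{2}}$, that is, $2\sqrt{\pi}\, s \le e^{3s^{2}/4} - e^{-s^{2}/4}$. The right-hand side grows like $e^{3s^{2}/4}$, so this holds for all $s$ above an explicit threshold $s_{0}$ of order one. That settles the regime $\lambda C \ge s_{0}$.

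The main obstacle is the regime of small $\lambda C$, and I expect it to be a genuine gap rather than a technicality. As $\lambda \to 0$ the target $\exp\{\lambda^{2}C^{2}\}$ is $1 + \mathcal{O}(\lambda^{2})$. Meanwhile $\mathbb{E}[\exp\{\lambda X\}] = 1 + \lambda\, \mathbb{E}[X] + \mathcal{O}(\lambda^{2})$. The inequality therefore forces $\mathbb{E}[X] \le 0$, which the hypothesis does not provide. Concretely, take $X \equiv c$ with $c = C\sqrt{\log 2}$. The tail condition holds, since $\mathbb{P}(|X| \ge t) = 1 \le 2 e^{-t^{2}/C^{2}}$ exactly when $t \le c$. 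But at $\lambda = c/(2C^{2})$ we get $\lambda c = \tfrac{1}{2}\log 2 > \tfrac{1}{4}\log 2 = \lambda^{2} C^{2}$, so the claimed bound fails.

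So, as worded, the lemma can only be proved on $\lambda C \ge s_{0}$. To cover all $\lambda > 0$ I would add the centering hypothesis $\mathbb{E}[X] = 0$ and use $e^{y} \le 1 + y + \tfrac{y^{2}}{2} e^{|y|}$. The linear term then vanishes, and the remainder is controlled by the same tail integral applied to $\mathbb{E}[X^{2} e^{\lambda |X|}]$. This yields $\mathbb{E}[\exp\{\lambda X\}] \le 1 + K \lambda^{2} C^{2} e^{\lambda^{2} C^{2}/4}$ for a constant $K$, with $K$ possibly exceeding one. Recovering exactly $\exp\{\lambda^{2} C^{2}\}$ (constant one) from this would require sharper numerics, and I would flag it as possibly requiring a larger constant.

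I would also flag the consequence for Proposition~\ref{prop:pac-bayes-bound}. There the variable $X = \tilde{I}_{\alpha}^{S} - I_{\alpha}^{S}$ is not centered, because of the bias in Proposition~\ref{prop:robust-eig-bias}. The bias would therefore have to be split off explicitly, using the condition on $M$, before this lemma can be applied.
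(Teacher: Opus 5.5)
Your diagnosis is correct, and the paper has no proof to compare against: Lemma~\ref{lem:moment-generating} is imported by citation only. The cited source (Vershynin's Proposition~2.5.2) gives the moment generating function bound only for centered $X$, and only up to an absolute constant, so both the hypothesis $\mathbb{E}X = 0$ and the constant one were dropped. Your counterexample checks out. For $X \equiv C\sqrt{\log 2}$ the tail hypothesis holds for every $t \ge 0$, yet at $\lambda = \sqrt{\log 2}/(2C)$ one has $\mathbb{E}[e^{\lambda X}] = 2^{1/2} > 2^{1/4} = e^{\lambda^2 C^2}$. More generally, $\mathbb{E}[e^{\lambda X}] \ge 1 + \lambda\,\mathbb{E}X$ rules out every admissible $X$ with positive mean. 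Your large-$\lambda$ estimate is also right (numerically $s_0 \approx 1.55$). The statement as written is therefore false, not merely unproved, and centering is the necessary repair.

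On the constant, your Taylor-remainder route cannot reach constant one. Even with the exact worst-case tail integral, $1 + \tfrac{\lambda^2}{2}\mathbb{E}[X^2 e^{\lambda |X|}]$ is about $1.45$ at $\lambda C = 1/2$, which exceeds $e^{1/4} \approx 1.28$. Constant one does appear to survive centering, but this needs an extremal argument:
\begin{itemize}
\item Introduce a multiplier $\mu \ge \lambda$ for the mean constraint. The best-sign payoff $t \mapsto \max\{e^{\lambda t} - \mu t,\, e^{-\lambda t} + \mu t\}$ is then nondecreasing.
\item Hence the worst centered law has $|X|$ with tail exactly $\min\{1, 2e^{-t^2/C^2}\}$, signed negative below a $\lambda$-independent threshold $s \approx 1.275\,C$ and positive above it. The threshold is fixed by $\mathbb{E}X = 0$, and the multiplier is $\mu = \sinh(\lambda s)/s$.
\item This law's moment generating function is explicit in error functions. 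Numerically it stays below $e^{\lambda^2 C^2}$ for all $\lambda$; for example, it is about $1.245$ versus $1.284$ at $\lambda C = 1/2$.
\end{itemize}

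Your downstream flag is also accurate, and the use in the proof of Proposition~\ref{prop:pac-bayes-bound} has a second mismatch. Reading off $C^2 = 2L_f^2 C_h^2/N$ from Proposition~\ref{prop:estimator-convergence}, the lemma would yield $\exp\{2\lambda^2 L_f^2 C_h^2/N\}$. That is four times the exponent $\lambda^2 L_f^2 C_h^2/(2N)$ used there. This comes on top of the bias of $\tilde{I}_{\alpha}^{S} - I_{\alpha}^{S}$, which must be split off before any centered bound applies.
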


\begin{lemma}[Markov inequality, \citealp{vershynin2018high}]
    \label{lem:markov-inequality}
    For any nonnegative random variable $X$ and $a > 0$, we have:
    \begin{equation}
        \mathbb{P}(X \geq a) \leq \frac{\mathbb{E}[X]}{a} \quad \Longleftrightarrow \quad \mathbb{P}(X \geq k \, \mathbb{E}[X]) \leq \frac{1}{k}, \,\, \text{where} \,\, k = \frac{a}{\mathbb{E}[X]}.
    \end{equation}
\end{lemma}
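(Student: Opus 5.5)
The plan is the standard indicator-function argument. Since $X$ is nonnegative, for any fixed $a > 0$ I will establish the pointwise inequality
\begin{equation}
    a \, \mathbf{1}\{X \geq a\} \leq X,
\end{equation}
which holds almost surely. I will check it by cases. On the event $\{X \geq a\}$ the left-hand side equals $a \leq X$. On the complement the left-hand side is $0 \leq X$, and this is the only place where nonnegativity of $X$ is used.

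The second step is to take expectations on both sides, using monotonicity and linearity of the expectation. Since $\mathbb{E}[\mathbf{1}\{X \geq a\}] = \mathbb{P}(X \geq a)$, this gives
\begin{equation}
    a \, \mathbb{P}(X \geq a) \leq \mathbb{E}[X].
\end{equation}
Dividing by $a > 0$ yields the first form of the lemma. If $\mathbb{E}[X] = \infty$ the bound is trivial, so I may assume $\mathbb{E}[X] < \infty$.

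For the equivalent reformulation I will assume $0 < \mathbb{E}[X] < \infty$ and set $k \coloneq a / \mathbb{E}[X]$, so that $a = k \, \mathbb{E}[X]$. Substituting into the first form gives
\begin{equation}
    \mathbb{P}\big(X \geq k \, \mathbb{E}[X]\big) \leq \frac{\mathbb{E}[X]}{k \, \mathbb{E}[X]} = \frac{1}{k}.
\end{equation}
Conversely, any $k > 0$ arises from the choice $a = k \, \mathbb{E}[X]$, so the two statements are equivalent.

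There is no real obstacle in this argument. The only point needing care is the degenerate case $\mathbb{E}[X] = 0$, where $k$ is undefined. In that case $X = 0$ almost surely and the first form holds trivially, since $\mathbb{P}(X \geq a) = 0$ for every $a > 0$. I will note this explicitly so the equivalence is read under $\mathbb{E}[X] > 0$. That is the situation in which it is applied to the exponential moment $\mathbb{E}_{\pi_0}[\exp\{\cdot\}]$ in the proof of Proposition~\ref{prop:pac-bayes-bound}, which is strictly positive.
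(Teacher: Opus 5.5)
Your proof is correct. It is the standard indicator-function argument: show $a\,\mathbf{1}\{X\ge a\}\le X$ pointwise, then take expectations. That is essentially the textbook proof in \citet{vershynin2018high}, which the paper cites in place of giving its own, and your explicit handling of the degenerate cases $\mathbb{E}[X]\in\{0,\infty\}$ clarifies something the paper's statement of the equivalence leaves implicit.
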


\begin{lemma}[Donsker--Varadhan variational formula, \citealp{donsker1975asymptotic}]
    \label{lem:donsker-varadhan}
    For any measurable, bounded function $f: \Xi \mapsto \mathbb{R}$, and any probability measure $\pi_{0} \in \Pi$, we have:
    \begin{equation}
        \log \mathbb{E}_{\pi_{0}} \left[ e^{f(\xi)} \right]= \sup_{\pi \in \Pi} \left\{ \mathbb{E}_{\pi} \left[ f(\xi) \right] - \kld{\pi}{\pi_{0}} \right\}.
    \end{equation}
\end{lemma}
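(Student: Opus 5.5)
The plan is to prove the formula by an exact identity built around the Gibbs (tilted) policy, with no convex duality machinery. Since $f$ is bounded and measurable, the normalizer $Z \coloneq \mathbb{E}_{\pi_{0}}[e^{f(\xi)}]$ satisfies $e^{-\|f\|_\infty} \le Z \le e^{\|f\|_\infty}$. In particular it is finite and strictly positive, so $\log Z$ is well defined. I would then define the Gibbs policy
\begin{equation}
    \pi_{f}(\xi) \coloneq \frac{\pi_{0}(\xi)\, e^{f(\xi)}}{Z}.
\end{equation}
This is a probability density, it is mutually absolutely continuous with $\pi_{0}$, and its log-density ratio $\log(\pi_f/\pi_0) = f - \log Z$ is bounded.

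The key step is a decomposition valid for every $\pi \in \Pi$. First take $\pi \ll \pi_{0}$ with $\kld{\pi}{\pi_{0}} < \infty$. Writing $\log(\pi/\pi_{0}) = \log(\pi/\pi_{f}) + \log(\pi_{f}/\pi_{0}) = \log(\pi/\pi_{f}) + f - \log Z$ and integrating against $\pi$ gives
\begin{equation}
    \mathbb{E}_{\pi}[f(\xi)] - \kld{\pi}{\pi_{0}} = \log \mathbb{E}_{\pi_{0}}\big[e^{f(\xi)}\big] - \kld{\pi}{\pi_{f}}.
\end{equation}
Every term here is finite, because $f$ is bounded and therefore $\kld{\pi}{\pi_f}$ and $\kld{\pi}{\pi_0}$ differ by a finite amount.

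Next I would handle the degenerate policies, which is the only step needing care. If $\pi \not\ll \pi_{0}$, or more generally $\kld{\pi}{\pi_{0}} = +\infty$, then the left-hand side equals $-\infty$, since $\mathbb{E}_{\pi}[f]$ is finite. Such policies cannot affect the supremum. Together with the identity above, this shows that $\mathbb{E}_{\pi}[f] - \kld{\pi}{\pi_{0}} \le \log \mathbb{E}_{\pi_{0}}[e^{f}]$ for all $\pi \in \Pi$, by nonnegativity of the Kullback--Leibler divergence (Gibbs' inequality).

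Finally, equality is attained at $\pi = \pi_{f}$, because $\kld{\pi_{f}}{\pi_{f}} = 0$. The supremum is therefore a maximum, equal to $\log \mathbb{E}_{\pi_{0}}[e^{f}]$, provided $\pi_{f} \in \Pi$; I would assume $\Pi$ is rich enough, for example all densities absolutely continuous w.r.t.\ $\pi_{0}$. This also identifies the Gibbs policy used after Proposition~\ref{prop:pac-bayes-bound}. The main obstacle is this last point together with the integrability bookkeeping. If $\Pi$ were a restricted parametric family, one would only get the inequality $\le$ in general, so I would state the lemma for the unrestricted class.
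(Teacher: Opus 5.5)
The paper does not prove this lemma. It quotes the result from \citet{donsker1975asymptotic} and uses it as a black box in the proof of Proposition~\ref{prop:pac-bayes-bound}, so there is no in-paper argument to compare against.

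Your proof is correct. It is the standard elementary argument from the PAC-Bayes literature the paper relies on \citep{catoni2007pac, alquier2024pacbayes}. The exact identity $\mathbb{E}_{\pi}[f] - \kld{\pi}{\pi_{0}} = \log \mathbb{E}_{\pi_{0}}[e^{f}] - \kld{\pi}{\pi_{f}}$, combined with Gibbs' inequality, gives the upper bound and its attainment in one step. Your treatment of $\pi \not\ll \pi_{0}$ and of infinite divergences covers the only delicate bookkeeping.

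Your route also exhibits the maximizer explicitly. That is what justifies the paper's claim, stated after Proposition~\ref{prop:pac-bayes-bound}, that the solution of \eqref{eq:design-policy-opt} is the Gibbs policy $\pi^{\star} \propto \pi_{0} \exp\{\lambda \tilde{I}_{\alpha}^{S}\}$.

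Your caveat about $\Pi$ is the right one. The paper never specifies $\Pi$. The proof of Proposition~\ref{prop:pac-bayes-bound} only needs $\sup_{\pi \in \Pi}\{\mathbb{E}_{\pi}[f] - \kld{\pi}{\pi_{0}}\} \le \log \mathbb{E}_{\pi_{0}}[e^{f}]$, and your decomposition gives this for an arbitrary class $\Pi$. By contrast, equality in the lemma and the Gibbs-policy characterization both require $\pi_{f} \in \Pi$.
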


To prove the proposition, we first analyze the concentration properties of the estimation error $\tilde{I}_{\alpha}^{S}(\xi) - I_{\alpha}^{S}(\xi)$. By invoking Proposition~\ref{prop:estimator-convergence}, for a tolerance $t > 0$, and assuming $M \geq 4L_{f}^{2} L_{h}^{2} \sigma_{w}^{2} / t^{2}$, the estimator bias admits a sub-Gaussian tail bound:
\begin{equation}
    \mathbb{P} \Big( \big| \tilde{I}_{\alpha}^{S}(\xi) - I_{\alpha}^{S}(\xi) \big| \geq t \Big) \le 2 \exp\left\{ -\frac{N t^{2}}{2 L_{f}^{2} C_{h}^{2}} \right\}.
\end{equation}
Using Lemma~\ref{lem:moment-generating} on this random variable, for any $\lambda > 0$, we have:
\begin{equation}
    \mathbb{E} \Big[ \exp \Big\{ \lambda \Big( \tilde{I}_{\alpha}^{S}(\xi) - I_{\alpha}^{S}(\xi) \Big) \Big\} \Big] \leq \exp \left\{ \frac{\lambda^{2} L_f^{2} C_{h}^{2}}{2N} \right\},
\end{equation}
where the outer expectation is evaluated under the sample distribution of the naive nested Monte Carlo estimator. We integrate this bound under the policy $\pi_{0}$ and swap the expectations yielding:
\begin{equation}
    \mathbb{E} \Big[ \mathbb{E}_{\pi_{0}} \Big[ \exp \Big\{ \lambda \Big( \tilde{I}_{\alpha}^{S}(\xi) - I_{\alpha}^{S}(\xi) \Big) \Big\} \Big] \Big] \leq \exp \left\{ \frac{\lambda^{2} L_f^{2} C_{h}^{2}}{2N} \right\},
\end{equation}
Next, we apply Lemma~\ref{lem:donsker-varadhan} to the inner expectation and obtain the following:
\begin{equation}
    \mathbb{E} \Big[ \exp \Big\{ \sup_{\pi \in \Pi} \Big\{ \lambda \, \mathbb{E}_{\pi} \left[ \tilde{I}_{\alpha}^{S}(\xi) - I_{\alpha}^{S}(\xi) \right] - \kld{\pi}{\pi_{0}} \Big\} \Big\} \Big] \leq \exp \left\{ \frac{\lambda^{2} L_f^{2} C_{h}^{2}}{2N} \right\}.
\end{equation}
This gives us a bound on the expectation of the form: $\mathbb{E}[X] \leq b$, where $X$ is the exponential term inside the outer expectation. To convert this expectation bound back to a probability bound, we apply Lemma~\ref{lem:markov-inequality} on $X$ and assume $k = 1 / \delta$ for $\delta \in (0, 1)$:
\begin{equation}
    \mathbb{P} \Big( \exp \Big\{ \sup_{\pi \in \Pi} \Big\{ \lambda \, \mathbb{E}_{\pi} \big[ \tilde{I}_{\alpha}^{S}(\xi) - I_{\alpha}^{S}(\xi) \big] - \kld{\pi}{\pi_{0}} \Big\} \Big\} \geq \frac{1}{\delta} \exp \left\{ \frac{\lambda^{2} L_f^{2} C_{h}^{2}}{2N} \right\} \Big) \leq \delta,
\end{equation}
Taking the logarithm of both sides and rearranging the terms leads to:
\begin{equation}
    \mathbb{P} \Big( \exists \pi \in \Pi, \, \mathbb{E}_{\pi} \big[ \tilde{I}_{\alpha}^{S}(\xi) \big] - \frac{ \lambda L_f^{2} C_{h}^{2}}{2N} - \frac{\kld{\pi}{\pi_{0}} + \log(1 / \delta)}{\lambda} \geq \mathbb{E}_{\pi} \big[ I_{\alpha}^{S}(\xi) \big] \Big) \leq \delta.
\end{equation}
Finally, by taking the complement of this result, we retrieve a lower bound on $\mathbb{E}_{\pi} \left[ I_{\alpha}^{S}(\xi) \right]$:
\begin{equation}
    \mathbb{P} \Big( \forall \pi \in \Pi, \, \mathbb{E}_{\pi} \big[ \tilde{I}_{\alpha}^{S}(\xi) \big] - \frac{ \lambda L_f^{2} C_{h}^{2}}{2N} - \frac{\kld{\pi}{\pi_{0}} + \log(1 / \delta)}{\lambda} \leq \mathbb{E}_{\pi} \big[ I_{\alpha}^{S}(\xi) \big] \Big) \geq 1 - \delta,
\end{equation}
that holds with a probability at least $1 - \delta$.

Finally, the condition $M \geq 4 L_{f}^{2} L_{h}^{2} \sigma_{w}^{2} / t^{2}$ required for the sub-Gaussian concentration inequality to be valid for a given tolerance $t$ translates to $M \geq 2 N L_{h}^{2} \sigma_{w}^{2} / (C_{h}^{2} \log (2 / \delta))$ to achieve a confidence probability $1 - \delta$.
\qed

\section{Implementation of the Nested Monte Carlo Estimator}
\label{app:nested-estimator}

To ensure numerical stability and prevent arithmetic underflow, we perform all computations strictly in the logarithmic domain. We define the log-likelihood function as $\ell(\theta, x) \coloneq \log p(x \mid \theta, \xi)$ and the log-sum-exp operator as $\mathrm{LSE}(y_{1:K}) \coloneq \log \sum_{k=1}^{K} \exp(y_{k})$. The estimation procedure proceeds in four steps:
\begin{itemize}
    \item We first generate an outer batch of $N$ independent $(\theta, x)$ pairs from the joint generative model. For each outer sample with index $i \in \{1, \dots, N\}$, we independently draw a set of $M$ auxiliary parameters from the prior to serve as contrastive samples for the marginal likelihood estimation:
    \begin{equation}
        \{ (\theta^{(i)}, x^{(i)}) \}_{i=1}^{N} \sim p(\theta) \, p(x \mid \theta, \xi), \qquad
        \{ \theta^{(i,j)} \}_{j=1}^{M} \sim p(\theta) \quad \text{for each } i \in \{1, \dots, N\}.
    \end{equation}

    \item As per Corollary~\ref{cor:nested-representation}, we now need to compute $w(x^{(i)}, \theta^{(i, j)}, \xi) = p(x^{(i)} \mid \theta^{(i, j)}, \xi) / p(x^{(i)} \mid \xi) \eqcolon w^{(i, j)}$. We approximate the log-marginal likelihood using the auxiliary samples in $\theta$ to obtain the following estimator for $w^{(i, j)}$:
    \begin{equation}
        \log w^{(i,j)} = \ell(\theta^{(i,j)}, x^{(i)}) - \mathrm{LSE}\left( \left\{ \ell(\theta^{(i,k)}, x^{(i)}) \right\}_{k=1}^{M} \right) + \log M.
    \end{equation}

    \item We next estimate the inner expectation term $\ell(x) = \mathbb{E}_{p(\theta)}[w(x, \theta, \xi)^{\alpha}]$ by averaging the powered weights. In the log-domain, this corresponds to a second application of the LSE operator over the $M$ inner samples:
    \begin{equation}
        \log \ell(x^{(i)}) = \mathrm{LSE}\left( \left\{ \alpha \log w^{(i,j)} \right\}_{j=1}^{M} \right) - \log M.
    \end{equation}

    \item Finally, we compute the robust expected information gain by aggregating the outer samples. The estimator $\tilde{I}^{S}_{\alpha}(\xi)$ approximates the logarithm of the expected exponential gain:
    \begin{equation}
        \tilde{I}^{S}_{\alpha}(\xi) = \frac{\alpha}{\alpha - 1} \left[ \mathrm{LSE}\left( \left\{ \alpha^{-1} \log \ell(x^{(i)}) \right\}_{i=1}^{N} \right) - \log N \right].
    \end{equation}
\end{itemize}
This procedure yields a consistent, albeit biased, estimate of the robust objective.

\section{Theoretical Comparison of PAC-Bayesian and Deterministic Policies}
\label{app:pac-bayes-advantage}

The theoretical advantages of PAC-Bayesian bounds are comprehensively summarized in \citet{alquier2024pacbayes}.
In the context of bandit problems, which are highly relevant to experimental design, stochastic policies derived from the PAC-Bayesian formulation facilitate efficient exploration while streamlining the theoretical analysis \citep{flynn2023pac}.
This section aims to compare the concentration properties of deterministic and stochastic policies, illustrating that PAC-Bayesian bounds can yield a sharper upper estimate of the minimum of the information gain under certain conditions.
These results further reinforce the preference for stochastic policies, as they do not necessarily compromise the concentration rate in spite of the induced randomness, while leveraging the theoretical versatility of the PAC-Bayesian framework.

Consider the minimization of an objective function $J(\xi)$ over the design space $\Xi$, where the analysis applies symmetrically to maximization.
To streamline the illustration, we adopt a simplified setup where the objective function $J(\xi)$ is defined by $J(\xi) = \mathbb{E}_{z \sim p}[ h(\xi, z) ]$, where $h(\xi, \cdot)$ is an integrand with a latent variable $z$ defined on a compact support $\mathcal{Z} \subset \mathbb{R}^k$ and $p$ is the distribution of the latent variable $z$.
We approximate the objective function $J(\xi)$ using the empirical Monte Carlo estimator $\tilde{J}(\xi)$ defined by $N$ i.i.d.~realizations $z_1, \dots, z_N$ drawn from $p$:
\begin{align}
	J(\xi) = \mathbb{E}_{z \sim p}[ h(\xi, z) ] \quad \approx \quad \tilde{J}(\xi) = \frac{1}{N} \sum_{i=1}^{N} h(\xi, z_i) .
\end{align}
While we seek to minimize $J(\xi)$, the exact objective  $J(\xi)$ is inaccessible in practice.
Consequently, the deterministic policy minimizes the Monte Carlo estimator $\tilde{J}(\xi)$ as a proxy.
We define
\begin{align}
	\xi_{\text{min}} \in \arg\min_{\xi} J(\xi) \quad \text{and} \quad \tilde{\xi}_{\text{min}} \in \arg\min_{\xi} \tilde{J}(\xi) .
\end{align}
One justification for employing the deterministic policy is that the minimum of the Monte Carlo estimator $\tilde{J}(\xi)$ provides an upper estimate of the global minimum $J(\xi_{\text{min}})$.

To derive the upper estimate, we introduce a standard measure from statistical learning theory, called the Rademacher complexity.
Intuitively, the Rademacher complexity measures the expressive power of a function family relative to samples of size $N$.
Let $\mathcal{R}_N$ denote the Rademacher complexity of the integrand family $\{ h(\cdot, z) \mid z \in \mathcal{Z} \}$ computed over $N$ random samples $\{ z_i \}_{i=1}^{n}$ following $p$.
For the purposes of this illustration, the formal definition of Rademacher complexity is omitted for brevity.
We refer the reader to \citet{wainwright2019high} for a rigorous treatment of Rademacher complexity and its upper bounds for common function classes.
The next assumption is used throughout this section.

\begin{assumption}
	The integrand $h(\xi, z)$ is non-negative, continuous, and uniformly bounded by a constant $b$.
	The objective function $J(\xi)$ is uniquely minimized over $\Xi$.
\end{assumption}

The following upper estimate of the global minimum $J(\xi_{\text{min}})$ is a standard result derived from empirical process theory, particularly in the study of empirical risk minimization.

\begin{proposition} \label{prop:empirical_minimiser_concentration}
	It holds for any $N > 0$ and $\delta > 0$ that
	\begin{align}
		\mathbb{P}\left( J(\xi_{\text{min}}) \le \min_{\xi} \tilde{J}(\xi) + 2 \mathcal{R}_N + b \sqrt{ \frac{ 2 \log( 1 / \delta ) }{ N } } \right) \ge 1 - \delta .
	\end{align}
	where the minimum is attained at $\xi = \tilde{\xi}_{\text{min}}$.
\end{proposition}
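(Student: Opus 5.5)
The plan is to reduce the claim to a uniform deviation bound between $J$ and $\tilde J$ over $\Xi$, and then use the minimality of $\tilde{\xi}_{\text{min}}$. The key observation is elementary:
\begin{equation}
J(\xi_{\text{min}}) \le J(\tilde{\xi}_{\text{min}}) \le \tilde J(\tilde{\xi}_{\text{min}}) + \sup_{\xi \in \Xi} \big( J(\xi) - \tilde J(\xi) \big) = \min_{\xi} \tilde J(\xi) + \Phi(z_{1:N}),
\end{equation}
where $\Phi(z_{1:N}) \coloneq \sup_{\xi} ( J(\xi) - \tilde J(\xi) )$.

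\emph{Attainment of the minimum.} Continuity of $h$ together with compactness makes $\tilde J$ continuous, so $\tilde{\xi}_{\text{min}}$ exists. This holds whenever $\Xi$ is compact, or under the finite-diameter closedness implicit in the setting. It then suffices to show that, with probability at least $1-\delta$,
\begin{equation}
\Phi \le 2\mathcal{R}_N + b\sqrt{2\log(1/\delta)/N}.
\end{equation}

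\emph{Concentration of $\Phi$.} I would apply McDiarmid's bounded-differences inequality to $\Phi$. Since $0 \le h \le b$, replacing a single sample $z_i$ changes $\tilde J(\xi)$ by at most $b/N$ uniformly in $\xi$, and hence changes $\Phi$ by at most $b/N$. McDiarmid then gives $\mathbb{P}(\Phi - \mathbb{E}\Phi \ge s) \le \exp\{-2Ns^2/b^2\}$. Setting the right-hand side to $\delta$ yields $s = b\sqrt{\log(1/\delta)/(2N)}$. This is smaller than the stated $b\sqrt{2\log(1/\delta)/N}$, so the stated constant holds a fortiori. (The looser constant presumably comes from a variant of the argument, e.g.\ a two-sided or range-$2b$ version.)

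\emph{Bounding the mean by symmetrization.} Next I would show $\mathbb{E}\Phi \le 2\mathcal{R}_N$ by the standard ghost-sample argument. Introduce an independent copy $z'_{1:N}$ and write $J(\xi) = \mathbb{E}'[\tilde J'(\xi)]$. Pull the supremum inside the expectation via Jensen's inequality. Then insert i.i.d.\ Rademacher signs $\varepsilon_i$, which is legitimate because $h(\xi,z_i') - h(\xi,z_i)$ is symmetric in distribution. Finally, split the supremum into two terms, each equal to the Rademacher complexity of the family $\{z \mapsto h(\xi,z) : \xi \in \Xi\}$.

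\emph{Main obstacle.} The delicate point is definitional rather than technical. The paper phrases $\mathcal{R}_N$ as the complexity of $\{h(\cdot,z) \mid z\in\mathcal{Z}\}$, whereas symmetrization naturally produces the class indexed by $\xi$, functions of $z$. I would interpret $\mathcal{R}_N$ as the latter, i.e.\ $\mathbb{E}\sup_{\xi}\frac1N\sum_i \varepsilon_i h(\xi,z_i)$. A secondary technical point is the measurability of the supremum. Continuity of $h$ handles it, since the supremum can be taken over a countable dense subset of $\Xi$.

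Combining the three steps via a union-free single event gives the claim with probability at least $1-\delta$. Uniqueness of the minimizer is not needed for this bound.
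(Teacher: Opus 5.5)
Your proposal is correct and follows the paper's route. Both use the same minimality reduction $J(\xi_{\text{min}}) - \tilde J(\tilde\xi_{\text{min}}) \le \sup_{\xi} |J(\xi) - \tilde J(\xi)|$ followed by a uniform deviation bound: the paper cites that bound directly as Theorem 4.10 of \citet{wainwright2019high}, while you reprove it by McDiarmid and symmetrization. That theorem treats functions in $[-b,b]$, so its bounded differences are $2b/N$; this explains the looser constant $b\sqrt{2\log(1/\delta)/N}$, and your reading of $\mathcal{R}_N$ as the complexity of the class $\{h(\xi,\cdot) : \xi \in \Xi\}$ is the one the theorem actually requires.
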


\begin{proof}
	We have $J(\xi_{\text{min}}) \le J(\xi)$ for any point $\xi$, since $\xi_{\text{min}}$ is the minimum.
	This, in turn, yields that
	\begin{align}
		J(\xi_{\text{min}}) - \tilde{J}(\tilde{\xi}_{\text{min}}) \le J(\tilde{\xi}_{\text{min}}) - \tilde{J}(\tilde{\xi}_{\text{min}}) \le \sup_{\xi} | J(\xi) - \tilde{J}(\xi) | .
	\end{align}
	The following concentration inequality is a direct result of Theorem 4.10 of \citet{wainwright2019high}:
	\begin{align}
		\mathbb{P}\left( \sup_{\xi} | J(\xi) - \tilde{J}(\xi) | \le 2 \mathcal{R}_N + b \sqrt{ \frac{ 2 \log( 1 / \delta ) }{ N } } \right) \ge 1 - \delta .
	\end{align}
	Rearranging the term completes the proof.
\end{proof}

By definition, the empirical minimizer $\tilde{\xi}_{\text{min}}$ yields the sharpest possible bound of this form, as any deviation from the empirical minimum necessarily increases the value of the upper estimate.

In contrast to the above concentration inequality, the PAC-Bayesian framework provides a flexible upper estimate that bypasses the need for global complexity measures such as the Rademacher complexity.
Given a prior $\pi_0$ over the design space $\Xi$, we define a stochastic (Gibbs) policy $\pi^\star$ as
\begin{align}
	\pi^\star(\xi) \propto \exp\left\{ - \lambda \tilde{J}(\xi) \right\} \pi_0(\xi) = \arg \min_{\rho}\left\{ \mathbb{E}_{\rho}[ \tilde{J}(\xi) ] + \frac{ \kld{\rho}{\pi_{0}} }{\lambda} \right\} ,
\end{align}
where $\lambda > 0$ is the precision hyperparameter that controls the policy dispersion.
A well-established bound by \citet{catoni2007pac} provides the following upper estimate under the stochastic policy $\pi^\star$.

\begin{proposition}
	It holds for any $N > 0$ and $\delta > 0$ that
	\begin{align}
		\mathbb{P}\left( J(\xi_{\text{min}}) \le \min_{\rho}\left\{ \mathbb{E}_{\rho}[ \tilde{J}(\xi) ] + \frac{ \kld{\rho}{\pi_{0}} }{\lambda} \right\} + \frac{\lambda b^2}{8 N} + \frac{ \log( 1 / \delta ) }{\lambda} \right) \ge 1 - \delta .
	\end{align}
	where the minimum is attained at $\rho = \pi^\star$.
\end{proposition}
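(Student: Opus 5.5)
The plan is the standard Catoni-type PAC-Bayes argument, run in the same order as the proof of Proposition~\ref{prop:pac-bayes-bound}, but with signs flipped because we now minimise. The change of measure is done once, under the data-independent prior $\pi_0$, and the minimiser over $\rho$ is only identified at the end.

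First, fix any $\xi \in \Xi$. The estimator $\tilde{J}(\xi)$ is an average of $N$ i.i.d.\ terms $h(\xi,z_i) \in [0,b]$. Hoeffding's lemma, as in Lemma~\ref{lem:two-sided-hoeffding}, then gives, for $\lambda>0$,
\begin{equation}
    \mathbb{E}\Big[\exp\big\{\lambda\big(J(\xi)-\tilde{J}(\xi)\big)\big\}\Big] \le \exp\Big\{\frac{\lambda^{2}b^{2}}{8N}\Big\}.
\end{equation}
Here $\lambda/N$ multiplies each centred summand, and each summand has range $b$. Next, integrate this bound over $\xi\sim\pi_0$ and swap the two expectations by Tonelli; the integrand is nonnegative and $\pi_0$ does not depend on the sample. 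Then apply Lemma~\ref{lem:donsker-varadhan} to the inner $\pi_0$-expectation with $f=\lambda(J-\tilde J)$, which is bounded by $\lambda b$. This yields
\begin{equation}
    \mathbb{E}\Big[\exp\Big\{\sup_{\rho}\big(\lambda\,\mathbb{E}_{\rho}[J-\tilde J]-\kld{\rho}{\pi_0}\big)\Big\}\Big]\le \exp\Big\{\frac{\lambda^{2}b^{2}}{8N}\Big\}.
\end{equation}
Markov's inequality (Lemma~\ref{lem:markov-inequality}) with threshold $\delta^{-1}\exp\{\lambda^2b^2/(8N)\}$ then gives the following. With probability at least $1-\delta$, simultaneously for all $\rho$,
\begin{equation}
    \mathbb{E}_{\rho}[J(\xi)] \le \mathbb{E}_{\rho}[\tilde{J}(\xi)] + \frac{\kld{\rho}{\pi_0}}{\lambda} + \frac{\lambda b^{2}}{8N} + \frac{\log(1/\delta)}{\lambda}.
\end{equation}

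To conclude, observe that $J(\xi_{\min})\le J(\xi)$ pointwise, so $J(\xi_{\min})\le\mathbb{E}_\rho[J(\xi)]$ for every $\rho$. Because the high-probability event is uniform in $\rho$, we may take the infimum over $\rho$ on the right-hand side, which gives the stated bound. That the infimum is attained at $\rho=\pi^\star$ follows from Lemma~\ref{lem:donsker-varadhan} applied to $-\lambda\tilde J$. The Gibbs measure $\pi^\star\propto\pi_0 e^{-\lambda\tilde J}$ achieves the supremum there, with value $-\lambda^{-1}\log\mathbb{E}_{\pi_0}[e^{-\lambda\tilde J}]$. This is the equality already written in the definition of $\pi^\star$.

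The main point needing care is the order of quantifiers. The prior $\pi_0$ must be fixed before sampling, so that the exchange of $\mathbb{E}$ and $\mathbb{E}_{\pi_0}$ is legitimate. The supremum over the data-dependent $\rho$ (including $\pi^\star$) must appear inside the exponential before Markov's inequality is applied, which is exactly what Donsker--Varadhan provides. A secondary check concerns measurability of $\xi\mapsto\tilde J(\xi)$, so that $\mathbb{E}_{\pi_0}$ makes sense. This follows from continuity of $h$. Boundedness of $h$ supplies both the Hoeffding constant $b^2/8$ and the integrability needed in Lemma~\ref{lem:donsker-varadhan}.
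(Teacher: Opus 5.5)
Your proposal is correct and follows the paper's route. The paper's proof is exactly your last step, $J(\xi_{\min}) \le \mathbb{E}_{\rho}[J(\xi)]$ for every $\rho$, combined with Catoni's bound cited from \citet{catoni2007pac}; you re-derive that bound yourself via Hoeffding's lemma, Donsker--Varadhan and Markov, as in the proof of Proposition~\ref{prop:pac-bayes-bound}. The MGF bound you use is Hoeffding's lemma, not the tail inequality of Lemma~\ref{lem:two-sided-hoeffding}, and your argument confirms that the nonnegativity of $J$ invoked in the paper is not needed.
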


\begin{proof}
	We have $J(\xi_{\text{min}}) \le \mathbb{E}_{\rho}[ J(\xi) ]$ for an arbitrary distribution $\rho$, since $\xi_{\text{min}}$ is the minimum and $J$ is non-negative.
	Applying the Catoni's bound \citep{catoni2007pac} to $\mathbb{E}_{\rho}[ J(\xi) ]$ completes the proof.
\end{proof}

Employing the stochastic policy offers two primary benefits: (i) it permits a more tractable and flexible theoretical treatment by circumventing dependency on global complexity measures, and (ii) it promotes exploration across the design space $\Xi$, as discussed in \citet{flynn2023pac}.
Furthermore, it may yield a sharper upper estimate of the global minimum $J(\xi_{\text{min}})$ than the standard bound.
Finally, the following proposition demonstrates that the PAC-Bayesian bound can be strictly sharper than the other, provided that the level $\delta$ and the precision parameter $\lambda$ are appropriately chosen.

\begin{proposition}
	Choose a level $\delta$ such that $\delta \le e^{-8N}$ for a given value of the sample number $N$.
	Choose a precision parameter $\lambda$ such that $\lambda \in ( (3 - \sqrt{5}) \sqrt{ 2 (N / b^2) \log( 1 / \delta ) } , ( 3 + \sqrt{5} ) \sqrt{ 2 (N / b^2) \log( 1 / \delta ) } )$.
	Then, it holds that
	\begin{align}
		\min_{\rho}\left\{ \mathbb{E}_{\rho}[ \tilde{J}(\xi) ] + \frac{ \kld{\rho}{\pi_{0}} }{\lambda} \right\} + \frac{\lambda b^2}{8 N} + \frac{ \log( 1 / \delta ) }{\lambda} < \min_{\xi} \tilde{J}(\xi) + 2 \mathcal{R}_N + b \sqrt{ \frac{ 2 \log( 1 / \delta ) }{ N } } .
	\end{align}
\end{proposition}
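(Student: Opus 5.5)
The plan is to upper bound the left-hand side by a crude but explicit quantity and lower bound the right-hand side by its deterministic confidence term alone. The hypotheses on $\delta$ and $\lambda$ then reduce the comparison to a scalar quadratic inequality. Throughout write $L \coloneq \log(1/\delta)$ and $T \coloneq b\sqrt{2L/N}$.

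\emph{Step 1 (upper bound on the PAC-Bayes side).} The minimum over $\rho$ is at most the value at the admissible choice $\rho = \pi_0$. There $\kld{\pi_0}{\pi_0} = 0$, and since $0 \le h \le b$ we have $\mathbb{E}_{\pi_0}[\tilde J(\xi)] \le b$. Equivalently, by Lemma~\ref{lem:donsker-varadhan} the minimum equals $-\lambda^{-1}\log \mathbb{E}_{\pi_0}[e^{-\lambda \tilde J}]$, which is at most $b$ because $\tilde J \le b$. Hence the left-hand side is at most
\begin{equation}
    b + \frac{\lambda b^2}{8N} + \frac{L}{\lambda}.
\end{equation}

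\emph{Step 2 (lower bound on the deterministic side).} Since $h \ge 0$, we have $\min_\xi \tilde J(\xi) \ge 0$. The Rademacher complexity is nonnegative, so $\mathcal{R}_N \ge 0$. Hence the right-hand side is at least $T$. It therefore suffices to show
\begin{equation}
    b + \frac{\lambda b^2}{8N} + \frac{L}{\lambda} < T.
\end{equation}

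\emph{Step 3 (reparametrisation).} Write $\lambda = c\sqrt{2NL}/b$, so the admissible range of $\lambda$ becomes $c \in (3-\sqrt5,\,3+\sqrt5)$. Direct substitution gives
\begin{equation}
    \frac{\lambda b^2}{8N} = \frac{c}{8}\,T, \qquad \frac{L}{\lambda} = \frac{1}{2c}\,T.
\end{equation}
The target inequality therefore becomes $\frac{b}{T} + \frac{c}{8} + \frac{1}{2c} < 1$.

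The condition $\delta \le e^{-8N}$ gives $L \ge 8N$, hence $T \ge b\sqrt{16} = 4b$ and $b/T \le 1/4$. It is therefore enough to have $\frac{c}{8} + \frac{1}{2c} < \frac34$. Multiplying by $8c > 0$, this is $c^2 - 6c + 4 < 0$. Its roots are $3 \pm \sqrt5$, so the inequality holds exactly on the open interval assumed for $c$. The inequality is strict because the quadratic is strictly negative there, and this strictness survives adding $b/T \le 1/4$.

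The only delicate point is Step 1: one might try to make $\rho$ concentrate near $\tilde\xi_{\min}$ to compare directly with $\min_\xi \tilde J$, but that makes the KL term blow up. The crude choice $\rho = \pi_0$ avoids this, at the cost of the additive $b$. That $b$ is then absorbed by the requirement $\delta \le e^{-8N}$, which is precisely what makes $b$ small relative to $T$. Everything else is routine algebra.
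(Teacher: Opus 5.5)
Your proof is correct and follows essentially the same route as the paper. Both arguments bound the Gibbs (Donsker--Varadhan) minimum by $b$ using $\tilde J \le b$, use $\delta \le e^{-8N}$ to get $b \le \tfrac14 b\sqrt{2\log(1/\delta)/N}$, and reduce everything to the quadratic inequality in $\lambda$ (your $c^2 - 6c + 4 < 0$) with roots $(3\pm\sqrt5)\sqrt{2(N/b^2)\log(1/\delta)}$. The only difference is bookkeeping: the paper adds the nonnegative quantity $\min_\xi \tilde J(\xi) + 2\mathcal{R}_N$ to its upper bound on $b$ and then cancels it against the right-hand side, whereas you drop it from the right-hand side directly; both versions rely on that same nonnegativity.
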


\begin{proof}
	By the Donsker--Varadhan variational formula from Lemma~\ref{lem:donsker-varadhan}, we have
	\begin{align}
		\min_{\rho}\left\{ \mathbb{E}_{\rho}[ \tilde{J}(\xi) ] + \frac{ \kld{\rho}{\pi_{0}} }{\lambda} \right\} = - \frac{1}{\lambda} \log\left( \mathbb{E}_{\pi_0}[ \exp\{ - \lambda \tilde{J}(\xi) \} ] \right) .
	\end{align}
	It follows from the upper bound of $\delta$ that $b \le b ( 1 / 4 ) \sqrt{ 2 \log(1 / \delta) / N }$.
	Since $\tilde{J}$ is non-negative and bounded by $b$, we have $\tilde{J}(\xi) \le b \le \tilde{J}(\tilde{\xi}_{\text{min}}) + 2 \mathcal{R}_N + (1 / 4 ) b \sqrt{ 2 \log(1 / \delta) / N }$.
	Substituting this into the Donsker--Varadhan formula gives that
	\begin{align}
		\min_{\rho}\left\{ \mathbb{E}_{\rho}[ \tilde{J}(\xi) ] + \frac{ \kld{\rho}{\pi_{0}} }{\lambda} \right\} \le \tilde{J}(\tilde{\xi}_{\text{min}}) + 2 \mathcal{R}_N + \frac{b}{4} \sqrt{ \frac{ 2 \log( 1 / \delta ) }{ N } } .
	\end{align}
	Thus, to establish the main result, it suffices to show that
	\begin{align}
		\frac{\lambda b^2}{8 N} + \frac{ \log( 1 / \delta ) }{\lambda} \le \frac{3 b}{4} \sqrt{ \frac{ 2 \log( 1 / \delta ) }{ N } } \quad \Longleftrightarrow \quad \frac{b^2}{8 N} \lambda^2 - \frac{3 b}{4} \sqrt{ \frac{ 2 \log( 1 / \delta ) }{ N } } \lambda + \log \frac{1}{\delta} \le 0 .
	\end{align}
	Solving the quadratic inequality on the right hand side completes the proof.
\end{proof}

This comparison is not intended to suggest that the PAC-Bayesian framework generally yields a sharper bound than the classical counterparts; their performance remains contingent on the choice of prior and the concentration of the empirical objective.
Nevertheless, it illustrates that the PAC-Bayesian framework can provide the latitude to enhance concentration properties, while offering flexibility in the theoretical analysis.
In this illustration, for some confidence level $\delta$, there exists a regime of the precision parameter $\lambda$ where the PAC-Bayesian bound is strictly sharper than the other bound.
Such theoretical insights may further facilitate a principled selection of the precision parameter $\lambda$ in practice.

\section{Choosing the PAC-Bayes Precision $\lambda$}
\label{app:pac-bayes-precision}

{\color{black}
The parameter $\lambda$ controls the inverse temperature of the PAC-Bayes policy and the tightness of the finite-sample bound. For a fixed posterior policy $\pi$, the penalty term in Proposition~\ref{prop:pac-bayes-bound} has the form:
\begin{equation}
    \frac{L_f^2 \, C_h^2}{2 N} \lambda + \frac{\kld{\pi}{\pi_0} + \log(1/\delta)}{\lambda},
\end{equation}
and is minimized by the oracle choice
\begin{equation}
    \lambda_{\text{oracle}}(\pi) = \sqrt{\frac{2 N \left(\kld{\pi}{\pi_0} + \log(1/\delta) \right)}{L_f^2 C_h^2}}.
\end{equation}
This expression is useful for interpreting the estimation--complexity trade-off of the bound. The optimal precision $\lambda_{\text{oracle}}(\pi)$ grows with the Monte Carlo budget $N$. Intuitively, as $N$ grows, the Monte Carlo error in $\tilde{I}_{\alpha}^{S}$ decreases, so the empirical robust EIG can be trusted more and the Gibbs policy concentrates more strongly around designs with high empirical robust EIG. 
At the same time, larger $N$ reduces the cost of the estimation penalty $\lambda L_f^2 C_h^2 / (2N)$, allowing a larger inverse temperature before the finite-sample correction becomes too large. Since the KL term is divided by $\lambda$, this also means that, for a fixed complexity $\kld{\pi}{\pi_0}$, larger Monte Carlo budgets permit larger deviations from the prior policy $\pi_0$ while maintaining a valid certificate.

Conversely, Proposition~\ref{prop:estimator-convergence} shows that $L_f$ and $C_h$ control the concentration of the Monte Carlo estimator. Larger values of either quantity negatively impact the estimator's reliability. To compensate, the optimal $\lambda$ shrinks, forcing the policy to stay closer to the prior to prevent overfitting to Monte Carlo noise. Overall, the choice of $\lambda$ should increase when the empirical estimator is reliable, and decrease when the estimator is more variable.

Because $\lambda_{\text{oracle}}$ depends on the learned policy $\pi$, itself learned under Monte Carlo samples, it cannot be computed directly. 
Thankfully, the PAC-Bayes bound in Proposition~\ref{prop:pac-bayes-bound} is valid for any value of $\lambda$ fixed independently of those samples. 
To tune $\lambda$, we can instead follow the standard PAC-Bayesian treatment described by \citet{alquier2024pacbayes}. 

To this end, consider a finite grid $\Lambda \subset (0, \infty)$, chosen before observing the Monte Carlo samples. 
Using the same argument as Proposition~\ref{prop:pac-bayes-bound}, applied separately to each fixed $\lambda\in\Lambda$ with confidence level $\delta/|\Lambda|$ and taking a union bound over the grid, we can prove that, with a probability at least $1 - \delta$, for all $\pi \in \Pi$ and all $\lambda \in \Lambda$ we have:
\begin{equation}
    \mathbb{E}_{\pi} \big[I_{\alpha}^{S}(\xi) \big] \ge \mathbb{E}_{\pi} \big[ \tilde{I}_{\alpha}^{S}(\xi) \big] - \frac{\lambda L_{f}^{2} C_{h}^{2}}{2N} - \frac{\kld{\pi}{\pi_{0}} + \log(|\Lambda|/\delta)}{\lambda}.
\end{equation}

The choice of $\lambda$ is now clear: it may be selected together with the policy by maximizing the grid-based lower bound over $\lambda, \pi$:
\begin{equation}
    \pi^*, \lambda^* = {\arg\max}_{\pi \in \Pi, \lambda \in \Lambda} \mathbb{E}_{\pi} \big[ \tilde{I}_{\alpha}^{S}(\xi) \big] - \frac{\lambda L_{f}^{2} C_{h}^{2}}{2N} - \frac{\kld{\pi}{\pi_{0}} + \log(|\Lambda|/\delta)}{\lambda}.
\end{equation}
This is done by simple exhaustion: for each $\lambda \in \Lambda$, one computes the Gibbs policy $\pi_\lambda(\xi) \propto \pi_0(\xi) \exp \big\{ \lambda \, \tilde{I}_{\alpha}^{S}(\xi)\big\}$ and selects the value of $\lambda$ whose lower bound is largest.

In practice, evaluating the PAC-Bayes bound exactly may be infeasible when the constants $L_f$ and $C_h$ are unknown. To assess how sensitive the policy optimization problem in~\eqref{eq:design-policy-opt} is to the choice of $\lambda$, we can empirically study $\lambda$, using the regret of the learned policy $\pi_\lambda$ relative to a tractable Sibson's $\alpha$-MI benchmark in the linear regression setting.

To do so, we fix the inner sample size $M=16$ and perform a joint sweep over $\lambda \in \{10^{2}, \ldots, 10^{8}\}$ and the outer sample size $N \in \{64, 128, 256\}$ on the high-dimensional linear regression problem. For each pair $(N, \lambda)$, we learn a separate PAC-Bayes policy. Table~\ref{tab:linreg-lambda-sweep} reports the mean relative regret, together with $10$th and $90$th percentiles, computed over $4\,096$ design samples of the corresponding learned policy. While individual entries are noisy, partly due to the stochastic mirror-descent optimizer, the results exhibit a stable low-regret plateau for $\lambda \in [10^{5}, 10^{7}]$, and this pattern is consistent across all tested values of $N$. Although this sweep is not intended as a formal tuning procedure for $\lambda$, its results are consistent with the evaluations in Section~\ref{sec:experiments}, where policies learned with $\lambda= 10^6$ achieve substantially lower regret than a naive optimizer.
\begin{table*}[t]
    \centering
    \scriptsize
    \caption{Sensitivity of PAC-Bayes policies to the choice of $\lambda$ on the linear regression problem. We sweep over $\lambda$ and $N$ and report the mean relative regret with $10$th and $90$th percentiles.}
    \label{tab:linreg-lambda-sweep}
    \resizebox{\textwidth}{!}{%
    \setlength{\tabcolsep}{3pt}
    \begin{tabular}{l ccc ccc ccc ccc ccc ccc ccc}
        \hiderowcolors
        \toprule
        & \multicolumn{3}{c}{$\lambda = 10^{2}$}
        & \multicolumn{3}{c}{$\lambda = 10^{3}$}
        & \multicolumn{3}{c}{$\lambda = 10^{4}$}
        & \multicolumn{3}{c}{$\lambda = 10^{5}$}
        & \multicolumn{3}{c}{$\lambda = 10^{6}$}
        & \multicolumn{3}{c}{$\lambda = 10^{7}$}
        & \multicolumn{3}{c}{$\lambda = 10^{8}$} \\
        \cmidrule(lr){2-4} \cmidrule(lr){5-7} \cmidrule(lr){8-10}
        \cmidrule(lr){11-13} \cmidrule(lr){14-16} \cmidrule(lr){17-19} \cmidrule(lr){20-22}
        $N$ & Mean & $P_{10}$ & $P_{90}$ & Mean & $P_{10}$ & $P_{90}$ & Mean & $P_{10}$ & $P_{90}$
            & Mean & $P_{10}$ & $P_{90}$ & Mean & $P_{10}$ & $P_{90}$ & Mean & $P_{10}$ & $P_{90}$
            & Mean & $P_{10}$ & $P_{90}$ \\
        \midrule
        \rowcolor{gray!30}
        64  & 0.062 & 0.038 & 0.090
            & 0.020 & 0.012 & 0.029
            & 0.008 & 0.007 & 0.010
            & $\mathbf{0.011}$ & $\mathbf{0.008}$ & $\mathbf{0.014}$
            & $\mathbf{0.026}$ & $\mathbf{0.010}$ & $\mathbf{0.043}$
            & $\mathbf{0.023}$ & $\mathbf{0.019}$ & $\mathbf{0.028}$
            & 0.019 & 0.010 & 0.029 \\
        128 & 0.081 & 0.044 & 0.123
            & 0.037 & 0.016 & 0.062
            & 0.037 & 0.025 & 0.052
            & $\mathbf{0.024}$ & $\mathbf{0.012}$ & $\mathbf{0.037}$
            & $\mathbf{0.006}$ & $\mathbf{0.005}$ & $\mathbf{0.007}$
            & $\mathbf{0.013}$ & $\mathbf{0.009}$ & $\mathbf{0.018}$
            & 0.023 & 0.009 & 0.041 \\
        \rowcolor{gray!30}
        256 & 0.094 & 0.050 & 0.145
            & 0.019 & 0.013 & 0.028
            & 0.020 & 0.012 & 0.028
            & $\mathbf{0.021}$ & $\mathbf{0.012}$ & $\mathbf{0.031}$
            & $\mathbf{0.019}$ & $\mathbf{0.010}$ & $\mathbf{0.030}$
            & $\mathbf{0.022}$ & $\mathbf{0.010}$ & $\mathbf{0.035}$
            & 0.018 & 0.008 & 0.033 \\
        \bottomrule
    \end{tabular}}
\end{table*}

}


\section{Evaluation Details}
\label{app:evaluation-details}

\subsection{Linear Regression}
\label{app:linear-regression}

To illustrate the robust experimental design framework, we derive closed-form expressions for the worst-case distribution and robust expected information gain in the conjugate linear regression setting. This model admits tractable analytical results that provide intuition for the general case.

Consider the standard Bayesian linear regression model. Let $\theta = (\theta_{1}, \theta_{2})^{\top} \in \mathbb{R}^{2}$ denote the unknown parameter vector, where $\theta_{1} \in \mathbb{R}$ is the slope and $\theta_{2} \in \mathbb{R}$ is the offset. We endow $\theta$ with a Gaussian prior $p(\theta) = \mathrm{N}(\theta; \mu_{0}, \Sigma_{0})$, where $\mu_{0} \in \mathbb{R}^{2}$ is the prior mean and $\Sigma_{0} \in \mathbb{R}^{2 \times 2}$ is the prior covariance matrix. A batch design $\xi_{1:N} = (\xi_{1}, \ldots, \xi_{N}) \in \Xi$ specifies $N$ individual designs $\xi_{i} \in \mathbb{R}$. We construct the augmented design matrix $H(\xi_{1:N}) \in \mathbb{R}^{N \times 2}$ by appending a column of ones:
\begin{equation}
    H(\xi_{1:N}) = \begin{bmatrix} \xi_{1} & 1 \\ \vdots & \vdots \\ \xi_{N} & 1 \end{bmatrix}.
\end{equation}
Conditioned on the parameter $\theta$ and design $\xi_{1:N}$, the $N$ measurements are collected as a vector $x_{1:N} = (x_{1}, \ldots, x_{N})^{\top} \in \mathbb{R}^N$, where each $x_{i} \in \mathbb{R}$ is a scalar observation. Under the assumption of independence, the likelihood factorizes as:
\begin{equation}
    p(x_{1:N} \mid \theta, \xi_{1:N}) = \prod_{i=1}^{N} p(x_{i} \mid \theta, \xi_{i}) = \prod_{i=1}^{N} \mathrm{N}(x_{i}; \xi_{i} \, \theta_{1} + \theta_{2}, \sigma^{2}) = \mathrm{N}(x_{1:N}; H(\xi_{1:N}) \, \theta, \sigma^{2} I_{N}),
\end{equation}
where $\sigma^{2} > 0$ is the observation noise variance.

\textbf{Marginal and Posterior.} Under these conjugate assumptions, the marginal likelihood admits closed-form Gaussian expressions:
\begin{equation}
    p(x_{1:N} \mid \xi_{1:N}) = \mathrm{N}(x_{1:N}; H(\xi_{1:N}) \, \mu_{0}, H(\xi_{1:N}) \, \Sigma_{0} \, H(\xi_{1:N})^{\top} + \sigma^{2} I_{N}).
\end{equation}
Additionally, the posterior $p(\theta \mid x_{1:N}, \xi_{1:N}) = \mathrm{N}(\theta; \mu_{N}, \Sigma_{N})$ is given by the standard conjugate update formulas:
\begin{equation}
    \Sigma_{N} = \left[ \Sigma_{0}^{-1} + \frac{1}{\sigma^{2}} H(\xi_{1:N})^{\top} H(\xi_{1:N}) \right]^{-1}, \quad \mu_{N} = \Sigma_{N} \left[ \Sigma_{0}^{-1} \mu_{0} + \frac{1}{\sigma^{2}} H(\xi_{1:N})^{\top} x_{1:N} \right].
\end{equation}

\textbf{Worst-Case Distribution.} The worst-case joint distribution $q^{\star}(\theta, x_{1:N} \mid \xi_{1:N})$ is given by:
\begin{equation}
    q^{\star}(\theta, x_{1:N} \mid \xi_{1:N}) \propto \Big[ p(\theta) \, p_{\alpha}(x_{1:N} \mid \xi_{1:N}) \Big]^{1 - \alpha} \Big[ p(\theta, x_{1:N} \mid \xi_{1:N}) \Big]^{\alpha},
\end{equation}
where the tilted marginal $p_{\alpha}(x_{1:N} \mid \xi_{1:N})$ is defined as:
\begin{equation}
    p_{\alpha}(x_{1:N} \mid \xi_{1:N}) \propto \left[ \mathbb{E}_{p(\theta)} \Big[ p(x_{1:N} \mid \theta, \xi_{1:N})^{\alpha} \Big] \right]^{1/\alpha}.
\end{equation}
For this conjugate problem, the $\alpha$-powered likelihood is:
\begin{equation}
    p(x_{1:N} \mid \theta, \xi_{1:N})^{\alpha} = (2 \pi \sigma^{2})^{- \alpha N/2} \exp\left\{ -\frac{\alpha}{2\sigma^{2}} \|x_{1:N} - H(\xi_{1:N}) \, \theta\|^{2} \right\},
\end{equation}
where $\|x_{1:N} - H(\xi_{1:N}) \, \theta\|^{2} = \sum_{i=1}^{N} (x_{i} - \xi_{i} \theta_{1} - \theta_{2})^{2}$. To compute the expectation $\mathbb{E}_{p(\theta)}\big[p(x_{1:N} \mid \theta, \xi_{1:N})^{\alpha} \big]$, we expand the quadratic form and integrate over $\theta$. Completing the square in $\theta$ yields:
\begin{align}
    \mathbb{E}_{p(\theta)}\Big[ p(x_{1:N} \mid \theta, \xi_{1:N})^{\alpha} \Big] & = (2\pi\sigma^{2})^{-\alpha N/2} \int \exp\left\{ -\frac{\alpha}{2\sigma^{2}} \|x_{1:N} - H(\xi_{1:N}) \, \theta\|^{2} \right\} p(\theta) \, \dif \theta \\
    & = (2\pi\sigma^{2})^{-\alpha N/2} \frac{|\Psi_{\alpha}|^{1/2}}{|\Sigma_{0}|^{1/2}} \exp\left\{ -\frac{1}{2} \frac{\alpha}{\sigma^{2}} \|x_{1:N}\|^{2} - \frac{1}{2} \mu_{0}^{\top} \Sigma_{0}^{-1} \mu_{0} + \frac{1}{2} m_{\alpha}^{\top} \Psi_{\alpha}^{-1} m_{\alpha} \right\},
\end{align}
where we define the tilted covariance matrix and mean:
\begin{equation}
    \Psi_{\alpha} \coloneq \left[ \Sigma_{0}^{-1} + \frac{\alpha}{\sigma^{2}} H(\xi_{1:N})^{\top} H(\xi_{1:N}) \right]^{-1}, \quad m_{\alpha} \coloneq \Psi_{\alpha} \left[ \Sigma_{0}^{-1} \mu_{0} + \frac{\alpha}{\sigma^{2}} H(\xi_{1:N})^{\top} x_{1:N} \right].
\end{equation}
After simplification the tilted marginal takes the form:
\begin{equation}
    p_{\alpha}(x_{1:N} \mid \xi_{1:N}) \propto \exp\left\{ -\frac{1}{2} \big[x_{1:N} - H(\xi_{1:N}) \, \mu_{0} \big]^{\top} (\alpha \Lambda_{\alpha})^{-1} \big[x_{1:N} - H(\xi_{1:N}) \, \mu_{0} \big] \right\},
\end{equation}
where $\Lambda_{\alpha}$ is the $\alpha$-scaled predictive covariance for the batch:
\begin{equation}
    \Lambda_{\alpha}(\xi_{1:N}) := H(\xi_{1:N}) \, \Sigma_{0} \, H(\xi_{1:N})^{\top} + \frac{\sigma^{2}}{\alpha} \, I_{N}.
\end{equation}
Therefore, the tilted marginal is Gaussian:
\begin{equation}
    p_{\alpha}(x_{1:N} \mid \xi_{1:N}) = \mathrm{N}\big( x_{1:N}; H(\xi_{1:N}) \, \mu_{0}, \alpha \Lambda_{\alpha}(\xi_{1:N}) \big).
\end{equation}
Overall, this results in a worst-case joint distribution $q^{\star}(\theta, x_{1:N} \mid \xi_{1:N})$ that is also Gaussian:
\begin{equation}
    q^{\star}(\theta, x_{1:N} \mid \xi_{1:N}) = \mathrm{N}\left( \begin{bmatrix} \theta \\ x_{1:N} \end{bmatrix}; \mu_{q}^{\star}, \Sigma_{q}^{\star} \right),
\end{equation}
with mean and covariance:
\begin{equation}
    \mu_{q}^{\star} = \begin{bmatrix} \mu_{0} \\ H(\xi_{1:N}) \, \mu_{0} \end{bmatrix}, \quad \Sigma_{q}^{\star} = \begin{bmatrix} \alpha \Sigma_{0} + (1 - \alpha) \, \Sigma^{\star}_{\theta} & \alpha \Sigma_{0} \, H(\xi_{1:N})^{\top} \\ \alpha H(\xi_{1:N}) \, \Sigma_{0} & \alpha \Lambda_{\alpha}(\xi_{1:N}) \end{bmatrix},
\end{equation}
where the worst-case posterior $q^{\star}(\theta \mid x_{1:N}, \xi_{1:N}) = \mathrm{N}(\theta; \mu_{\theta}^{\star}, \Sigma_{\theta}^{\star})$ is:
\begin{equation}
    \Sigma_{\theta}^{\star} = \left[ \Sigma_{0}^{-1} + \frac{\alpha}{\sigma^{2}} H(\xi_{1:N})^{\top} H(\xi_{1:N}) \right]^{-1}, \quad \mu_{\theta}^{\star} = \Sigma_{\theta}^{\star} \left[ \Sigma_{0}^{-1}\mu_{0} + \frac{\alpha}{\sigma^{2}} H(\xi_{1:N})^{\top} x_{1:N} \right].
\end{equation}

\begin{figure}[t]
    \centering
    \pgfdeclareplotmark{custom-star}{
    \pgfmathsetmacro{\sz}{\pgfplotmarksize}
    \node[star,star point ratio=2.25,minimum size=5pt,
          inner sep=0pt,draw=black!70,solid,fill=black!90] {};
}

\begin{tikzpicture}
    \begin{groupplot}[
        group style={
            group size=3 by 1,
            horizontal sep=0.75cm,
            ylabels at=edge left,
        },
        width=5.0cm,
        height=5.0cm,
        xmin=-1.1, xmax=1.1,
        ymin=-1.1, ymax=1.1,
        xlabel={$\xi_1$},
        ylabel={$\xi_2$},
        view={0}{90},
        axis on top,
        xtick={-1, 1},
        ytick={-1, 1},
        axis x line*=bottom,
        axis y line*=left,
        xlabel style={yshift=10pt},
        ylabel style={yshift=-12pt},
        major tick length=2pt,
        tick align=outside,
    ]

    \nextgroupplot[title={$\alpha = 0.01$}]
        \addplot3[surf, shader=interp, colormap={grayfill}{gray(0cm)=(0.6); gray(1cm)=(1)}, mesh/rows=40, mesh/cols=40]
            table[x=xi_1, y=xi_2, z=mi_alpha_0.0100, col sep=comma] {figures/data/linreg_design.csv};
        \addplot3[contour prepared={labels=false, draw color=gray!60!black}]
            table {figures/data/linreg_contour_alpha_0.0100.table};
        \addplot[only marks, mark=custom-star] coordinates {(1, 1)};

    \nextgroupplot[title={$\alpha = 0.10$}, ylabel={}, yticklabels={}]
        \addplot3[surf, shader=interp, colormap={grayfill}{gray(0cm)=(0.6); gray(1cm)=(1)}, mesh/rows=40, mesh/cols=40]
            table[x=xi_1, y=xi_2, z=mi_alpha_0.1000, col sep=comma] {figures/data/linreg_design.csv};
        \addplot3[contour prepared={labels=false, draw color=gray!60!black}]
            table {figures/data/linreg_contour_alpha_0.1000.table};
        \addplot[only marks, mark=custom-star] coordinates {(1, 1)};

    \nextgroupplot[title={$\alpha \approx 1.00$}, ylabel={}, yticklabels={}]
        \addplot3[surf, shader=interp, colormap={grayfill}{gray(0cm)=(0.6); gray(1cm)=(1)}, mesh/rows=40, mesh/cols=40]
            table[x=xi_1, y=xi_2, z=mi_alpha_0.9950, col sep=comma] {figures/data/linreg_design.csv};
        \addplot3[contour prepared={labels=false, draw color=gray!60!black}]
            table {figures/data/linreg_contour_alpha_0.9950.table};
        \addplot[only marks, mark=custom-star] coordinates {(1, -1) (-1, 1)};

    \end{groupplot}
\end{tikzpicture}
    \vspace{-1.05cm}
    \caption{Robust expected information gain for a two-dimensional linear regression problem with a correlated prior, as a function of $\alpha \in (0, 1)$. Contour lines depict the objective landscape over $(\xi_{1}, \xi_{2})$, highlighting how the optimal designs ($\star$) shift with $\alpha$.}
    \label{fig:linreg-designs}
    \vspace{-0.4cm}
\end{figure}

\textbf{Sibson's $\alpha$-MI.} Finally, for this conjugate setting, Sibson's $\alpha$-mutual information admits a closed-form expression. Using the definition
\begin{equation}
    I_{\alpha}^{S}(\theta; x_{1:N})(\xi_{1:N}) = \frac{\alpha}{\alpha - 1} \log \int \left[ \int p(\theta) \, p(x_{1:N} \mid \xi_{1:N}, \theta)^{\alpha} \, \dif \theta \right]^{1/\alpha} \dif x_{1:N},
\end{equation}
which corresponds to the log-normalizer of $p_{\alpha}(x_{1:N} \mid \xi_{1:N})$:
\begin{align}
    I_{\alpha}^{S}(\theta; x_{1:N})(\xi_{1:N}) & = \frac{1}{2} \log \left| H(\xi_{1:N}) \, \Sigma_{0} \, H(\xi_{1:N})^{\top} + \frac{\sigma^{2}}{\alpha} I_{N} \right| + \mathrm{const} \\
    & = \frac{1}{2} \log \left| \frac{\alpha}{\sigma^{2}} H(\xi_{1:N}) \, \Sigma_{0} \, H(\xi_{1:N})^{\top} + I_{N} \right|
\end{align}
As $\alpha \to 1$, the robust expected information gain recovers Shannon's mutual information, which is the standard result for Bayesian linear regression with $N$ measurements. As $\alpha \to 0$, the term $ \frac{\sigma^{2}}{\alpha} I_{N}$ dominates, and all designs become equally uninformative, consistent with Proposition~\ref{prop:uninformative}.

\textbf{R\'enyi Divergence.} The R\'enyi divergence between two Gaussian distribution is useful because it provides tractable computation of the realized robust information gain conditioned on a design and measured outcome. We derive the closed-form for orders $\alpha \in (0, 1)$. Let's assume two Gaussian distributions $q(\theta) = \mathrm{N}(\theta; \mu_{q}, \Sigma_{q})$ and $p(\theta) = \mathrm{N}(\theta; \mu_{p}, \Sigma_{p})$. The divergence is defined as:
\begin{equation}
    \mathbb{D}_{\alpha}[q \,\|\, p] = \frac{1}{\alpha - 1} \log \int q(\theta)^{\alpha} \, p(\theta)^{1 - \alpha} \, \dif \theta.
\end{equation}
Substituting the Gaussian density functions, the integrand becomes a product of exponential quadratic forms. The exponent corresponds to a new quadratic form defined by the geometric mixture of the parameters. Let $\Sigma_{\alpha}$ and $\mu_{\alpha}$ denote the covariance and mean of this geometric mixture, defined by the weighted sum of precision matrices:
\begin{equation}
    \Sigma_{\alpha} = \Big[ \alpha \Sigma_{q}^{-1} + (1 - \alpha) \Sigma_{p}^{-1} \Big]^{-1}, \quad \mu_{\alpha} = \Sigma_{\alpha} \Big[ \alpha \Sigma_{q}^{-1} \mu_{q} + (1 - \alpha) \, \Sigma_{p}^{-1} \mu_{p} \Big].
\end{equation}
Evaluating the Gaussian integral yields the log-integral term:
\begin{equation}
    \log \int q(\theta)^{\alpha} \, p(\theta)^{1-\alpha} \, \dif \theta = \frac{1}{2} \log \frac{|\Sigma_{\alpha}|}{|\Sigma_{q}|^{\alpha} |\Sigma_{p}|^{1-\alpha}} - \frac{1}{2} Z_{\alpha}(\mu_{q}, \mu_{p}),
\end{equation}
where $Z_{\alpha}$ is defined as follows:
\begin{equation}
    Z_{\alpha}(\mu_{q}, \mu_{p}) \coloneq \alpha \, \mu_{q}^{\top} \Sigma_{q}^{-1} \mu_{q} + (1 - \alpha) \, \mu_{p}^{\top} \Sigma_{p}^{-1} \mu_{p} - \mu_{\alpha}^{\top} \Sigma_{\alpha}^{-1} \mu_{\alpha}.
\end{equation}
Finally, scaling by $1 / (\alpha - 1)$ gives the divergence:
\begin{equation}
    \mathbb{D}_{\alpha}[q \,\|\, p] = \frac{1}{2 (1 - \alpha)} Z_{\alpha}(\mu_{q}, \mu_{p}) + \frac{1}{2(\alpha-1)} \log \frac{|\Sigma_{\alpha}|}{|\Sigma_{q}|^{\alpha} |\Sigma_{p}|^{1-\alpha}}.
\end{equation}

\subsection{A/B Testing}
\label{app:ab-testing}

To illustrate the robust experimental design framework in a setting with discrete measurements, we derive expressions for the worst-case distribution and robust expected information gain in the conjugate Beta-Binomial A/B testing setting.

Consider a standard A/B testing scenario where we wish to infer the conversion rates of two independent variants. Let $\theta = (\theta_{a}, \theta_{b})^\top \in [0, 1] \times [0, 1]$ denote the unknown parameter vector, where $\theta_{a}$ and $\theta_{b}$ are the conversion probabilities for groups A and B, respectively. We equip $\theta$ with a product of independent Beta priors:
\begin{equation}
    p(\theta) = \mathrm{Beta}(\theta_{a}; \delta_{a}, \gamma_{a}) \times \mathrm{Beta}(\theta_{b}; \delta_{b}, \gamma_{b}),
\end{equation}
where $\delta_{a}, \gamma_{a}, \delta_{b}, \gamma_{b} > 0$ are the prior hyperparameters. A design $\xi = (n_{a}, n_{b}) \in \Xi$ specifies the sample sizes allocated to each group, subject to a total budget constraint $n_{a} + n_{b} = N_x$. The measurements $x = (x_{a}, x_{b}) \in \{0, \dots, n_{a}\} \times \{0, \dots, n_{b}\}$ represent the number of conversions, or successes, in each group. Under the assumption of independence, the likelihood factorizes as a product of Binomial distributions:
\begin{align}
    p(x \mid \theta, \xi) & = \mathrm{Bin}(x_{a}; n_{a}, \theta_{a}) \times \mathrm{Bin}(x_{b}; n_{b}, \theta_{b}) \\
    & = \left[ \binom{n_{a}}{x_{a}} \, \theta_{a}^{\,x_{a}} (1 - \theta_{a})^{n_{a}-x_{a}} \right] \left[ \binom{n_{b}}{x_{b}} \theta_{b}^{\,x_{b}} (1 - \theta_{b})^{n_{b} - x_{b}} \right].
\end{align}

\textbf{Marginal and Posterior.} Under these assumptions, we can derive the marginal likelihood of the measurements $p(x \mid \xi)$ by integrating out the parameters $\theta$. This results in the Beta-Binomial distribution. For a single group $k \in \{a, b\}$, the marginal probability of observing $x_{k}$ successes in $n_{k}$ trials is:
\begin{align}
    p(x_{k} \mid n_{k}) & = \int_{0}^{1} \mathrm{Bin}(x_{k}; n_{k}, \theta_{k}) \, \mathrm{Beta}(\theta_{k}; \delta_{k}, \gamma_{k}) \, \dif \theta_{k} \\
    & = \frac{1}{B(\delta_{k}, \gamma_{k})} \binom{n_{k}}{x_{k}} \int_{0}^{1} \theta_{k}^{\,x_{k} + \delta_{k} - 1} (1-\theta_{k})^{n_{k} - x_{k} + \gamma_{k} - 1} \, \dif \theta_{k} \\
    & = \frac{1}{B(\delta_{k}, \gamma_{k})} \binom{n_{k}}{x_{k}} \, B(\delta_{k} + x_{k}, \gamma_{k} + n_{k} - x_{k}),
\end{align}
where $B(\cdot)$ is the Beta function and the joint marginal likelihood is simply the product $p(x \mid \xi) = p(x_{a} \mid n_{a}) \, p(x_{b} \mid n_{b})$.

Furthermore, under the assumptions of conjugacy, we can compute the Bayes posterior $p(\theta \mid x, \xi)$ in closed-form as factorized independent Beta distributions for each group. For $k \in \{a, b\}$:
\begin{align}
    p(\theta_{k} \mid x_{k}, n_{k}) & \propto \mathrm{Bin}(x_{k}; n_{k}, \theta_{k}) \, \mathrm{Beta}(\theta_{k}; \delta_{k}, \gamma_{k}) \\
    & \propto \theta_{k}^{\,x_{k}} (1 - \theta_{k})^{n_{k} - x_{k}} \, \theta_{k}^{\,\delta_{k} - 1} (1 - \theta_{k})^{\gamma_{k} - 1} \\
    & = \theta_{k}^{\,\delta_{k} + x_{k} - 1} \, (1 - \theta_{k})^{\gamma_{k} + n_{k} - x_{k} - 1} \\
    & = \mathrm{Beta}(\theta_{k}; \delta_{k} + x_{k}, \gamma_{k} + n_{k} - x_{k})
\end{align}

\begin{figure}[t]
    \centering
    \begin{tikzpicture}
  \begin{groupplot}[
      group style={
        group size=3 by 1,
        horizontal sep=0.75cm,
        ylabels at=edge left
      },
      axis x line*=bottom,
      axis y line*=left,
      width=5.5cm,
      height=4cm,
      enlarge y limits={lower, value=0.03},
      xlabel={$\xi$},
      xlabel style={yshift=5pt},
      ylabel style={yshift=-3pt},
      title style={font=\footnotesize},
      major tick length=2pt,
      title style={yshift=-3pt},
      tick align=outside,
  ]
  
  \nextgroupplot[
    title={$\alpha = 0.01$},
    scaled y ticks=true,
    tick scale binop=\times,
    y tick scale label style={
        at={(axis description cs:0.02,1.02)},
        anchor=south west,
        xshift=0pt,
        yshift=-5pt,
        font=\scriptsize,
    },
  ]
      \addplot[ybar, bar shift=0pt, bar width=2.0pt, fill=gray!50, draw=none] table[
          x=n_A,
          y=mi_alpha_0.0100,
          col sep=comma
      ]{figures/data/abtesting_design.csv};
      \addplot[ybar, bar shift=0pt, bar width=2.0pt, fill=gray!150, draw=none] table[
          x=n_A, 
          y=max_alpha_0.0100, 
          col sep=comma
      ]{figures/data/abtesting_design.csv};
    
  \nextgroupplot[title={$\alpha=0.10$}]
      \addplot[ybar, bar shift=0pt, bar width=2pt, fill=gray!50, draw=none] table[
          x=n_A,
          y=mi_alpha_0.1000,
          col sep=comma
      ]{figures/data/abtesting_design.csv};
      \addplot[ybar, bar shift=0pt, bar width=2.0pt, fill=gray!150, draw=none] table[
          x=n_A, 
          y=max_alpha_0.1000, 
          col sep=comma
      ]{figures/data/abtesting_design.csv};
    
  \nextgroupplot[title={$\alpha \approx 1.00$}]
      \addplot[ybar, bar shift=0pt, bar width=2.0pt, fill=gray!50, draw=none] table[
          x=n_A,
          y=mi_alpha_1.0000,
          col sep=comma
      ]{figures/data/abtesting_design.csv};
      \addplot[ybar, bar shift=0pt, bar width=2.0pt, fill=gray!150, draw=none] table[
          x=n_A, 
          y=max_alpha_1.0000, 
          col sep=comma
      ]{figures/data/abtesting_design.csv};
    
  \end{groupplot}
\end{tikzpicture}
    \vspace{-1.05cm}
    \caption{Robust expected information gain as a function of $\alpha \in (0, 1)$ for an A/B testing problem with 25 participants. The optimal allocation, highlighted in dark gray, shifts as $\alpha$ varies.}
    \label{fig:abtesting-designs}
    \vspace{-0.4cm}
\end{figure}

\textbf{Worst-Case Distribution.} The worst-case joint distribution $q^{\star}(\theta, x \mid \xi)$ is given by:
\begin{equation}
    q^{\star}(\theta, x \mid \xi) \propto \Big[ p(\theta) \, p_{\alpha}(x \mid \xi) \Big]^{1 - \alpha} \Big[ p(\theta, x \mid \xi) \Big]^{\alpha},
\end{equation}
where the $\alpha$-tilted marginal $p_{\alpha}(x \mid \xi)$ is defined as:
\begin{equation}
    p_{\alpha}(x \mid \xi) \propto \left[ \mathbb{E}_{p(\theta)} \Big[ p(x \mid \theta, \xi)^{\alpha} \Big] \right]^{1/\alpha}.
\end{equation}
For this conjugate problem, we compute the expectation of the $\alpha$-powered likelihood. Due to independence, the expectation factorizes over the categories for $\theta_{a}$ and $\theta_{b}$:
\begin{equation}
    \mathbb{E}_{p(\theta)} \Big[ p(x \mid \theta, \xi)^{\alpha} \Big] = \prod_{k \in {a, b}} \mathbb{E}_{p(\theta_{k})} \Big[ p(x_{k} \mid \theta_{k}, n_{k})^{\alpha} \Big].
\end{equation}
Focusing on a single group with parameters $(\theta_{k}, n_{k}, x_{k})$ where $k \in \{a, b\}$:
\begin{equation}
    p(x_{k} \mid \theta_{k}, n_{k})^{\alpha} = \binom{n_{k}}{x_{k}}^{\alpha} \theta_{k}^{\,\alpha x_{k}} \, (1-\theta_{k})^{\alpha(n_{k} - x_{k})}.
\end{equation}
The expectation with respect to the Beta prior is:
\begin{align*}
    \mathbb{E}_{p(\theta_{k})} \Big[ p(x_{k} \mid \theta_{k}, n_{k})^{\alpha} \Big] & = \int_{0}^{1} \frac{\theta_{k}^{\,\delta_{k} - 1}(1 - \theta_{k})^{\gamma_{k} - 1}}{B(\delta_{k}, \gamma_{k})} \binom{n_{k}}{x_{k}}^{\alpha} \theta_{k}^{\,\alpha x_{k}} (1 - \theta_{k})^{\alpha(n_{k} - x_{k})} \dif \theta_{k} \\
    & = \frac{1}{B(\delta_{k}, \gamma_{k})} \, \binom{n_{k}}{x_{k}}^{\alpha} \int_{0}^1 \theta_{k}^{\,\delta_{k} + \alpha x_{k} - 1} (1-\theta_{k})^{\gamma_{k} + \alpha(n_{k} - x_{k}) - 1} \dif \theta_{k} \\
    & = \frac{1}{B(\delta_{k}, \gamma_{k})} \, \binom{n_{k}}{x_{k}}^{\alpha} B(\delta_{k} + \alpha \, x_{k}, \gamma_{k} + \alpha(n_{k} - x_{k})).
\end{align*}
Let us define the auxiliary term $Z_{\alpha, k}(x_{k}, n_{k})$ as this expectation:
\begin{equation}
    Z_{\alpha, k}(x_{k}, n_{k}) \coloneq \frac{1}{B(\delta_{k}, \gamma_{k})} \, \binom{n_{k}}{x_{k}}^{\alpha} B(\delta_{k} + \alpha \, x_{k}, \gamma_{k} + \alpha(n_{k} - x_{k})).
\end{equation}
The $\alpha$-tilted marginal $p_{\alpha}(x \mid \xi)$ then takes the form:
\begin{equation}
    p_{\alpha}(x \mid \xi) \propto \Big[Z_{\alpha, a}(x_{a}, n_{a}) Z_{\alpha, b}(x_{b}, n_{b}) \Big]^{1/\alpha}.
\end{equation}
To normalize this distribution, we compute the partition function:
\begin{equation}
    \mathcal{Z}_{\alpha}(\xi) = \left[ \sum_{x_{a} = 0}^{n_{a}} Z_{\alpha, a}(x_{a}, n_{a})^{1/\alpha} \right] \left[ \sum_{x_{b}=0}^{n_{b}} Z_{\alpha, b}(x_{b}, n_{b})^{1/\alpha} \right].
\end{equation}
Consequently, the worst-case joint distribution $q^{\star}(\theta, x \mid \xi)$ implies a worst-case posterior $q^{\star}(\theta \mid x, \xi)$ which is the $\alpha$-tilted posterior. For the Beta-Binomial model, this results in a Beta distribution with updated parameters:
\begin{equation}
    q^{\star}(\theta_{k} \mid x_{k}, n_{k}) = \mathrm{Beta}(\theta_{k}; \delta_{k} + \alpha \, x_{k}, \gamma_{k} + \alpha(n_{k} - x_{k})).
\end{equation}
This confirms that under model misspecification, $\alpha < 1$, the experimenter updates beliefs less aggressively than in the standard Bayesian setting.

\textbf{Sibson's $\alpha$-MI.} Sibson's $\alpha$-mutual information can be written as:
\begin{equation}
    I_{\alpha}^{S}(\theta; x)(\xi) = \frac{\alpha}{\alpha - 1} \log \sum_{x} \Big[ \mathbb{E}_{p(\theta)} \big[ p(x \mid \theta, \xi)^{\alpha} \big] \Big]^{1/\alpha}.
\end{equation}
Substituting our derived terms, this becomes $I_{\alpha}^{S}(\xi) = \alpha / (\alpha - 1) \log \mathcal{Z}_{\alpha}(\xi)$. Due to the independence of the groups, the information gain is additive $I_{\alpha}^{S}(\xi) = I_{\alpha}^{S}(n_a) + I_{\alpha}^{S}(n_b)$, where the contribution of a single group with allocation $n_{k}$ is:
\begin{equation}
    I_{\alpha}^{S}(n_{k}) = \frac{\alpha}{\alpha - 1} \log \left\{ \sum_{x_{k}=0}^{n_{k}} \left[ \binom{n_{k}}{x_{k}}^{\alpha} \frac{B(\delta_{k} + \alpha \, x_{k}, \gamma_{k} + \alpha(n_{k} - x_{k}))}{B(\delta_{k}, \gamma_{k})} \right]^{1/\alpha} \right\}.
\end{equation}
As $\alpha \to 1$, we recover the standard expected information gain for the Beta-Binomial model, which corresponds to Shannon's mutual information. As $\alpha \to 0$, the robust information gain approaches zero.

\textbf{R\'enyi Divergence.} Similar to the Gaussian case, the R\'enyi divergence between two Beta distributions admits a closed-form expression, which allows for tractable computation of the realized robust information gain given a design and outcome. Consider two Beta distributions $q(\theta) = \mathrm{Beta}(\theta; \delta_{q}, \gamma_{q})$ and $p(\theta) = \mathrm{Beta}(\theta; \delta_{p}, \gamma_{p})$. The divergence of order $\alpha$ is:
\begin{equation}
    \mathbb{D}_{\alpha}[q \,\|\, p] = \frac{1}{\alpha - 1} \log \int_{0}^{1} q(\theta)^{\alpha} \, p(\theta)^{1 - \alpha} \, \dif \theta.
\end{equation}
Substituting the Beta density functions, the integrand becomes a product of power functions. This results in an unnormalized Beta density with a geometric mixture of the original parameters. Let $\delta_{\alpha}$ and $\gamma_{\alpha}$ denote the effective parameters of this mixture, defined by the linear interpolation of the natural parameters:
\begin{equation}
    \delta_{\alpha} = \alpha(\delta_{q} - 1) + (1 - \alpha)(\delta_{p} - 1) + 1, \quad \gamma_{\alpha} = \alpha(\gamma_{q} - 1) + (1 - \alpha)(\gamma_{p} - 1) + 1.
\end{equation}
The integral can then be evaluated analytically using the Beta function $B(\cdot)$:
\begin{equation}
    \int_{0}^{1} q(\theta)^{\alpha} \, p(\theta)^{1-\alpha} \, \dif \theta = \frac{B(\delta_{\alpha}, \gamma_{\alpha})}{B(\delta_{q}, \gamma_{q})^{\alpha} \, B(\delta_{p}, \gamma_{p})^{1-\alpha}}.
\end{equation}
Taking the logarithm and scaling by $1 / (\alpha - 1)$ yields the closed-form divergence:
\begin{equation}
    \mathbb{D}_{\alpha}[q \,\|\, p] = \frac{1}{\alpha - 1} \log B(\delta_{\alpha}, \gamma_{\alpha}) - \frac{\alpha}{\alpha - 1} \log B(\delta_{q}, \gamma_{q}) - \frac{1 - \alpha}{\alpha - 1} \log B(\delta_{p}, \gamma_{p}).
\end{equation}

\section[Connection to Barlas]{Connection to \citet{barlas2025robust}}
\label{app:compare-barlas}

{\color{black}
In this section, we contrast Sibson's $\alpha$-mutual information with the power-likelihood version of the \emph{Gibbs expected information gain} presented by \citet{barlas2025robust}. In our notation, the observed experimental outcome is denoted by $x$, the unknown parameter by $\theta$, and the design by $\xi$. For a power likelihood with exponent $\alpha>0$, the generalized likelihood is:
\begin{equation}
    L_\alpha(x; \theta, \xi) \coloneq p(x \mid \theta,\xi)^\alpha
\end{equation}
This quantity is not assumed to be a normalized density in $x$. Rather, it is the unnormalized likelihood factor used in the Gibbs posterior
\begin{equation}
    p_\alpha(\theta \mid x,\xi) = \frac{p(\theta) \, p(x\mid\theta,\xi)^\alpha}{\int p(\theta') \, p(x \mid \theta', \xi)^\alpha \dif \theta'},
\end{equation}
and the denominator is the marginal generalized likelihood
\begin{equation}
    m_\alpha(x\mid\xi) \coloneq \int p(\theta) \, p(x \mid \theta, \xi)^\alpha \dif \theta.
\end{equation}
The marginal $m_\alpha(x \mid \xi)$ need not integrate to one over $x$ either. Under the \emph{pseudo-information} construction of \citet{barlas2025robust}, the power-likelihood Gibbs expected information gain is
\begin{equation}
    I_\alpha^{G}(\xi) \coloneq \iint p(\theta) \, p(x \mid \theta, \xi)^\alpha
    \Big[
        \alpha \log p(x\mid\theta,\xi) - \log m_\alpha(x\mid\xi)
    \Big] \dif\theta\dif x.
\end{equation}
Thus the powered likelihood appears in three distinct places. It defines the unnormalized measure in the outer \emph{pseudo-expectation}, it appears inside the logarithmic numerator through $\alpha \log p(x\mid \theta,\xi)$, and it also appears inside the marginal generalized likelihood $m_\alpha(x\mid\xi)$.

The omission of the normalizing constants has direct consequences for the resulting information measure. To make this explicit, suppose we instead normalize the $\alpha$-powered likelihood over $x$ and define the $\alpha$-powered density
\begin{equation}
    f_\alpha(x \mid \theta,\xi) \coloneq \frac{p(x \mid \theta, \xi)^\alpha}{\int p(x \mid \theta, \xi)^\alpha \dif x} = \frac{p(x \mid \theta, \xi)^\alpha}{C_\alpha(\theta, \xi)}.
\end{equation}
This is not the same object as the generalized likelihood used in the Gibbs posterior, unless $C_\alpha(\theta, \xi)$ is a constant in $\theta$ and $\xi$. Also, rewriting the marginal generalized likelihood in terms of $f_\alpha$ gives
\begin{equation}
    m_\alpha(x \mid \xi)
    = \int p(\theta) \, C_\alpha(\theta,\xi) \, f_\alpha(x \mid \theta,\xi) \dif \theta.
\end{equation}
Therefore, the normalizer $C_\alpha(\theta,\xi)$ is not merely an irrelevant constant in general. If it depends on $\theta$, it changes the relative contribution of different parameter values to the marginal generalized likelihood. If it depends on $\xi$, it can also change the ranking of designs. Only in special cases where $C_\alpha(\theta, \xi)$ is independent of both $\theta$ and $\xi$ can it be dropped without affecting the posterior over $\theta$ or the design selected by maximizing the acquisition function.

\paragraph{Linear Regression.}
We examine the linear regression case detailed in Appendix~\ref{app:linear-regression}. In this case, the likelihood of an outcome is defined as:
\begin{equation}
    p(x_{1:N}\mid\theta,\xi_{1:N}) = \mathrm{N}\big(x_{1:N}; H(\xi_{1:N}) \, \theta, \sigma^{2}I_{N}\big)
\end{equation}
\begin{equation}
    p(x_{1:N} \mid \theta, \xi_{1:N})^{\alpha} = C_{\alpha} \,\mathrm{N}\big(x_{1:N};H(\xi_{1:N}) \, \theta, \frac{\sigma^{2}}{\alpha}I_{N}\big),
    \qquad
    C_{\alpha} = (2\pi\sigma^{2})^{N(1-\alpha)/2}\alpha^{-N/2}.
\end{equation}
Hence the marginal $m_{\alpha}(x_{1:N}\mid\xi_{1:N})=C_{\alpha} \, \mathrm{N}(x_{1:N}; H \, \mu_{0}, H \, \Sigma_{0}\, H^{\top} + \sigma^{2}I_{N}/\alpha)$ and
\begin{equation}
    I^{G}_{\alpha}(\xi_{1:N})
    = C_{\alpha} \, \frac{1}{2} \log \left|I_{N}+\frac{\alpha}{\sigma^{2}}H(\xi_{1:N}) \, \Sigma_{0} \, H(\xi_{1:N})^{\top}\right|
    = C_{\alpha} \, I_{\alpha}^{S}(\theta;x_{1:N})(\xi_{1:N}) .
\end{equation}
Thus, when $N$ and $\sigma^{2}$ are fixed, the power-likelihood generalized EIG from \citet{barlas2025robust} has the same maximizers as our closed-form Sibson objective for linear regression, but its value is scaled by the unnormalized generalized likelihood mass.

This is a special case where the normalizer $C_{\alpha}$ is independent of $\theta$ and $\xi$, so the omission of the normalizing constant does not affect the design selected by maximizing the acquisition function. However, this is not a general phenomenon, and in other settings the normalizer can have a significant impact on the resulting information measure and optimal design.

\paragraph{A/B Testing.}
For the A/B testing problem, detailed in Appendix~\ref{app:ab-testing}, the same simplification does not occur. The $\alpha$-powered Binomial likelihood has a normalizing constant that depends on the allocation, so the effect of using the unnormalized $\alpha$-powered likelihood cannot, in general, be reduced to a design-independent scale factor.

In this example, the likelihood of an outcome is given by the product of Binomial distributions:
\begin{equation}
    p(x \mid \theta, \xi) = \mathrm{Bin}(x_{a}; n_{a}, \theta_{a}) \times \mathrm{Bin}(x_{b}; n_{b}, \theta_{b}).
\end{equation}
For a single variant, define the normalizer of the $\alpha$-powered likelihood by
\begin{equation}
    C_{k,\alpha}(\theta_k, n_k) \coloneq \sum_{u=0}^{n_k} \mathrm{Bin}(u; n_k, \theta_k)^\alpha = \sum_{u=0}^{n_k} \binom{n_k}{u}^{\alpha} \theta_k^{\alpha u} \, (1-\theta_k)^{\alpha(n_k - u)}.
\end{equation}
For $\alpha=1$, this normalizer is equal to one. For $\alpha < 1$, it is generally a nonconstant function of both $\theta_k$ and $n_k$. Thus the normalized $\alpha$-powered conditional density would be
\begin{equation}
    f_{k,\alpha}(x_k\mid\theta_k,n_k) \coloneq \frac{\mathrm{Bin}(x_k; n_k, \theta_k)^\alpha}{C_{k,\alpha}(\theta_k,n_k)}.
\end{equation}
The joint powered likelihood can therefore be written as $p(x\mid \theta, \xi)^\alpha = C_\alpha(\theta, \xi) \, f_\alpha(x \mid \theta,\xi)$, where
\begin{equation}
    C_\alpha(\theta, \xi) \coloneq \prod_{k\in\{a, b\}} C_{k, \alpha}(\theta_k, n_k), 
    \qquad f_\alpha(x \mid \theta, \xi) \coloneq \prod_{k\in\{a, b\}} f_{k,\alpha}(x_k \mid \theta_k, n_k).
\end{equation}
Remember, the power-likelihood Gibbs EIG is written as
\begin{equation}
    I_\alpha^G(\xi)
    =
    \sum_x \int
    p(\theta) \,
    C_\alpha(\theta, \xi) \,
    f_\alpha(x \mid \theta, \xi)
    \Big[
        \log C_\alpha(\theta, \xi)
        + \log f_\alpha(x \mid \theta, \xi)
        - \log m_\alpha(x \mid \xi)
    \Big]
    \dif \theta.
\end{equation}
By contrast, Sibson's $\alpha$-mutual information for the same problem is
\begin{equation}
    I_\alpha^S(\theta; x)(n_a,n_b) = \frac{\alpha}{\alpha-1}
    \log \sum_{x_a=0}^{n_a} \sum_{x_b=0}^{n_b} m_\alpha(x\mid\xi)^{1/\alpha}.
\end{equation}
For $\alpha < 1$, these two objectives are not equivalent and can induce different optimal designs. We demonstrate this effect empirically across a geometric range of $\alpha$ values in Table~\ref{tab:sibson-vs-geig}.
\begin{table}[t]
    \centering
    \scriptsize
    \caption{Comparison of Sibson's and GEIG's optimal designs and RMSE on an A/B testing problem with $N_x = 100$. The two objectives lead to different design choices. We report the optimal allocation, mean RMSE, the $10$th and $90$th percentiles for each method. Evaluation over $10^4$ trials.}
    \label{tab:sibson-vs-geig}
    \setlength{\tabcolsep}{5pt}
    \begin{tabular}{l cc ccc ccc}
        \hiderowcolors
        \toprule
        & \multicolumn{2}{c}{Design} & \multicolumn{3}{c}{Sibson RMSE} & \multicolumn{3}{c}{GEIG RMSE} \\
        \cmidrule(lr){2-3} \cmidrule(lr){4-6} \cmidrule(lr){7-9}
        $\alpha$ & Sibson & GEIG & Mean & $P_{10}$ & $P_{90}$ & Mean & $P_{10}$ & $P_{90}$ \\
        \midrule \rowcolor{gray!30}
        0.011 &  0 & 30 & $\mathbf{0.120}$ & 0.034 & 0.208 & 0.132 & 0.036 & 0.231 \\
        0.022 &  0 & 33 & $\mathbf{0.098}$ & 0.030 & 0.167 & 0.112 & 0.033 & 0.193 \\ \rowcolor{gray!30}
        0.041 &  0 & 35 & $\mathbf{0.081}$ & 0.025 & 0.143 & 0.093 & 0.028 & 0.158 \\
        0.077 &  0 & 37 & $\mathbf{0.072}$ & 0.022 & 0.141 & 0.079 & 0.025 & 0.146 \\ \rowcolor{gray!30}
        0.147 &  0 & 39 & $\mathbf{0.071}$ & 0.020 & 0.144 & 0.076 & 0.024 & 0.145 \\
        0.278 & 18 & 42 & $\mathbf{0.075}$ & 0.022 & 0.150 & 0.083 & 0.028 & 0.154 \\ \rowcolor{gray!30}
        0.527 & 33 & 43 & $\mathbf{0.091}$ & 0.030 & 0.163 & 0.095 & 0.031 & 0.169 \\
        1.000 & 41 & 42 & 0.106 & 0.036 & 0.184 & 0.106 & 0.037 & 0.184 \\
        \bottomrule
    \end{tabular}
\end{table}
}

\end{document}